\documentclass{article}
\pdfoutput=1
\newif\iflong
\longfalse
\usepackage{microtype}
\usepackage{graphicx}
\usepackage{subfigure}
\usepackage{booktabs} 

\usepackage{hyperref}


\usepackage[accepted]{icml2025}


\usepackage{multirow}
\usepackage{amsmath}
\usepackage{amssymb}
\usepackage{mathtools}
\usepackage{amsthm}

\usepackage{commands}
\usepackage{pascal-def}
\usepackage{subcaption}

\usepackage[mathscr]{euscript}
\usepackage{microtype}
\usepackage{graphicx}
\usepackage{booktabs}
\usepackage{xcolor}
\usepackage{thmtools}
\usepackage{physics}
\usepackage[utf8]{inputenc} 
\usepackage[T1]{fontenc}    
\usepackage{hyperref}       
\usepackage{url}            
\usepackage{booktabs}       
\usepackage{amsfonts}       
\usepackage{nicefrac}       
\usepackage{microtype}      
\usepackage{xcolor}         
\usepackage{bbm}
\usepackage{algpseudocode}
\usepackage{thm-restate}

\usepackage[capitalize,noabbrev]{cleveref}
\def\equationautorefname~#1\null{Equation~(#1)\null}

\theoremstyle{plain}
\newtheorem{theorem}{Theorem}[section]

\newtheorem{corollary}[theorem]{Corollary}
\theoremstyle{definition}
\newtheorem{definition}[theorem]{Definition}

\theoremstyle{remark}

\usepackage[textsize=tiny]{todonotes}

\usepackage[subtle,mathspacing=normal]{savetrees}
\usepackage[font=small,skip=6pt]{caption}

\icmltitlerunning{Generalization Bounds via Meta-Learned Model Representations: PAC-Bayes and Sample Compression Hypernetworks}

\allowdisplaybreaks[4]
\begin{document}

\twocolumn[
\icmltitle{Generalization Bounds via Meta-Learned Model Representations:\\PAC-Bayes and Sample Compression Hypernetworks}



\icmlsetsymbol{equal}{*}

\begin{icmlauthorlist}
\icmlauthor{Benjamin Leblanc}{yyy}
\icmlauthor{Mathieu Bazinet}{yyy}
\icmlauthor{Nathaniel D'Amours}{yyy}
\icmlauthor{Alexandre Drouin}{yyy,sch}
\icmlauthor{Pascal Germain}{yyy}
\end{icmlauthorlist}

\icmlaffiliation{yyy}{Département d'informatique et de génie logiciel, Université Laval, Québec, Canada}
\icmlaffiliation{sch}{ServiceNow Research, Montréal, Canada}

\icmlcorrespondingauthor{Benjamin Leblanc}{benjamin.leblanc.2@ulaval.ca}

\icmlkeywords{Machine Learning, ICML}

\vskip 0.3in
]



\printAffiliationsAndNotice{}  

\begin{abstract}
Both PAC-Bayesian and Sample Compress learning frameworks have been shown instrumental for deriving tight (non-vacuous) generalization bounds for neural networks. We leverage these results in a meta-learning scheme, relying on a hypernetwork that outputs the parameters of a downstream predictor from a dataset input. The originality of our approach lies in the investigated hypernetwork architectures that encode the dataset before decoding the parameters: (1) a PAC-Bayesian encoder that expresses a posterior distribution over a latent space, (2) a Sample Compress encoder that selects a small sample of the dataset input along with a message from a discrete set, and (3) a hybrid between both approaches motivated by a new Sample Compress theorem handling continuous messages. The latter theorem exploits the pivotal information transiting at the encoder-decoder junction in order to compute generalization guarantees for each downstream predictor obtained by our meta-learning scheme.
\end{abstract}

\section{Introduction}

Machine learning is increasingly shaping our daily lives, a trend accelerated by rapid advancements in deep learning and the widespread deployment of these models across various applications (e.g., language models and AI agents). 
Ensuring the reliability of machine learning models is therefore more critical than ever. A fundamental aspect of reliability is understanding how well a model generalizes beyond its training data, particularly for modern neural networks, whose complexity makes it challenging to obtain strong theoretical guarantees.
One way to assess generalization is through probabilistic bounds on a model’s error rate. 
Applying these techniques to deep neural networks is challenging because classical approaches struggle to account for the true effective complexity of such models \cite{DBLP:conf/iclr/ZhangBHRV17}, which may not be well captured by naive measures such as parameter count \cite{doubledescent}. However, prior work suggests that if one can obtain a more compact yet expressive representation of a model's complexity, tighter generalization bounds are possible \cite{DBLP:conf/nips/DziugaiteDNRCWM20,DBLP:conf/iclr/0008HKSC022, DBLP:conf/icml/KawaguchiDJH23}, even when it comes to models as big as Large Langage Models \cite{DBLP:conf/icml/LotfiFKRGW24, DBLP:conf/nips/LotfiKFAGW24}.

\begin{figure}[t]
    \centering
    \includegraphics[width=\linewidth]{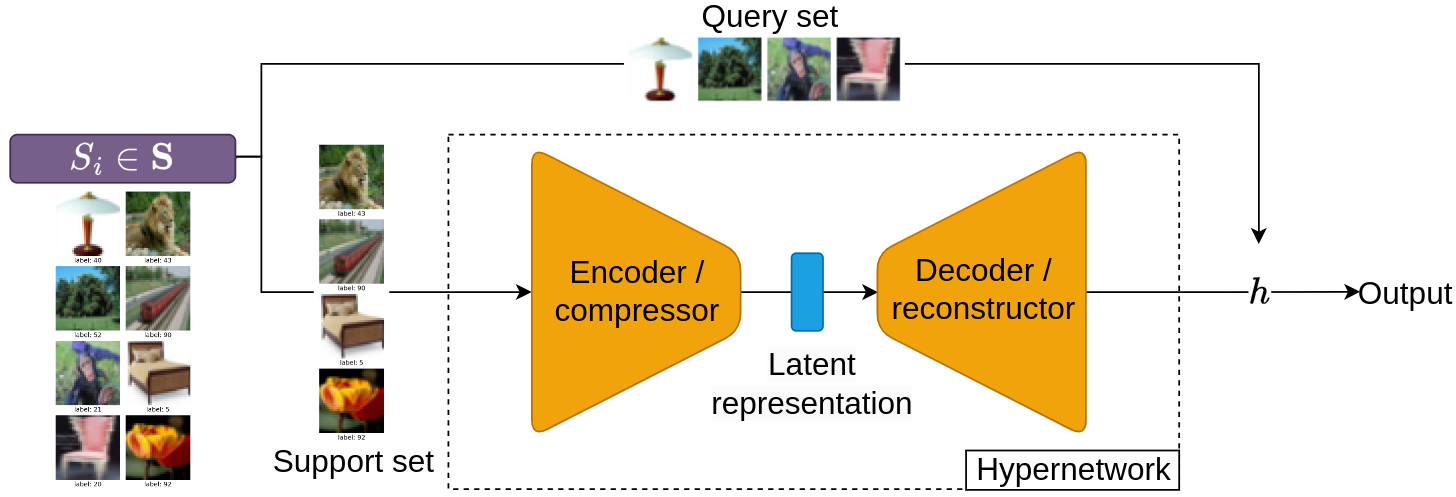}
    \caption{Overview of our meta-learning framework.}
    \label{fig:deeprm}
\end{figure}
In this work, we investigate a meta-learning framework for obtaining such representations and leverage them to establish tight generalization bounds. Our approach is based on a hypernetwork architecture, trained by meta-learning, and composed of two components: an encoder that projects a set of training examples into an explicit information bottleneck \citep{DBLP:conf/itw/TishbyZ15} and a decoder that generates a \emph{downstream predictor} based on this bottleneck.
We demonstrate that the complexity of this information bottleneck provides an effective measure of the downstream predictor complexity by computing generalization guarantees based on that complexity. 
Conceptually, our hypernetwork is akin to a learning algorithm that explicitly exposes the complexity of the models it produces.
We then show how our approach can be used to obtain generalization guarantees in three theoretical frameworks: PAC-Bayesian analysis, sample compression theory, and a new hybrid approach that integrates both perspectives, each of which motivates a specific architecture for the bottleneck.



We begin by introducing the theoretical frameworks used to obtain generalization bounds (\cref{section:theory}), including a new PAC-Bayes Sample Compression framework, which we propose in \cref{sec:pbcs-theory}. 
We then describe our meta-learning approach for training hypernetworks and how it is adapted to each theoretical framework~(\cref{sec:metalearning}). 
Finally, we present an empirical study evaluating the quality of the obtained bounds and models on both simulated and real-world datasets~(\cref{sec:exp}). 
Our results demonstrate that our approach effectively learns accurate neural network-based classifiers. 
We also show that the information bottleneck reliably serves as a proxy for model complexity, enabling the derivation of tight generalization guarantees.

\PGparagraph{Related works.}
The meta-learning framework was pioneered by \citet{DBLP:journals/jair/Baxter00}, where a learning problem encompasses multiple \emph{tasks} under the assumption that all learning tasks are independently and identically distributed (\emph{i.i.d.}) from a task environment distribution. Various standpoints have been considered to derive generalization guarantees in this setting, for example the VC-theory \cite{DBLP:journals/ml/Maurer09, DBLP:journals/jmlr/MaurerPR16} and algorithmic stability \cite{DBLP:journals/jmlr/Maurer05, DBLP:conf/nips/ChenWLLZC20}. In recent years, the PAC-Bayesian framework has been the foundation of a rich line of work: \citet{DBLP:conf/icml/PentinaL14, 
DBLP:conf/icml/AmitM18, 
DBLP:conf/icml/RothfussFJ021,
DBLP:journals/corr/abs-2102-03748,
DBLP:conf/iclr/Guan,
DBLP:conf/icml/Guan022,
DBLP:conf/icml/000122,
DBLP:conf/icml/ZakeriniaBL24}.
Up to our knowledge, 
using the sample compression framework~\citep{Littlestone2003RelatingDC} to derive generalization bounds for meta-learned predictors is an idea that has not been explored yet.

\section{Learning Theory and Generalization Bounds}
\label{section:theory}

\PGparagraph{The prediction problem.}
A dataset $S = \{(\xbf_j, y_j)\}_{j=1}^{m}$ is a collection of $m$ examples, each of them being a feature-target pair $(\mathbf{x}, y) \in \mathcal{X}\times\mathcal{Y}$, and a predictor is a function $h : \mathcal{X} \rightarrow \mathcal{Y}$. We denote~$\calH$ the predictor space. Let $A$ be a learning algorithm \mbox{$A : \bigcup_{k \in \mathbb{N}}(\calX \times \calY)^k \to \calH$} that outputs a predictor $A(S) \in \calH$.
Given a predictor $h$ and a
loss function $\ell : \mathcal{Y} \times \mathcal{Y} \rightarrow [0,1]$, the empirical loss of the predictor over a set of $m$ \emph{i.i.d.} examples is \mbox{$\widehat{\mathcal{L}}_{S}(h) = \frac{1}{m}\sum_{j=1}^{m} \ell(h(\mathbf{x}_{j}), y_{j})$}. We denote~$\calD$ the data-generating distribution over $\calX \times \calY$ such that $S \sim \calD^m$ and the generalization loss of a predictor~$h$ is \mbox{$\mathcal{L}_{\mathcal{D}}(h) = \mathbb{E}_{(\mathbf{x}, y)\sim\mathcal{D}}\left[\ell(h(\mathbf{x}), y)\right]$}.

\subsection{PAC-Bayesian Learning Framework}

The PAC-Bayes theory, initiated by \citet{mcallester1998some,DBLP:journals/ml/McAllester03} and enriched by many (see \citet{DBLP:journals/ftml/Alquier24} for a recent survey), has become a prominent framework for obtaining non-vacuous generalization guarantees on neural network since the seminal work of \citet{DBLP:conf/uai/DziugaiteR17}.
As a defining characteristic of PAC-Bayes bounds, they rely on  prior $P$ and posterior $Q$ distributions over the predictor space $\Hcal$. Hence, most PAC-Bayes results are expressed as upper bounds on the $Q$-expected loss of the predictor space, thus providing guarantees on a stochastic predictor.

\PGparagraph{Notable theoretical results.} 
\citet{risk_bounds} expresses a general PAC-Bayesian formulation that encompasses many specific results previously stated in the literature, by resorting on a comparator function 
$\Delta : \calY \times \calY \to [0,1]$, used to assess the discrepancy between the expected empirical loss $\E_{h \sim Q} \hatL_{S}(h)$ and the expected generalization loss $\E_{h \sim Q} \calL_{\calD}(h)$. The theorem states that this discrepancy should not exceed a complexity term that notably relies on the Kullback-Leibler divergence  $\KL(Q\|P) = \E_{h \sim Q} \ln \frac{Q(h)}{P(h)}$.

\begin{theorem}[General PAC-Bayesian theorem \cite{risk_bounds}]\label{thm:general_pac_bayes}
For any distribution $\calD$ on $\calX \times \calY$, for any set $\calH$ of predictors $h : \calX \to \calY$, for any loss $\ell : \calY \times \calY \to [0,1]$, for any prior distribution $P$ over $\calH$, for any $\delta \in (0,1]$ and for any convex function $\Delta: [0,1]\times[0,1] \to \R$, with probability at least $1-\delta$ over the draw of $S \sim \calD^m$, we have
\begin{align*}
    \forall Q \text{ over } \calH~:~&
\Delta\Big(\E_{h \sim Q} \hatL_{S}(h), \E_{h \sim Q} \calL_{\calD}(h)\Big) \\ 
    &\leq \tfrac{1}{m}\qty[\KL(Q||P) + \ln\qty(\tfrac{\calI_{\Delta}(m)}{\delta})]
\end{align*}
with 
    $\calI_{\Delta}(m) = \E_{S \sim \calD^m}\E_{h \sim P} e^{m\Delta\qty(\hatL_{S}(h), \calL_{\calD}(h))}.$
\end{theorem}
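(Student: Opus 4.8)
The plan is to derive this as a corollary of the Donsker–Varadhan (or change-of-measure) inequality combined with Markov's inequality. First I would introduce the random variable $X_h := m\Delta\big(\hatL_S(h),\calL_\calD(h)\big)$ over the draw of $S\sim\calD^m$, for a fixed $h\in\calH$, and consider its exponential moment. The core identity is the change-of-measure lemma: for any measurable function $f:\calH\to\R$ and any posterior $Q$ absolutely continuous with respect to the prior $P$,
\begin{align*}
\E_{h\sim Q} f(h) \;\leq\; \KL(Q\|P) + \ln \E_{h\sim P} e^{f(h)}.
\end{align*}
Applying this with $f(h) = X_h = m\Delta\big(\hatL_S(h),\calL_\calD(h)\big)$ gives, for every $Q$ simultaneously (the key point: the inequality holds for all $Q$ once $S$ is fixed, since $P$ does not depend on $S$),
\begin{align*}
m\,\E_{h\sim Q}\Delta\big(\hatL_S(h),\calL_\calD(h)\big) \;\leq\; \KL(Q\|P) + \ln \E_{h\sim P} e^{m\Delta(\hatL_S(h),\calL_\calD(h))}.
\end{align*}

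Next I would control the last term in probability over $S$. Define $W(S) := \E_{h\sim P} e^{m\Delta(\hatL_S(h),\calL_\calD(h))}$, so that $\E_{S\sim\calD^m} W(S) = \calI_\Delta(m)$ by Fubini (the integrand is nonnegative, so swapping $\E_S$ and $\E_{h\sim P}$ is justified, and crucially $P$ is chosen before seeing $S$). By Markov's inequality, $\Pr_{S\sim\calD^m}\big(W(S) > \tfrac{1}{\delta}\calI_\Delta(m)\big) \leq \delta$, hence with probability at least $1-\delta$ over $S$ we have $\ln W(S) \leq \ln\big(\calI_\Delta(m)/\delta\big)$. On that event, substituting into the change-of-measure bound and dividing by $m$ yields
\begin{align*}
\E_{h\sim Q}\Delta\big(\hatL_S(h),\calL_\calD(h)\big) \;\leq\; \tfrac{1}{m}\Big[\KL(Q\|P) + \ln\tfrac{\calI_\Delta(m)}{\delta}\Big]
\end{align*}
for all $Q$. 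Finally, Jensen's inequality applied to the convex function $\Delta$ gives $\Delta\big(\E_{h\sim Q}\hatL_S(h), \E_{h\sim Q}\calL_\calD(h)\big) \leq \E_{h\sim Q}\Delta\big(\hatL_S(h),\calL_\calD(h)\big)$, which chains with the previous display to complete the proof.

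The main obstacle — really the only subtle point — is getting the quantifier order right: the statement asserts that a single high-probability event over $S$ suffices for \emph{all} posteriors $Q$ simultaneously. This works precisely because the change-of-measure step isolates all the $Q$-dependence into the $\KL(Q\|P)$ term, while the random quantity $W(S)$ that we tame with Markov depends only on the fixed prior $P$. I would make sure to state the change-of-measure lemma cleanly (or cite it) and to emphasize that $P$ must be data-independent so that $\E_S \E_{h\sim P} = \E_{h\sim P}\E_S$ and so that $\calI_\Delta(m)$ is a genuine constant available before the draw of $S$. Everything else (Markov, Jensen, dividing by $m$) is routine.
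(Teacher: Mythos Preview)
Your proposal is correct and follows essentially the same approach as the paper (which cites \cite{risk_bounds} for this result and reproduces the argument in the proof of \autoref{thm:pbsc_2}): Jensen's inequality for the convex $\Delta$, the change-of-measure (Donsker--Varadhan) inequality to introduce $\KL(Q\|P)$, and Markov's inequality on the prior-averaged exponential moment to obtain the high-probability statement uniformly over all posteriors $Q$. The only cosmetic difference is that the paper applies Jensen before the change of measure whereas you apply it afterward; the two orderings are equivalent here.
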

A common choice of comparator function is the Kullback-Leibler divergence between two Bernoulli distributions of probability of success $q$ and $p$,
\begin{equation}
\label{eq:smallkl}
   \kl(q, p) = q\cdot\ln \frac{q}{p} + (1-q)\cdot\ln\frac{1-q}{1-p}\,,
\end{equation}
for which $\calI_{\kl}(m) \leq 2\sqrt{m}$.

To avoid relying on a stochastic predictor,  \emph{disintegrated} PAC-Bayes theorems have been proposed to bound the loss of a single deterministic predictor, as the one of \citet{viallard2024general} reported in the appendix (\autoref{thm:paul}). They allow the study of a single hypothesis (drawn once) from the distribution $Q$.

\subsection{Sample Compression Learning Framework}
\label{section:SC}

Initiated by \citet{Littlestone2003RelatingDC} and refined by many authors over time
\cite{
hanneke_agnostic_2018,
bazinet2024,
campi2023compression,
david_supervised_2016, 
hanneke_stable_2021, 
hanneke_sample_2019, 
hanneke2024list, 
DBLP:conf/ecml/LavioletteMS05,
marchand2002set,  
DBLP:journals/jmlr/MarchandS05,
moran_sample_2015,
DBLP:journals/jmlr/RubinsteinR12, 
DBLP:conf/icml/Shah07}, 
the sample compression theory expresses generalization bounds on predictors that rely only on a small subset of the training set, referred to as the \emph{compression set}, valid even if the learning algorithm observes the entire training dataset. 
The sample compression theorems thus express the generalization ability of predictive models as an accuracy-complexity trade-off, measured respectively by the training loss and the size of the compressed representation.


\PGparagraph{The reconstruction function.}Once a predictor $h$ is learned from a dataset $S$, \emph{i.e.} $h=A(S)$, one can obtain an upper bound on $\mathcal{L}_{\mathcal{D}}(h)$ thanks to the sample compression theory whenever it is possible to \emph{reconstruct} the predictor $h$ from a compression set (that is, a subset of~$S$) and an optional message (chosen from a predetermined \emph{discrete message} set $\Omega$) with a reconstruction function \mbox{$\scriptR : \bigcup_{k \in \mathbb{N}}(\calX \times \calY)^k \times \Omega \to \calH$}. Thus, a sample compress predictor can be written 
\begin{equation}\label{eq:h=R}
h\ = \ \scriptR(S_{\jbf},\omega)\,, 
\end{equation}
with $\bfj \subset \{i\}_{i=1}^m$ being the indexes of the training samples belonging to the compression set $S_{\jbf}{\,=\,} \{(\mathbf{x}_j, y_j)\}_{j\in\jbf}$, and $\omega{\,\in\,} \Omega$ being the message. 
In the following, we denote the set of all training indices $\mbf{=} \{i\}_{i=1}^m$, and $\mathscr{P}(\mbf)$ its powerset; for compression set indices $\jbf \in \mathscr{P}(\mbf)$, the complement is  $\overline{\bfj}=\mbf\setminus\jbf$.

\PGparagraph{An example: the Support Vector Machine (SVM).} Consider $A$ to be the SVM building algorithm, and~$S$ a dataset with $\mathcal{Y}=\{-1,+1\}$. An easy way of reconstructing $A(S)$ is by choosing the compression set to be $S_{\mathbf{j}}$, the $m$ support vectors of $A(S)$, and having the reconstruction function to be the linear SVM algorithm $R = A$. Thus, we know that $R(S_{\mathbf{j}}) = A(S_{\mathbf{j}}) = A(S)$, without having to use any message.

\PGparagraph{Another example: the example-aligned decision stump.}\\
Given a dataset $S$ with $\mathcal{Y}=\{-1,+1\}$, the example-aligned decision stump (weak) learning algorithm $A$ returns a predictor $A(S) = h_{\mathbf{x}', \upsilon, k}$ such that $h_{\mathbf{x}', \upsilon, k}(\mathbf{x}) = 1$ if
$\upsilon (x_{k} - x'_{k}) \leq 0$ or $-1$ otherwise,
for some datapoint $\mathbf{x}' = (x_1',\dots,x_d') \in S$, direction $\upsilon \in \{-1,+1\}$ and index $k\in\{1,\dots,d\}$. We can fully reconstruct the stump with $R(S_{\mathbf{j}},\upsilon) = h_{\mathbf{x}', \upsilon, k}$, where $S_{\mathbf{j}} = \{(\mathbf{x}', y')\}$ is the compression set and $\omega=(\upsilon, k)$ is the message.


\PGparagraph{Notable theoretical results.} 
Theorem~\ref{theo:sample_compression_test} below, due to \citet{DBLP:conf/ecml/LavioletteMS05}, improves the bound developed for their Set Covering Machine (SCM) algorithm \cite{DBLP:conf/icml/MarchandS01, marchand2002set, DBLP:journals/jmlr/MarchandS05}. It is premised on two data-independent distributions: $P_{\mathscr{P}(\mbf)}$ on the compression set indices $\mathscr{P}(\mbf)$, and $P_\Omega$ on a discrete set of messages~$\Omega$.  Noteworthy, the bound is valid solely for binary-valued losses, as it considers each "successful" and "unsuccessful" prediction to be the result of a Bernoulli distribution.
%
\begin{theorem}[Sample compression - binary loss 
\cite{DBLP:conf/ecml/LavioletteMS05}] \label{theo:sample_compression_test}
For any distribution $\calD$ over $\calX \times \calY$, for any set $J \subseteq \mathscr{P}(\mathbf{m})$, for any distribution $P_{J}$ over $J$, for any $P_{\Omega}$ over~$\Omega$, for any reconstruction function $\scriptR$, for any binary loss $\ell: \calY \times \calY \to \{0,1\}$ and for any $\delta \in (0,1]$, with probability at least $1-\delta$ over the draw of $S \sim \calD^m$, we have
\begin{align*}
    &\forall\mathbf{j}\in J, \omega\in \Omega~:~\mathcal{L}_\Dcal\big(\scriptR(S_{\mathbf{j}},\omega)\big)\leq \\[-1mm]
    %
    &\argsup_{r\in[0,1]}
    \left\lbrace\sum_{k=1}^{K} \binom{|\overline{\bfj}|}{k}r^k(1-r)^{|\overline{\bfj}|-k}\geq P_{J}(\bfj)\, P_{\Omega}(\omega)\,\delta 
  \right\rbrace,
  \end{align*}
  with $K = |\overline{\bfj}|\widehat{\mathcal{L}}_{S_{\overline{\bfj}}}\big(\scriptR(S_{\mathbf{j}},\omega)\big)\,.$
\end{theorem}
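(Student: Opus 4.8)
The plan is to run the classical sample-compression argument: fix a candidate compression pattern, condition on the examples it selects, reduce the remaining examples to an i.i.d.\ Bernoulli sequence, invert a Binomial tail, and finish with a union bound over all admissible compression sets and messages. The structural fact that makes this work is already built into the statement through the signature of $\scriptR$: the predictor $\scriptR(S_{\mathbf{j}},\omega)$ depends on the random draw $S\sim\calD^m$ \emph{only} through the examples indexed by $\mathbf{j}$. This is exactly what lets the complement examples $S_{\overline{\mathbf{j}}}$ act as a held-out sample even though the compression set is itself chosen after seeing all of $S$, and it is also the reason the guarantee must be made uniform over $\mathbf{j}\in J$ and $\omega\in\Omega$ rather than proved for a single pair.

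First I would fix an arbitrary index set $\mathbf{j}\in J$ and message $\omega\in\Omega$, write $n:=|\overline{\mathbf{j}}|$, and condition on the sub-sample $S_{\mathbf{j}}$. Since $S$ is i.i.d.\ and $\mathbf{j}$ is a fixed set of coordinates, conditionally on $S_{\mathbf{j}}$ the examples $S_{\overline{\mathbf{j}}}$ remain i.i.d.\ from $\calD$ while the function $h:=\scriptR(S_{\mathbf{j}},\omega)$ is deterministic. Because $\ell$ is $\{0,1\}$-valued, each $\ell(h(\mathbf{x}_i),y_i)$ with $i\in\overline{\mathbf{j}}$ is a Bernoulli variable of mean $\mathcal{L}_\Dcal(h)$, so the error count $K=n\,\widehat{\mathcal{L}}_{S_{\overline{\mathbf{j}}}}(h)=\sum_{i\in\overline{\mathbf{j}}}\ell(h(\mathbf{x}_i),y_i)$ is, conditionally on $S_{\mathbf{j}}$, distributed as $\mathrm{Bin}\!\big(n,\mathcal{L}_\Dcal(h)\big)$.

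The next step is a one-sided Binomial-tail inversion (a Clopper--Pearson-type argument). Write $\eta:=P_J(\mathbf{j})\,P_\Omega(\omega)\,\delta$ for the quantity appearing in the claimed bound, and let $\bar\varepsilon(k)$ denote that bound with the error count $K$ replaced by a generic value $k$ (the $\argsup_{r\in[0,1]}$ in the statement). I would verify that $k\mapsto\bar\varepsilon(k)$ is non-decreasing, so that the event $\{\mathcal{L}_\Dcal(h)>\bar\varepsilon(K)\}$ can occur only when $K$ is at or below the threshold $k_0:=\max\{k:\bar\varepsilon(k)<\mathcal{L}_\Dcal(h)\}$. By the defining property of $\bar\varepsilon$ and the monotonicity of the Binomial lower-tail probability in its success parameter, $\Pr[K\le k_0]\le\eta$, and hence $\Pr\big[\mathcal{L}_\Dcal(h)>\bar\varepsilon(K)\,\big|\,S_{\mathbf{j}}\big]\le\eta$. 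I expect this inversion to be the main obstacle: one has to line up the exact summation range and boundary conventions of the displayed $\argsup$ with a genuine upper confidence limit, and to dispose separately of the degenerate cases $\overline{\mathbf{j}}=\emptyset$, $K=0$, and $\mathcal{L}_\Dcal(h)\in\{0,1\}$.

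Finally, since the conditional inequality holds for every realization of $S_{\mathbf{j}}$, the unconditional failure event $\mathrm{Fail}_{\mathbf{j},\omega}:=\big\{\mathcal{L}_\Dcal\big(\scriptR(S_{\mathbf{j}},\omega)\big)>\bar\varepsilon\big(n\,\widehat{\mathcal{L}}_{S_{\overline{\mathbf{j}}}}(\scriptR(S_{\mathbf{j}},\omega))\big)\big\}$ satisfies $\Pr_{S\sim\calD^m}[\mathrm{Fail}_{\mathbf{j},\omega}]\le P_J(\mathbf{j})\,P_\Omega(\omega)\,\delta$. A union bound over all $\mathbf{j}\in J$ and $\omega\in\Omega$ then gives
\[
\Pr_{S\sim\calD^m}\Big[\exists\,(\mathbf{j},\omega)\in J\times\Omega:\ \mathrm{Fail}_{\mathbf{j},\omega}\Big] \le \delta\sum_{\mathbf{j}\in J}P_J(\mathbf{j})\sum_{\omega\in\Omega}P_\Omega(\omega) = \delta\,,
\]
because $P_J$ and $P_\Omega$ are probability distributions; complementing this event is exactly the statement of the theorem.
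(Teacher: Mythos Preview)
The paper does not prove this theorem; it is quoted as background from \cite{DBLP:conf/ecml/LavioletteMS05} and no proof appears anywhere in the main text or appendix. Your proposal is the standard sample-compression argument from that reference: fix $(\mathbf{j},\omega)$, condition on $S_{\mathbf{j}}$ so that $\scriptR(S_{\mathbf{j}},\omega)$ is deterministic while $S_{\overline{\mathbf{j}}}$ remains i.i.d., invert the Binomial tail, then take a weighted union bound with weights $P_J(\mathbf{j})P_\Omega(\omega)$. This is correct and is exactly the approach of the cited source, so there is nothing to compare against within the present paper.
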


\autoref{theo:sample_compression_test} is limited in its scope, for many tasks involve non-binary losses (e.g.\ regression tasks, cost-sensitive classification). The following recent result, due to \citet{bazinet2024}, permits real-valued losses $\ell : \calY \times \calY \to \R$. Given a \emph{comparator function} $\Delta$ analogous to the one classically used in PAC-Bayesian theorems,  \autoref{theo:sample_compression_dsc} bounds the discrepancy between the empirical loss of the reconstructed hypothesis $\scriptR(S_{\mathbf{j}},\omega)$ on the complement set~$S_{\overline{\bfj}}$ and its generalization loss on the data distribution $\Dcal$.

\begin{theorem}[Sample compression - real-valued losses 
\cite{bazinet2024}] \label{theo:sample_compression_dsc}
For any distribution $\calD$ over $\calX \times \calY$, for any set $J \subseteq \mathscr{P}(\mathbf{m})$, for any distribution $P_{J}$ over $J$, for any  distribution $P_{\Omega}$ over $\Omega$, for any reconstruction function $\scriptR$, for any loss $\ell: \calY \times \calY \to \R$, for any function $\Delta : \R\times \R \to \R$ and for any $\delta \in (0,1]$, with probability at least $1-\delta$ over the draw of $S \sim \calD^m$, we have
\iflong
\begin{align*}
    &\forall\mathbf{j}\in J,\omega\in \Omega:\Delta\qty(\widehat{\mathcal{L}}_{S_{\overline{\bfj}}}(\scriptR(S_{\mathbf{j}},\omega)), \mathcal{L}_\Dcal(\scriptR(S_{\mathbf{j}},\omega)))\leq \frac{1}{m-|\mathbf{j}|}\qty[\ln\qty(\frac{\Jcal_{\Delta}(\bfj, \omega)}{P_{J}(\bfj)\cdot P_{\Omega}(\omega)\cdot\delta})],
 \end{align*}
 \else
\begin{align*}
    \forall\mathbf{j}\in J,\omega\in \Omega~:~&\Delta\qty(\widehat{\mathcal{L}}_{S_{\overline{\bfj}}}(\scriptR(S_{\mathbf{j}},\omega)), \mathcal{L}_\Dcal(\scriptR(S_{\mathbf{j}},\omega)))\\
&\leq \frac{1}{m-|\mathbf{j}|}\qty[\ln\qty(\frac{\Jcal_{\Delta}(\bfj, \omega)}{P_{J}(\bfj)\cdot P_{\Omega}(\omega)\cdot\delta})],
 \end{align*}
 \fi
 with 
\begin{equation*}
    \Jcal_{\Delta}(\bfj, \omega) =\!\!\!\! \E_{T_{\bfj} \sim \calD^{\m}} \E_{T_{\overline{\bfj}} \sim \calD^{m-|\overline{\bfj}|}} \!\!\!e^{|\overline{\bfj}|\Delta\qty(\hatL_{T_{\overline{\bfj}}}(\scriptR(T_{\bfj}, \omega)), \calL_{\calD}(\scriptR(T_{\bfj}, \omega)))}.
\end{equation*}
\end{theorem}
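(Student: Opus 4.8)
The plan is to combine the classical sample-compression decomposition of $S$ into a compression set $S_{\bfj}$ and its complement $S_{\overline{\bfj}}$ with a PAC-Bayes-style change of measure, followed by Markov's inequality. Note that, unlike in the PAC-Bayesian setting of \autoref{thm:general_pac_bayes}, no convexity of $\Delta$ is needed here, since a single reconstructed predictor is involved rather than a $Q$-average. Fix once and for all $\delta$, the set $J$, the data-independent priors $P_J$ and $P_\Omega$, and the reconstruction function $\scriptR$.

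The first step is an independence identity. Because $S \sim \calD^m$ has i.i.d.\ coordinates, for every fixed $\bfj \in J$ the subsample $S_{\bfj}$ and its complement $S_{\overline{\bfj}}$ are independent, with $S_{\bfj} \sim \calD^{|\bfj|}$ and $S_{\overline{\bfj}} \sim \calD^{|\overline{\bfj}|}$. Hence, conditionally on $S_{\bfj}$, the predictor $h_{\bfj,\omega} := \scriptR(S_{\bfj},\omega)$ is fixed and $S_{\overline{\bfj}}$ is an untouched i.i.d.\ test sample for it, so $\hatL_{S_{\overline{\bfj}}}(h_{\bfj,\omega})$ has conditional expectation $\calL_{\calD}(h_{\bfj,\omega})$. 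Splitting $\E_{S \sim \calD^m} = \E_{S_{\bfj}}\E_{S_{\overline{\bfj}}}$ and renaming these draws as the ghost samples $T_{\bfj}, T_{\overline{\bfj}}$, one sees that the quantity appearing in the statement satisfies
\[
\Jcal_\Delta(\bfj,\omega) \;=\; \E_{S \sim \calD^m}\Big[\, e^{|\overline{\bfj}|\,\Delta(\hatL_{S_{\overline{\bfj}}}(h_{\bfj,\omega}),\, \calL_{\calD}(h_{\bfj,\omega}))} \,\Big]\,.
\]
We may assume this is finite for every $(\bfj,\omega)$, as otherwise the corresponding instance of the bound is vacuous.

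The second step is the change of measure. Introduce the nonnegative random variable
\[
W(S) \;=\; \sum_{\bfj \in J}\,\sum_{\omega \in \Omega}\; P_J(\bfj)\,P_\Omega(\omega)\,\frac{e^{|\overline{\bfj}|\,\Delta(\hatL_{S_{\overline{\bfj}}}(h_{\bfj,\omega}),\, \calL_{\calD}(h_{\bfj,\omega}))}}{\Jcal_\Delta(\bfj,\omega)}\,.
\]
By Tonelli's theorem (all summands are nonnegative, $J$ is finite and $\Omega$ countable) we may push $\E_{S \sim \calD^m}$ inside both sums, and the first step makes every inner expectation equal to $\Jcal_\Delta(\bfj,\omega)$, so $\E_{S}[W(S)] = \sum_{\bfj}\sum_{\omega} P_J(\bfj)P_\Omega(\omega) = 1$. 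Markov's inequality then gives $\Pr_{S \sim \calD^m}[\,W(S) \ge 1/\delta\,] \le \delta$; equivalently, with probability at least $1-\delta$ over $S$ we have $W(S) \le 1/\delta$. On that event, since every summand of $W(S)$ is nonnegative, each is individually bounded by $1/\delta$, i.e.\ for all $\bfj \in J$ and $\omega \in \Omega$,
\[
P_J(\bfj)\,P_\Omega(\omega)\,\frac{e^{|\overline{\bfj}|\,\Delta(\hatL_{S_{\overline{\bfj}}}(h_{\bfj,\omega}),\, \calL_{\calD}(h_{\bfj,\omega}))}}{\Jcal_\Delta(\bfj,\omega)} \;\le\; \frac{1}{\delta}\,.
\]
Taking logarithms, rearranging, and dividing by $|\overline{\bfj}| = m - |\bfj|$ delivers exactly the claimed inequality.

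The only genuinely delicate point is the independence identity of the first step: it hinges on $\scriptR(S_{\bfj},\omega)$ depending on $S$ solely through the compression coordinates indexed by $\bfj$, so that $S_{\overline{\bfj}}$ really serves as a fresh test set and the normalizing constant is exactly the exponential-moment $\Jcal_\Delta(\bfj,\omega)$ that forces $\E_S[W(S)] = 1$. Everything afterwards is routine Markov-plus-union-bound bookkeeping, the prior-weighted sum being precisely the device that turns the elementary per-$(\bfj,\omega)$ bound into one holding simultaneously over all $\bfj \in J$ and $\omega \in \Omega$ at a single confidence level.
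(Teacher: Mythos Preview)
Your proof is correct. The paper does not actually give its own proof of \autoref{theo:sample_compression_dsc}; it is quoted from \citet{bazinet2024}. The closest in-paper argument is the proof of the more general \autoref{thm:pbsc_2}, which proceeds via Jensen's inequality on a convex $\Delta$, the Donsker--Varadhan change-of-measure inequality, and then Markov, before using the i.i.d.\ structure to split $T$ into $T_{\bfj}$ and $T_{\overline{\bfj}}$.

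Your route is the more classical sample-compression argument: build the prior-weighted nonnegative random variable $W(S)$ with unit expectation, apply Markov once, and extract each $(\bfj,\omega)$ term from the sum by nonnegativity. This is more direct for the discrete-message setting and, as you observe, does not require $\Delta$ to be convex. It also naturally gives the per-$\bfj$ denominator $m-|\bfj|$ rather than the worst-case $m-\max_{\bfj\in J}|\bfj|$ that appears when one specializes \autoref{thm:pbsc_2} to Dirac posteriors --- exactly the gap the paper itself points out when comparing the two theorems. Conversely, the PAC-Bayes route of \autoref{thm:pbsc_2} is what allows the extension to continuous messages and genuine posteriors $Q_\Omega$, which your union-bound-style argument cannot reach.
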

In order to compute a numerical bound on the generalization loss $\mathcal{L}_\Dcal(\scriptR(S_{\mathbf{j}},\omega))$, one must commit to a choice of~$\Delta$. 
See \autoref{supp:cor_dsc} for corollaries involving specific choices of comparator function. In particular, the choice 
of comparator function $\kl(q, p)$ of Equation~\eqref{eq:smallkl}
leads to 
$ \Jcal_{\kl}(\bfj, \omega) \leq 2\sqrt{m-\m}$ (see Corollary~\ref{cor:6ofBazinet2024}).






\subsection{A New PAC-Bayes Sample Compression Framework}\label{sec:pbcs-theory}

Our first contribution lies in the extension of \autoref{theo:sample_compression_dsc} to \emph{real-valued messages}. 
This is achieved by using a strategy from the PAC-Bayesian theory: we consider a data-independent prior distribution over the messages $\Omega$, denoted~$P_\Omega$, and a data-dependent posterior distribution over the messages, denoted $Q_\Omega$. We then obtain a bound for the expected loss over $Q_\Omega$.
Note that this new result shares similarities with the existing PAC-Bayes sample compression theory \cite{germain2011pac,risk_bounds, pbsc_laviolette_2005}, which gives PAC-Bayesian bounds for an expectation of data-dependent predictors given distributions on both the compression set and the messages. Our result differs by restricting the expectation solely according to the message.

\begin{restatable}[PAC-Bayes Sample compression - real-valued losses with continuous messages]{theorem}{mainresult}\label{theo:sample_compression_cnt}
    For any distribution $\calD$ over $\calX \times \calY$, for any set $J\subseteq \mathscr{P}(\mathbf{m})$, for any distribution $P_J$ over $J$, for any prior distribution $P_{\Omega}$ over $\Omega$, for any reconstruction function $\scriptR$, for any loss $\ell: \calY \times \calY \to [0,1]$, for any convex function $\Delta : [0,1]\times [0,1] \to \R$ and for any $\delta \in (0,1]$, with probability at least $1-\delta$ over the draw of $S \sim \calD^m$, we have
\iflong
    \begin{align*}
 &\forall \bfj \in J, Q_{\Omega} \text{ over } \Omega: \\ 
&\Delta\qty(\E_{\omega \sim Q_{\Omega}} \hatL_{S_{\overline{\bfj}}}\qty(\scriptR(S_{\bfj}, \omega)),\E_{\omega \sim Q_{\Omega}} \calL_{\calD}\qty(\scriptR(S_{\bfj}, \omega))) \leq \frac{1}{m-\max_{\bfj \in J} \m}\qty[\log \frac{1}{P_J(\bfj)}  + \KL(Q_{\Omega}||P_{\Omega}) + \ln\qty(\frac{\Acal_{\Delta}(m)}{\delta})]
    \end{align*}
    with 
    \begin{equation*}
    \Acal_{\Delta}(m) =
        \E_{(\bfj,\omega) \sim P}  \E_{T_{\bfj} \sim \calD^{\m}} \E_{T_{\overline{\bfj}} \sim \calD^{m-\m}}   e^{(m-\m)\Delta\qty(\hatL_{T_{\overline{\bfj}}}(\scriptR(T_{\bfj}, \omega)),  \calL_{\calD}(\scriptR(T_{\bfj}, \omega)))} .
\end{equation*}
\else
\begin{align*}
 &\forall \bfj \in J, Q_{\Omega} \text{ over } \Omega: \\ 
&\Delta\qty(\E_{\omega \sim Q_{\Omega}} \hatL_{S_{\overline{\bfj}}}\qty(\scriptR(S_{\bfj}, \omega)),\E_{\omega \sim Q_{\Omega}} \calL_{\calD}\qty(\scriptR(S_{\bfj}, \omega))) \\
&\leq \frac{1}{m-\max_{\bfj \in J} \m}\qty[\KL(Q_{\Omega}||P_{\Omega}) + \ln\qty(\frac{\Acal_{\Delta}(m)}{P_J(\bfj)\delta})]
    \end{align*}
with
\begin{equation*}
\mathcal{A}_{\Delta}(m) = 
    \E_{(\bfj, \omega) \sim P} \E_{T \sim \calD^{m}} e^{|\overline{\bfj}|\Delta\qty(\hatL_{T_{\overline{\bfj}}}(\scriptR(T_{\bfj}, \omega)),  \calL_{\calD}(\scriptR(T_{\bfj}, \omega)))}.
\end{equation*}
\fi
\end{restatable}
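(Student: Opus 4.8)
The plan is to merge the compression-set union bound behind \autoref{theo:sample_compression_dsc} with a PAC-Bayesian change of measure over the messages, by viewing the pair $(\bfj,\omega)$ as the single object carrying the PAC-Bayesian prior and posterior. I take the prior to be the product $P := P_J \otimes P_\Omega$, and the posterior I will plug in is the product of a point mass at the chosen compression set $\bfj$ with the message posterior $Q_\Omega$. The point mass contributes exactly $\log\tfrac{1}{P_J(\bfj)}$ to the KL term, and the message posterior contributes $\KL(Q_\Omega\|P_\Omega)$, so the discrete and continuous parts of the complexity collapse into one KL-type quantity --- this is the structural reason the statement reads as a hybrid of \autoref{thm:general_pac_bayes} and \autoref{theo:sample_compression_dsc}.

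Concretely I would proceed in four moves. \emph{(i) A data-dependent exponential.} Recall the Donsker--Varadhan inequality: for any measurable $\phi$ and any $Q \ll P$, $\E_{z\sim Q}\phi(z) \le \KL(Q\|P) + \ln \E_{z\sim P} e^{\phi(z)}$. For a fixed $S$, set
\[
\Phi(S) := \E_{(\bfj,\omega)\sim P}\ \exp\!\Big(|\overline{\bfj}|\,\Delta\big(\hatL_{S_{\overline{\bfj}}}(\scriptR(S_{\bfj},\omega)),\, \calL_{\calD}(\scriptR(S_{\bfj},\omega))\big)\Big).
\]
\emph{(ii) Expectation over $S$.} The key identity is $\E_{S\sim\calD^m}\Phi(S) = \Acal_{\Delta}(m)$: swap the two expectations (Tonelli; the integrand is nonnegative) and, for each fixed $(\bfj,\omega)$, use that the coordinates of $S$ indexed by $\bfj$ and by $\overline{\bfj}$ form two \emph{independent} i.i.d.\ sub-samples of sizes $|\bfj|$ and $|\overline{\bfj}|$; conditioning on $S_{\bfj}$ fixes the predictor $\scriptR(S_{\bfj},\omega)$ and leaves $\hatL_{S_{\overline{\bfj}}}$ an empirical average on a fresh $|\overline{\bfj}|$-sample, so the inner expectation is precisely $\Jcal_{\Delta}(\bfj,\omega)$ of \autoref{theo:sample_compression_dsc}, whence $\E_S\Phi(S)=\E_{(\bfj,\omega)\sim P}\Jcal_{\Delta}(\bfj,\omega)=\Acal_{\Delta}(m)$. \emph{(iii) Markov.} A single Markov step then gives, with probability at least $1-\delta$ over $S$, that $\ln\Phi(S) \le \ln(\Acal_{\Delta}(m)/\delta)$; crucially no union bound over $\bfj$ is needed, since the average over $\bfj\sim P_J$ already sits inside $\Phi(S)$.

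\emph{(iv) Change of measure, then Jensen.} Fix a good $S$ and arbitrary $\bfj\in J$ and $Q_\Omega$. Apply Donsker--Varadhan in the variable $(\bfj',\omega)$ with prior $P_J\otimes P_\Omega$, posterior equal to the point mass at $\bfj$ in the first coordinate tensored with $Q_\Omega$ in the second, and $\phi(\bfj',\omega)=|\overline{\bfj'}|\,\Delta(\hatL_{S_{\overline{\bfj'}}}(\scriptR(S_{\bfj'},\omega)),\calL_{\calD}(\scriptR(S_{\bfj'},\omega)))$. Because the posterior is a point mass in its first coordinate, the left-hand side is $|\overline{\bfj}|\,\E_{\omega\sim Q_\Omega}\Delta(\hatL_{S_{\overline{\bfj}}}(\scriptR(S_{\bfj},\omega)),\calL_{\calD}(\scriptR(S_{\bfj},\omega)))$, and the right-hand side is $\log\tfrac1{P_J(\bfj)}+\KL(Q_\Omega\|P_\Omega)+\ln\Phi(S)$, which step (iii) bounds by $\log\tfrac1{P_J(\bfj)}+\KL(Q_\Omega\|P_\Omega)+\ln(\Acal_{\Delta}(m)/\delta)$. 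Finally, convexity of $\Delta$ and Jensen's inequality pull $\E_{\omega\sim Q_\Omega}$ inside both arguments, $\Delta(\E_{\omega\sim Q_\Omega}\hatL_{S_{\overline{\bfj}}}(\scriptR(S_{\bfj},\omega)),\E_{\omega\sim Q_\Omega}\calL_{\calD}(\scriptR(S_{\bfj},\omega)))\le\E_{\omega\sim Q_\Omega}\Delta(\cdots)$; dividing by $|\overline{\bfj}|=m-|\bfj|$ and then weakening $m-|\bfj|$ to the $\bfj$-free $m-\max_{\bfj\in J}|\bfj|$ (legitimate since the bracketed term is nonnegative --- both $\KL$ contributions are nonnegative and $\Acal_{\Delta}(m)\ge\delta$, e.g.\ for the $\kl$ comparator) yields the statement.

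The step I expect to demand the most care is the interface between the two frameworks in (ii)--(iv): verifying that folding the compression-set choice into the PAC-Bayesian prior/posterior genuinely reproduces the intended $\log\tfrac1{P_J(\bfj)}$ with no residual union bound, and that the $\bfj$-dependent normalization $m-|\bfj|$ --- forced on us because $\hatL_{S_{\overline{\bfj}}}$ averages only $|\overline{\bfj}|$ losses --- can be traded for the uniform $m-\max_{\bfj\in J}|\bfj|$ without breaking the inequality. The measure-theoretic fine print (absolute continuity $Q_\Omega\ll P_\Omega$, Tonelli, and the harmless case $\Acal_{\Delta}(m)=\infty$ where the bound is vacuous) is routine.
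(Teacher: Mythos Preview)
Your proposal is correct and is essentially the paper's proof: the paper likewise treats $(\bfj,\omega)$ as the PAC-Bayesian variable with product prior $P_J\otimes P_\Omega$, applies Jensen, the change-of-measure inequality, and Markov, and then specializes the posterior on $J\times\Omega$ to $\delta_{\bfj}\otimes Q_\Omega$ so that the $\KL$ splits as $\log\tfrac{1}{P_J(\bfj)}+\KL(Q_\Omega\|P_\Omega)$. The only cosmetic differences are ordering (the paper does Jensen--change-of-measure--Markov rather than your Markov--change-of-measure--Jensen, and first states the result for arbitrary posteriors on $J\times\Omega$ as an intermediate theorem before specializing) and that the paper carries a fixed $\eta=m-\max_{\bfj\in J}|\bfj|$ in the exponent from the outset instead of your $\bfj$-dependent $|\overline{\bfj}|$ weakened only at the end.
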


See \autoref{supp:cor_cnt} for the complete proof of \autoref{theo:sample_compression_cnt} and specific choices of $\Delta$. 
In particular, $ \Acal_{\kl}(m) {\leq} \E_{\bfj \sim P_J}2\sqrt{m{-}\m} {\leq} 2\sqrt{m{-}\!\min_{\bfj \in J} \m}$ (Cor.~\ref{cor:pbsc-new-kl}).
%
Note that, in the setting where the compression set size is fixed (\emph{i.e.}, $\forall\jbf\in J, \ \m {=} c$), \autoref{theo:sample_compression_dsc} is a particular case of \autoref{theo:sample_compression_cnt} where the message space $\Omega$ is discrete and the posterior $Q_\Omega$ is a Dirac over a single $\omega^\star{\in} \Omega$ (\emph{i.e.}, $Q_\Omega(\omega^\star){=}1$). 


Leveraging the PAC-Bayes \emph{disintegrated} theorem of \citet{viallard2024general}, we also obtain a  variant of \autoref{theo:sample_compression_cnt} valid for a single deterministic predictor $\scriptR(S_{\mathbf{j}^\star},\omega^\star)$ (with $\omega^\star$ drawn once according to~$Q_\Omega$), instead of the expected loss according to $Q_\Omega$. 
This new result requires the use of a singular compression function $\zeta(S, P_\Omega) = (S_{\bfj^\star}, Q_{\Omega}^S)$, which takes as input a dataset and a prior distribution over the messages, and outputs a compression set and a posterior distribution.  


\begin{restatable}[Disintegrated PAC-Bayes Sample compression bound]{theorem}{desintegratedbound}\label{thm:desintegrated_pbsc}
For any distribution $\calD$ over $\calX \times \calY$, for any set $J \subseteq \scriptP(\mathbf{m})$, for any prior distribution $P_J$ over $J$, for any prior distribution $P_{\Omega}$ over $\Omega$, for any reconstruction function~$\scriptR$, for any compression function $\zeta$, for any loss $\ell : \calY \times \calY \to [0,1]$, for any $\alpha > 1$, for any convex function $\Delta : [0,1] \times [0,1] \to\R$, for any $\delta \in (0,1]$, with probability at least $1-\delta$ over the draw of $S \sim \calD^m$
(which leads to $(S_{\bfj^\star}, Q_{\Omega}^S) = \zeta(S,P_{\Omega})$\,), and
$\omega^\star \sim Q_{\Omega}^S$, we have
\iflong
\begin{align*}
&\Delta\qty(\hatL_{S_{\overline{\bfj}^\star}}\qty(\scriptR(S_{\bfj}^\star, \omega)), \calL_{\calD}\qty(\scriptR(S_{\bfj}^\star, \omega^\star))) \leq \frac{1}{m-\max_{\bfj \in J}\m}\qty[\frac{2\alpha - 1}{\alpha - 1} \ln \frac{2}{\delta} + \ln \frac{1}{P_J(\bfj^\star)} + D_{\alpha}(Q_{\Omega}^S||P_{\Omega}) + \ln \Acal_{\Delta}(m)].
\end{align*}
\else
\begin{align*}
&(m-\max_{\bfj \in J}\m)\Delta\qty(\hatL_{S_{\overline{\bfj}^\star}}\qty(\scriptR(S_{\bfj^\star}, \omega^\star)), \calL_{\calD}\qty(\scriptR(S_{\bfj^\star}, \omega^\star))) \\ 
    &\leq \frac{2\alpha - 1}{\alpha - 1} \ln\!\frac{2}{\delta} + \ln\! \frac{1}{P_J(\bfj^\star)} + D_{\alpha}(Q_{\Omega}^S||P_{\Omega}) + \ln \Acal_{\Delta}(m)\,.
\end{align*}
\fi
\end{restatable}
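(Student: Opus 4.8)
The plan is to replay the proof of \autoref{theo:sample_compression_cnt}, but with the Donsker--Varadhan (KL) change of measure over the message posterior replaced by a R\'enyi-$\alpha$ change of measure --- the mechanism behind the disintegrated PAC-Bayes theorem of \citet{viallard2024general} (\autoref{thm:paul}), which I ultimately instantiate. I work on the product space $J\times\Omega$ with the data-independent prior $P:=P_J\otimes P_\Omega$ (as in the definition of $\mathcal{A}_\Delta$) and the data-dependent posterior $Q^S:=\delta_{\bfj^\star}\otimes Q_\Omega^S$, where $(S_{\bfj^\star},Q_\Omega^S)=\zeta(S,P_\Omega)$ and $\delta_{\bfj^\star}$ is the Dirac mass on $\bfj^\star$ (not the confidence $\delta$). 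For $(\bfj,\omega)\in J\times\Omega$ set
\[
 X_S(\bfj,\omega)\;:=\;|\overline{\bfj}|\,\Delta\!\big(\hatL_{S_{\overline{\bfj}}}(\scriptR(S_{\bfj},\omega)),\,\calL_{\calD}(\scriptR(S_{\bfj},\omega))\big),
\]
so that $\mathcal{A}_\Delta(m)=\E_{S\sim\calD^m}\E_{(\bfj,\omega)\sim P}\,e^{X_S(\bfj,\omega)}$ is a \emph{data-independent} exponential moment, and a draw $(\bfj,\omega)\sim Q^S$ is exactly the pair $(\bfj^\star,\omega^\star)$ of the statement ($\bfj^\star$ deterministic given $S$, $\omega^\star\sim Q_\Omega^S$).

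\textbf{Step 1 (R\'enyi change of measure in the message draw).} Let $\beta=\alpha/(\alpha-1)$. A standard R\'enyi change of measure (H\"older against $P$ with conjugate exponents $(\alpha,\beta)$ applied to $(\bfj,\omega)\mapsto e^{X_S(\bfj,\omega)/\beta}$), then Markov over $\omega^\star\sim Q_\Omega^S$ at level $\delta/2$, then logarithms and multiplication by $\beta$, yields --- after tensorizing the R\'enyi divergence over the product, $D_\alpha(\delta_{\bfj^\star}\otimes Q_\Omega^S\|P_J\otimes P_\Omega)=\ln\tfrac1{P_J(\bfj^\star)}+D_\alpha(Q_\Omega^S\|P_\Omega)$ --- that, with probability at least $1-\delta/2$ over $\omega^\star\sim Q_\Omega^S$ conditionally on $S$,
\[
 X_S(\bfj^\star,\omega^\star)\;\le\;\tfrac{\alpha}{\alpha-1}\ln\tfrac2\delta+\ln\tfrac1{P_J(\bfj^\star)}+D_\alpha(Q_\Omega^S\|P_\Omega)+\ln\E_{(\bfj,\omega)\sim P}\,e^{X_S(\bfj,\omega)}.
\]

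\textbf{Step 2 (Markov in the data) and conclusion.} The last term still depends on $S$, so by Markov and Fubini, with probability at least $1-\delta/2$ over $S\sim\calD^m$, $\E_{(\bfj,\omega)\sim P}\,e^{X_S(\bfj,\omega)}\le\tfrac2\delta\,\mathcal{A}_\Delta(m)$. A union bound over the two events, over the joint draw of $S$ and then $\omega^\star\sim Q_\Omega^S$, together with $\tfrac{\alpha}{\alpha-1}+1=\tfrac{2\alpha-1}{\alpha-1}$, gives with probability at least $1-\delta$
\[
 X_S(\bfj^\star,\omega^\star)\;\le\;\tfrac{2\alpha-1}{\alpha-1}\ln\tfrac2\delta+\ln\tfrac1{P_J(\bfj^\star)}+D_\alpha(Q_\Omega^S\|P_\Omega)+\ln\mathcal{A}_\Delta(m).
\]
Since $|\overline{\bfj}^\star|=m-|\bfj^\star|\ge m-\max_{\bfj\in J}|\bfj|>0$ and (exactly as in \autoref{theo:sample_compression_cnt}) the right-hand side is nonnegative, I may lower $|\overline{\bfj}^\star|$ to $m-\max_{\bfj\in J}|\bfj|$ inside $X_S(\bfj^\star,\omega^\star)$, which is the claimed inequality. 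Equivalently, Steps~1--2 are precisely \autoref{thm:paul} applied with the prior $P$ and posterior $Q^S$ above.

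\textbf{Main obstacle.} The hard part will be the data-dependence of the compression-set selection: $\bfj^\star$ and $Q_\Omega^S$ are chosen after seeing all of $S$, so $S_{\overline{\bfj}^\star}$ is not a held-out i.i.d.\ sample for $\scriptR(S_{\bfj^\star},\cdot)$. As in sample-compression arguments this must be absorbed by a union bound over $\bfj\in J$ --- here the term $\ln\tfrac1{P_J(\bfj^\star)}$ falling out of the R\'enyi divergence of $Q^S$ against $P$ --- which forces $\mathcal{A}_\Delta(m)$ to be a double expectation over the \emph{data-independent} $P$; that data-independence is exactly what makes the Markov step of Step~2 legitimate (and the bound computable). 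The remaining care is routine: the R\'enyi tensorization and the Dirac factor, splitting $\delta$ across the two Markov steps to get the constant $\tfrac{2\alpha-1}{\alpha-1}\ln\tfrac2\delta$, and the sign conventions for the final division. Note that no Jensen step appears here (unlike \autoref{theo:sample_compression_cnt}), so convexity of $\Delta$ is inessential --- we bound the discrepancy of the single drawn predictor $\scriptR(S_{\bfj^\star},\omega^\star)$ directly.
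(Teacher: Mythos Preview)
Your proposal is correct and follows essentially the same route as the paper: work on the product space $J\times\Omega$ with prior $P=P_J\otimes P_\Omega$ and posterior $Q^S=\delta_{\bfj^\star}\otimes Q_\Omega^S$, invoke the disintegrated bound of \citet{viallard2024general} (\autoref{thm:paul}), and split the R\'enyi divergence as $D_\alpha(Q^S\|P)=\ln\tfrac{1}{P_J(\bfj^\star)}+D_\alpha(Q_\Omega^S\|P_\Omega)$. The only cosmetic difference is where the swap between $|\overline{\bfj}^\star|$ and $m-\max_{\bfj\in J}|\bfj|$ occurs: the paper builds $m-\max_{\bfj\in J}|\bfj|$ into the function $\phi$ from the start and then upper-bounds the moment by $\mathcal{A}_\Delta(m)$, whereas you keep $|\overline{\bfj}|$ throughout (so the moment equals $\mathcal{A}_\Delta(m)$ exactly) and weaken the left-hand side at the end.
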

The proof of \autoref{thm:desintegrated_pbsc} is given in  \autoref{section:penseepourpaul}.
\section{Meta-Learning with Hypernetworks}\label{sec:metalearning}


\PGparagraph{The meta-prediction problem.}
We now introduce the meta-learning setting that we leverage in order to benefit from the generalization guarantees presented in the previous section. 

Let a \emph{task} $\calD_i$ be a realization of a meta distribution $\Dbf$, and $S_i{\sim} \Dcal_i^{m_i}$ be a dataset of $m_i$ \emph{i.i.d.}\ samples from a given task. 
A meta-learning algorithm receives as input a meta-dataset $\mathbf{S} = \{S_i\}_{i=1}^{n}$, that is a collection of~$n$ datasets obtained from distributions $\{\calD_i\}_{i=1}^n$.
The aim is to learn a meta-predictor~$\scriptH$ that, given only a few sample $S' \sim (\Dcal')^{m'}$ for a new task $\Dcal'\sim \Dbf$, produces a predictor $h'=\scriptH(S')$ that generalizes well, i.e., with low generalization loss $\mathcal{L}_{\mathcal{D'}}(h')$.

In conformity with classical meta-learning literature \cite{DBLP:journals/corr/abs-2011-14048, DBLP:conf/nips/VinyalsBLKW16}, the following considers that each task dataset $S_i\in \Sbf$ is split into a \emph{support set} $\hat S_i \subset S_i$ and a \emph{query set} $\hat T_i =  S_i {\setminus} \hat S_i$; the former is used to learn the predictor~$h_{i}=\Hcal(\hat S_i)$ and the latter to estimate $h_{i}$\!'s loss:
\begin{equation*}
\widehat{\mathcal{L}}_{\hat T_i}(h_{i}) = 
     \frac{1}{|\hat T_i|}\sum_{(\xbf,y) \in \hat{T}_i} 
     \ell\big(h_{i}(\xbf), y\big)\,.
\end{equation*}
\PGparagraph{Meta-learning hypernetworks.}%
We propose to use a hypernetwork as meta-predictor, that is a neural network $\scriptH_\theta$ with parameters~$\theta$, whose output is an array $\gamma\in\Rbb^{|\gamma|}$ that is in turn the parameters of a \emph{downstream network} $h_\gamma:\Rbb^d{\to}\calY$. The particularity of our approaches is the use of an explicit bottleneck in the hypernetwork architecture. An overview is given in \cref{fig:deeprm}. Hence, given a training set $S$, we train the hypernetwork by optimizing its parameters $\theta$ in order to minimize the empirical loss of the downstream predictor $h_\gamma$ on the query set. That is, given a training meta-dataset $\mathbf{S} {=} \{S_i\}_{i=1}^{n}$, we propose to optimize the following objective.
\begin{equation} 
    \min_{\theta} 
    \left\lbrace \frac1n \sum_{i=1}^n
    \widehat{\mathcal{L}}_{\hat T_i}(h_{\gamma_i}) 
 \, \Bigg| \,   
 \gamma_i {=} \scriptH_\theta\!\big(\hat{S}_i\big)
    \!\right\rbrace.
\end{equation}
\PGparagraph{Toward generalization bounds.} From now on, our aim is to study variants of hypernetwork architectures for the meta-predictor $\scriptH_\theta$, each variant inspired by the learning frameworks of Section~\ref{section:theory}. 
By doing so, once $\scriptH_\theta$ is learned, every downstream network $h_{\gamma'}=\scriptH_\theta(S')$  comes with its own risk certificate, \emph{i.e.}, an upper bound on its generalization loss $\Lcal_{\Dcal'}(h_{\gamma'})$  statistically valid with high probability. 

We stress that, while the usual meta-learning bounds are computed after the meta-learning training phase in order to guarantee the expected performance of future downstream predictors learned on  yet unseen tasks, the bounds we provide  concern the generalization of  downstream predictors once they are outputted by the meta-predictor. 

\subsection{PAC-Bayes Encoder-Decoder Hypernetworks}
\label{section:PBH}

We depart from previous works on PAC-Bayesian meta-learning and from classical PAC-Bayes, 
which consider a hierarchy of (meta-)prior and (meta-)posterior distributions \citep[\emph{e.g.},][]{DBLP:conf/icml/PentinaL14}. Instead, we consider distributions over a latent representation learned by the hypernetwork $\scriptH_\theta$. 
To encourage this representation to focus on relevant information, we adopt an  encoder-decoder architecture 
$\scriptH_\theta(\cdot) = \scriptD_\psi(\scriptE_\phi(\cdot))$, with $\theta=\{\phi,\psi\}$. The encoder~$\scriptE_\phi$ \emph{compresses} the relevant dataset information into a vector \mbox{$\mubf\in\Rbb^{|\mubf|}$} (typically, $|\mubf|{\ll} |\gamma|$). The vector $\mubf$ is then treated as the mean of the Gaussian posterior distribution $Q_\mubf {\,=\,} \Ncal(\mubf, \Ibf)$ over the latent representation space, such that the decoder~$\scriptD_\psi$ maps the realizations drawn from $Q_\mubf$ to the downstream parameters $\gamma$.
Figure~\ref{fig:encoder} illustrates this architecture.

\begin{figure}[t]
\centering
  \includegraphics[width=\linewidth]{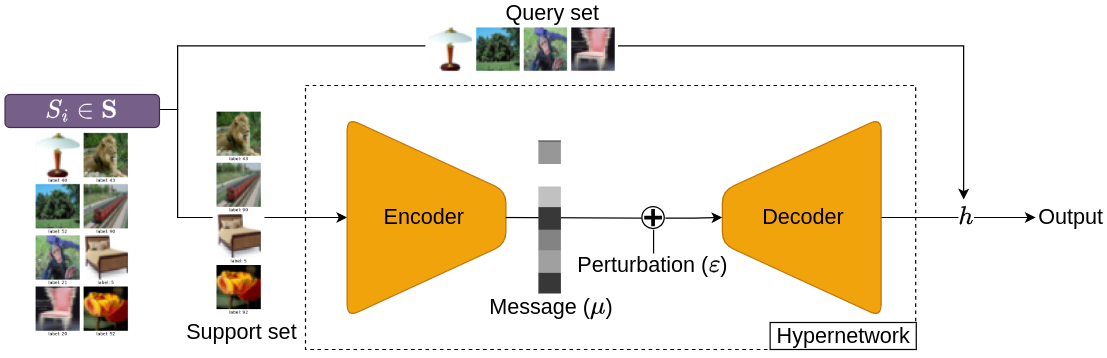}
    \caption{The PAC-Bayes hypernetwork.}
    \label{fig:encoder}
\end{figure}

\PGparagraph{Training objective.}%
Based on the above, learning the proposed encoder-decoder hypernetwork amount to solve
\begin{equation*} 
    \min_{\psi,\phi} 
    \left\lbrace \frac1n \sum_{i=1}^n \E
    \widehat{\mathcal{L}}_{\hat T_i}(h_{\gamma_i}) 
 \, \Bigg| \,   
 \gamma_i {=} \scriptD_\psi\!\big(\mubf_i{+}\epsilonbf\big) ; \, \mubf_i {=} \scriptE_\phi\!\big(\hat{S}_i\big)
    \!\right\rbrace,
\end{equation*}
with $\epsilonbf\sim\Ncal(\mathbf{0}, \Ibf)$.

\PGparagraph{Bound computation.}
Given a new task sample $S'\,{\sim}\, (\Dcal')^{m'}$, we obtain from the PAC-Bayesian \autoref{thm:general_pac_bayes} (using a prior $P_\zerobf = \Ncal(\zerobf, \Ibf)$ over the latent representation space and the comparator function of Equation~\eqref{eq:smallkl}),  the following upper bound on the expected loss according to $\Dcal'$:
\begin{align}\label{eq:pac-bayes-model-bound}
    \tau^* \!= \!\argsup_{\tau\in[0,1]} \!\qty{\!\kl\qty(\E \hatL_{S'}(h_{\gamma'}), \tau\!)
    \!\leq\! \frac{\tfrac12\|\mubf\|^2\!+ \!\ln\!\tfrac{2\sqrt{{m'}}}{\delta}}{m'}\!},\!
\end{align}
where $\mubf$ is the mean of the Gaussian posterior distribution. That is, with probability at least $1{-}\delta$, we have $\E \Lcal_{\Dcal'}(h_{\gamma'})\,{\leq}\, \tau^*$, where the expectation comes from the stochastic latent space.

\subsection{Sample Compression Hypernetworks}
\label{section:SCH}

Let us now design a hypernetwork architecture derived from the sample compression theory presented in Section~\ref{section:SC}. Similar to the previously presented PAC-Bayes encoder-decoder, the architecture detailed below acts as an \emph{information bottleneck}. However, instead of a PAC-Bayesian encoder $\scriptE_\phi$ mapping the dataset to a latent representation, we consider a \emph{sample compressor} $\scriptC_{\phia}$ that selects a few samples from the training set. These become the input of a \emph{reconstructor}~$\Rcal_\psi$ that produces the parameters of a downstream predictor, akin to the decoder in our PAC-Bayesian approach. In line with the sample compress framework, our \emph{reconstructor}~$\Rcal_\psi$ optionally takes additional message input, given by a \emph{message compressor}~$\scriptM_{\phib}$.
This amounts to \emph{learn the reconstruction function}; an idea that has not been envisaged before in the sample compress literature (to our knowledge).
The overall resulting architecture is illustrated by Figure~\ref{fig:compressor_compressor_a}.

\PGparagraph{Reconstructor hypernetwork.} 
%
In line with the sample compression framework of Section~\ref{section:SC}, our \emph{reconstructor}~$\Rcal_\psi$ takes two complementary inputs:
\begin{enumerate}
    \item A  compression set $S_{\jbf}$ 
    containing a fixed number of $c$ examples;
    \item (optionally) A message $\omegabf\in\{-1,1\}^b = \Omega$, that is a binary-valued vector of size $b$.
\end{enumerate}
The output of the reconstructor hypernetwork is an array $\gamma{\,\in\,}\Rbb^{|\gamma|}$ that is in turn the parameters of a \emph{downstream network} $h_\gamma:\Rbb^d\to\calY$. Hence, given a (single task) training set $S$, a compression set $S_\jbf\subset S$ and a message $\omegabf\in\Omega$, 
a reconstructor is trained by optimizing its parameters $\psi$ in order to minimize the empirical loss of the downstream predictor $h_\gamma$ on the complement set $S_{\bar\jbf}=S\setminus S_\jbf$ :
\begin{equation} \label{eq:schypernet}
    \min_\psi\!
    \Bigg\lbrace \frac{1}{m-|\jbf|}\sum_{(\xbf,y) \in S_{\bar\jbf}} \!\!\!\ell\big(h_\gamma(\xbf), y\big)
\ \Bigg| \ 
    \gamma = \scriptR_\psi(S_{\jbf}, \omegabf)
    \Bigg\rbrace.\!
\end{equation}
Note that the above corresponds to the minimization of the empirical loss term 
$\widehat{\calL}_{S_{\overline{\bfj}}}(\cdot)$ of the sample compression bounds (\emph{e.g.}, \autoref{theo:sample_compression_dsc}). However, to be statistically valid, these bounds must not be computed on the same data used to learn the reconstructor. 
Fortunately, our meta-learning framework satisfies this requirement since the reconstructor is learned on the meta-training set, rather than the task of interest.
Note that the compression set and the message are not \textit{given}, but outputted by $\scriptC_{\phia}$ and $\scriptM_{\phib}$.






\begin{figure}[t]
\centering
\hspace{.05\linewidth}
  \includegraphics[width=1\linewidth]{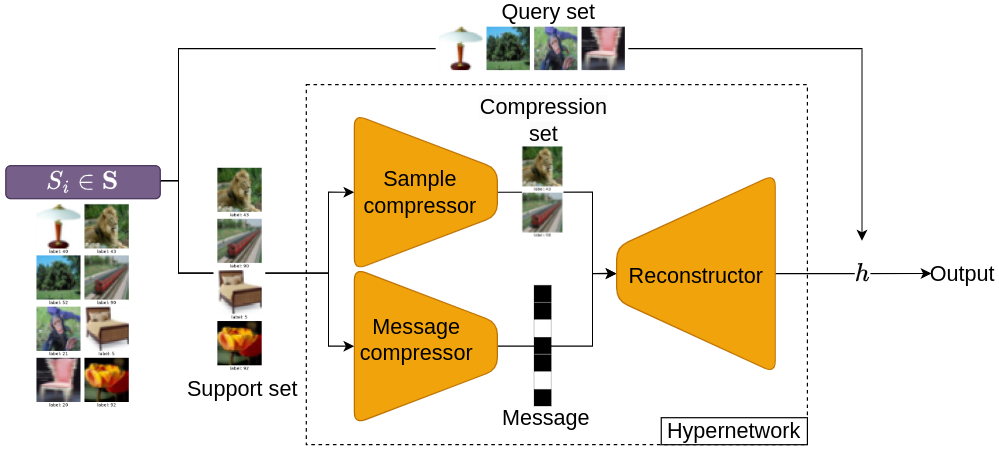}
    \caption{The Sample Compression hypernetwork.}
    \label{fig:compressor_compressor_a}
\end{figure}

\PGparagraph{Training objective.}
Our goal is to learn parameters $\psi$, $\phia$ and $\phib$ such that, for any task $\Dcal'{\sim}\,\Dbf$ producing $S'{\sim}\,(\Dcal')^{m'}$, the resulting output gives rise to a downstream predictor $h_{\gamma'}$ of low generalization loss $\Lcal_{\Dcal'}(h_{\gamma'}$), with
\begin{equation}   
\label{eq:gammaprime}
\gamma'=  \Rcal_\psi\big(\scriptC_{\phia}(S'), \scriptM_{\phib}(S')\big)\,.
\end{equation}
Given a training meta-dataset $\mathbf{S} = \{S_i\}_{i=1}^{n}$, we propose to optimize the following objective:
\begin{equation} \label{eq:deeprm}
    \!\!\!\min_{\psi,\phia,\phib}\!\! 
    \left\lbrace \!\frac1n \!\sum_{i=1}^n
    \widehat{\mathcal{L}}_{\hat T_i}(h_{\gamma_i})
 \Bigg|   \gamma_i {=} \Rcal_\psi\!\Big(\!\scriptC_\phia(\hat{S}_i), \scriptM_\phib(\hat{S}_i)\Big)
    \!\!\right\rbrace\!.\!
\end{equation}
Note that the learning objective is a surrogate for Equation~\eqref{eq:schypernet}, as the complement of the compression set $S_{\bar\jbf}$ is replaced by the query set $\hat T_i$ in Equation~\eqref{eq:deeprm}. The pseudocode of the proposed approach can be found in \autoref{supp:architectures}.

\PGparagraph{Bound computation.} 
When the 0-1 loss is used, the generalization bound from \autoref{theo:sample_compression_test} is computed, 
%
using a fixed size $c$ for the compression sets; given a dataset size $m'$, we use \mbox{$J = \{\bfj\in\mathscr{P}(\mathbf{m'})~:~|\bfj| = c\}$} and a uniform probability distribution over all distinct compression sets (sets that are not permutations of one another): $p= P_{J}(\bfj) = \binom{m'}{c}^{_{-1}}~\forall \bfj \in J$.
When using the optional message compressor module, we set a message size $b$ and a uniform distribution over all messages of size $b$: $P_\Omega(\omegabf) = 2^{-b}~\forall \omegabf \in \{-1,1\}^b$, leading to the following upper bound on the loss:
\begin{align*}
    \tau^*\hspace{-1mm} =\argsup_{\tau\in[0,1]}
    \left\lbrace\sum_{k=1}^{K}\hspace{-1mm}\binom{m'\hspace{-1mm}-\hspace{-0.4mm}c}{k}\hspace{-0.4mm}\tau^k\hspace{-0.4mm}(1-\tau)^{m'-c-k}\geq p\hspace{0.5mm}2^{-b}\hspace{0.5mm}\delta 
  \right\rbrace,
\end{align*}
with $K = |\overline{\bfj}|\widehat{\mathcal{L}}_{S'_{\overline{\bfj}}}\big(h_{\gamma'}\big)\,$.

When a real-valued loss is used, applying \autoref{theo:sample_compression_dsc} with the comparator function of Equation~\eqref{eq:smallkl}, we obtain the following upper bound on the loss:
\begin{align*}
    \tau^*\hspace{-1mm} =\hspace{-0.5mm} \argsup_{\tau\in[0,1]}\hspace{-0.5mm} \qty{\hspace{-0.5mm}\kl\qty( \hatL_{S'_{\overline{\bfj}}}(h_{\gamma'}), \tau\hspace{-0.3mm})
    \hspace{-1mm}\leq\hspace{-0.8mm} \frac{1}{m'-c}\hspace{-0.5mm} \hspace{-0.5mm}\ln\frac{2^{b+1}\sqrt{{m'-c}}}{p\,\delta}\hspace{-0.6mm}}.
\end{align*}
That is, $\Lcal_{\Dcal'}(h_{\gamma'})\leq \tau^*$ with probability at least $1-\delta$.

 \subsection{PAC-Bayes Sample Compression Hypernetworks}
 \label{sec:schypernet}

As a third hypernetwork architecture, the new theoretical perspective presented in Section~\ref{sec:pbcs-theory} led to a hybrid between previous PAC-Bayesian and sample compression approaches.

Recall that \autoref{theo:sample_compression_cnt} is obtained by handling the message of the sample compress framework in a PAC-Bayesian fashion, enabling the use of a posterior distribution over continous messages. Hence, this motivates revisiting the sample compress hypernetwork of Section~\ref{section:SCH} by replacing the message compressor $\scriptM_\phib$ (outputting a binary vector) by the PAC-Bayes encoder of Section~\ref{section:PBH}. We denote the latter~$\scriptE_\phib$, whose task is to output the mean $\mubf\in\Rbb^b$ of a posterior distribution $Q_{\Omega,\mubf}=\Ncal(\mubf, \Ibf)$ over a real-valued message space $\Omega=\Rbb^b$. The sample compressor remaining unchanged from Section~\ref{section:SCH}, the \emph{PAC-Bayes Sample Compress architecture} is expressed by $\scriptH_\theta(\cdot) = \scriptR_\psi(\scriptC_\phia(\cdot),\scriptE_\phib(\cdot))$.
Figure~\ref{fig:compressor_encoder} illustrates the resulting architecture.

\pagebreak

\PGparagraph{Training objective.} Based on the above formulation, we obtain the following training objective:
\begin{align*} 
 &
\min_{\psi,\phia, \phib}\!\! 
    \left\lbrace \hspace{-0.5mm}\frac1n\hspace{-0.5mm} \sum_{i=1}^n \E
    \widehat{\mathcal{L}}_{\hat T_i}(h_{\gamma_i}) 
 \, \hspace{-0.6mm}\Bigg|\hspace{-0.5mm} \, 
 \gamma_i {=} \scriptR_\psi\!\big(\hspace{-0.5mm} \scriptC_\phia\hspace{-0.7mm}\big(\hat{S}_i\big), \scriptE_\phib\hspace{-0.5mm}\big(\hat{S}_i\big){+}\epsilonbf \big)
    \!\!\right\rbrace\!,\\*
&\mbox{with $\epsilonbf\sim\Ncal(\mathbf{0}, \Ibf)$.}    
\end{align*}

\PGparagraph{Bound computation.}
From \autoref{theo:sample_compression_cnt}, using a fixed compression set size $c$, a prior $P_{\Omega,\zerobf} = \Ncal(\zerobf, \Ibf)$ over the real-valued message space of size~$b$, a uniform probability over the compression set choice $p=P_{J}(\bfj) = \binom{m'}{c}^{_{-1}}~\forall \bfj \in J$, and the comparator function of Equation~\eqref{eq:smallkl}, we obtain the following upper bound on the expected loss:
\begin{align*}
    \tau^*\hspace{-1mm} =\hspace{-0.5mm} \argsup_{\tau\in[0,1]}\hspace{-0.5mm} \qty{\hspace{-0.5mm}\kl\qty(\E \hatL_{S'_{\overline{\bfj}}}(h_{\gamma'}), \tau\hspace{-0.3mm})
    \hspace{-1mm}\leq\hspace{-0.8mm} \frac{\tfrac12\|\mubfprim\|^2\hspace{-0.8mm}+\hspace{-0.5mm} \hspace{-0.5mm}\ln\tfrac{2\sqrt{{m'-c}}}{p\cdot\delta}}{m'-c}\hspace{-0.6mm}}.
\end{align*}
That is, with probability at least $1-\delta$, we have $\E \Lcal_{D'}(h_{\gamma'})\leq \tau^*$, where the expectation comes from the stochastic message treatment. 

To compute the disintegrated bound of \cref{thm:desintegrated_pbsc}, we choose $\zeta$ to output the compression set $S'_{\bfj} = \Ccal_{\phia}(S')$ and a normal distribution centered on the message $\Ecal_{\phib}(S')$, denoted $Q_{\Omega}^S = \Ncal(\scriptE_\phib(S'), \Ibf)$. Thus, we have $\zeta(S',P_{\Omega}) = (S'_{\bfj}, Q_{\Omega}^S).$ After sampling a message $\omega \sim Q_{\Omega}^S$, we define $\gamma'=  \Rcal_\psi(S'_{\bfj}, \omega)$. Applying \autoref{thm:desintegrated_pbsc} with $\alpha=2$, we have the following upper bound on the loss : 
\begin{align*}
    \tau^*\hspace{-1mm} =\hspace{-0.5mm} \argsup_{\tau\in[0,1]}\hspace{-0.5mm} \qty{\hspace{-0.5mm}\kl\qty(\hatL_{S'_{\overline{\bfj}}}(h_{\gamma'}), \tau\hspace{-0.3mm})
    \hspace{-1mm}\leq\hspace{-0.8mm} \frac{\|\mubfprim\|^2 + \ln \frac{16\sqrt{m'-c}}{p\cdot\delta^3}}{m'-c}\hspace{-0.6mm}}.
\end{align*}
That is, $\calL_{\calD'}(h_{\gamma'}) \leq \tau^*$ with probability at least $1-\delta$.

\begin{figure}[t]
    \centering
    \includegraphics[width=0.90\linewidth]{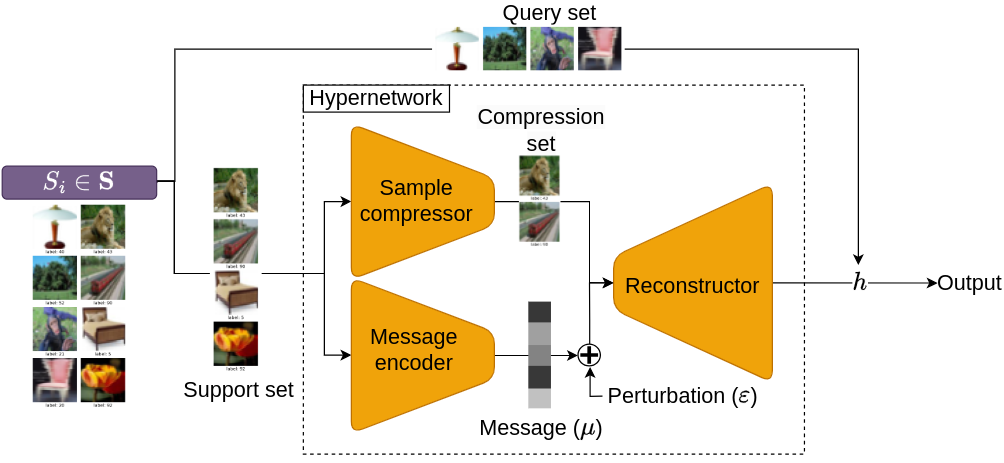}
    \caption{The PAC-Bayes Sample Compression hypernetwork.}
    \label{fig:compressor_encoder}
\end{figure}

\section{Experiments}\label{sec:exp}

We now study the performance of models learned using our meta-learning framework as well as the quality of the obtained bounds.
Then, we report results on a synthetic meta-learning task~(\cref{subsec:moons}) and two real-world meta-learning tasks~(\cref{subsec:pixel_swap,subsec:new_classes}).


\subsection{Implementation Details}\label{subsec:implementation}

In each task, we split our training datasets into train and validation datasets;
for each meta-learning hypernetwork, the hyperparameters are selected according to the error made on the validation datasets.
Detailed hyperparameters used for each experiment are given in \autoref{supp:num_exp}.%
\footnote{Our code is available at \url{https://github.com/GRAAL-Research/DeepRM}.}

\PGparagraph{DeepSet dataset encoding.}
The hypernetwork $\scriptH_\theta(S)$ must be invariant to the permutation of its input $S$: the order of the examples in the input dataset should not affect the resulting encoding. Modules such as FSPool~\cite{DBLP:conf/iclr/ZhangHP20} or a transformer~\cite{vaswani2017attention} ensure such property. Our experiments use a simpler mechanism that is inspired by the DeepSet module \cite{DBLP:conf/nips/ZaheerKRPSS17}.

\begin{figure}[t]
    \centering
    \includegraphics[width=.9\linewidth]{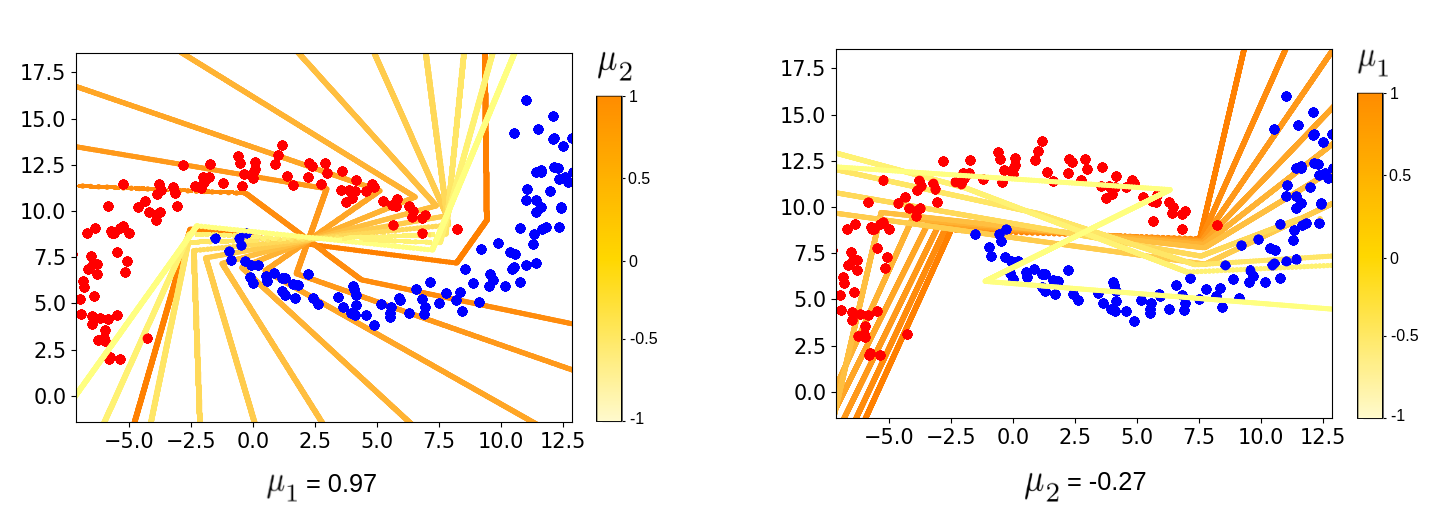}
    \caption{Illustration of decision boundaries generated by downstream predictors $h_\gamma$, when parameters $\gamma$ are given by a PAC-Bayes decoder $\scriptD_\psi(\mubf)$ with input $\mubf=(\mu_1, \mu_2)\in\Rbb^2$.
    The encoder generated the message $(0.97, -0.27)$ on shown training datapoints.
    \textbf{Left:} The first latent dimension is fixed ($\mu_1=0.97$), while the second varies ($\mu_2\in[-1,1]$). \textbf{Right:} The first dimension varies ($\mu_1\in[-1,1]$), while the second is fixed $\mu_2=-0.27$. 
    }
    \label{fig:msg_size_3}
    \vspace{-0.3cm}
\end{figure}
\begin{definition}[DeepSet Module] \label{def:deepset}
For binary tasks: given a data-matrix $\Xbf\in\Rbb^{m\times d}$ and a binary label vector $\ybf\in\{-1,1\}^{m}$, the output of a \emph{DeepSet module} is the embedding $\zbf\in\Rbb^{d'}$, obtained by first applying a fully-connected neural network $g_\omega:\Rbb^d \to \Rbb^{d'}$ to each row of $\Xbf$, sharing the weights across rows, to obtain a matrix $\Mbf\in\Rbb^{m\times d'}$ and then aggregating the result column-wise: $\zbf=\frac{1}{m}\Mbf^T\ybf$.

For $\kappa$ class tasks, where $\kappa \,{>}\, 2$: given a data-matrix $\Xbf\,{\in}\,\Rbb^{m\times d}$ and a one-hot encoding of the label $\Ybf$, the output of a \emph{DeepSet module} is the embedding $\zbf\in\Rbb^{d'}$, obtained by first applying a fully-connected neural network $g_\omega{:}\,\Rbb^d {\to}\, \Rbb^{d' + \kappa}$ to each row of $\Xbf$, where the label representation have been appended, to obtain a matrix $\Mbf\in\Rbb^{m\times (d'+\kappa)}$ and then aggregating the result column-wise: $\zbf=\frac{1}{m}\Mbf^T\mathbf{1}$.
\end{definition}

\PGparagraph{PAC-Bayes encoder / message compressor.}%
The PAC-Bayes encoder takes as input a dataset and outputs a continuous representation.\footnote{In a slight language abuse, from now on, this continuous representation will be referred to as \textit{message}, just like the output of the message compressor.} It is composed of a DeepSet module, followed by a feedforward network. Its last activation function is a Tanh function, so that the message is close to $\zerobf$, leading to a bound that is not penalized much from having an important representation size (see \autoref{eq:pac-bayes-model-bound}). 

As for the message compressor, it has the same architecture as the PAC-Bayes encoder, but its final activation is the \emph{sign} function coupled with the straight-through estimator~\cite{hinton} in order to generate binary values.

\PGparagraph{Sample compressor.}
Given a fixed compression set size~$c$, the sample compressor $\scriptC_\phia$ is composed of $c$ independent attention mechanisms \cite{DBLP:journals/corr/BahdanauCB14}.
The queries are the result of a DeepSet module (see \autoref{def:deepset}), the keys are the result of a fully-connected neural network and the values are the features themselves.
Each attention mechanism outputs a probability distribution over the examples indices from the support set, and the example having the highest probability is added to the compression set.

\PGparagraph{Decoder / reconstructor.}
The input of  the decoder / reconstructor is passed through a DeepSet. Then, both the obtained compression set embedding and the message (if there is one) are fed to a feedforward neural network, whose output constitutes the parameters of the downstream network.



\PGparagraph{Nomenclature.} In the following, we refer to our different meta-predictors as such: PAC-Bayes Hypernetwork (\textbf{PBH}); Sample Compression Hypernetwork, without messages (\textbf{SCH}$_-$) and with messages (\textbf{SCH}$_+$); and PAC-Bayes Sample Compression Hypernetwork (\textbf{PB SCH}).

\subsection{Numerical Results on a Synthetic Problem}
\label{subsec:moons}

We first conduct an experiment on the \emph{moons} 2-D synthetic dataset from Scikit-learn \cite{scikit-learn}, which consists of two interleaving half circles with small Gaussian noise, the goal being to better understand the inner workings of the proposed approach. We generate tasks by rotating (random degree in $[0,360]$), translating (random moon center in $[-10,10]^2$), and re-scaling the moons (random scaling factor in $[0.2,5]$).
The \textit{moons} meta-train set consists of 300 tasks of 200 examples, while the meta-test set consists of 100 tasks of 200 examples. We randomly split each dataset into support and query of equal size. See \autoref{supp:num_exp} for implementation details.


\begin{figure}[t]
    \centering
    \includegraphics[width=1\linewidth]{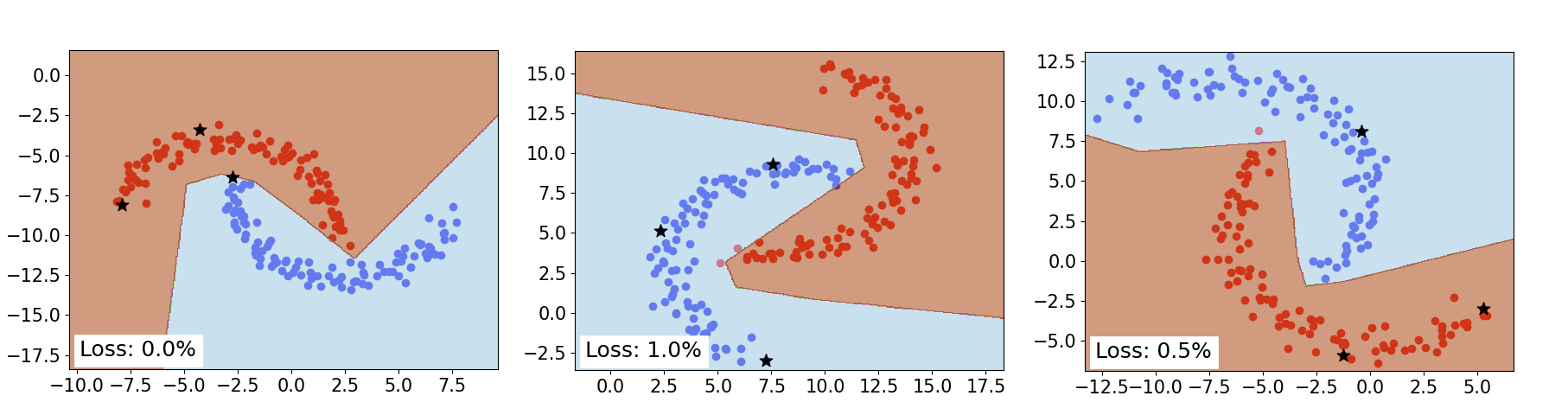}
    \caption{Examples of decision boundaries given by the downstream predictors, with a compression set of size 3 and without message, on test datasets. The stars show the retained points from the sample compressor~$\scriptC_\phia$.  As shown by the axes, each plot is centered and scaled on the moons datapoints.}
    \label{fig:comp_size_3}
    \vspace{-0.3cm}
\end{figure}


\autoref{fig:msg_size_3} displays the decision boundaries of a predictor trained with the PBH model, given a message of size $|\mubf| = 2$, on a random test dataset. We plot the result for many values of the message, displaying its effect on the decision boundary of the resulting downstream predictor; we observe that each dimension of the message encapsulates a unique piece of information about the task at hand.

\setlength{\tabcolsep}{3.5pt}
\renewcommand{\arraystretch}{1.05}
\begin{table*}[ht]
    \caption{Comparison of different meta-learning methods on the MNIST-pixels-swap task. The 95\% confidence interval is reported for generalization bound and test error, computed over 20 test tasks. The best (smallest) result in each column is \textbf{bolded}.}
    \scriptsize
    \centering
    \begin{tabular}{|l|cc|cc|cc|}
    \hline
    \multirow{2}{*}{Algorithm} & \multicolumn{2}{c|}{100 Pixels swap} & \multicolumn{2}{c|}{200 Pixels swap} & \multicolumn{2}{c|}{300 Pixels swap} \\
    & Bound ($\downarrow$) & Test error ($\downarrow$) & Bound ($\downarrow$) & Test error ($\downarrow$) & Bound ($\downarrow$) & Test error ($\downarrow$) \\
    \hline
    \cite{DBLP:conf/icml/PentinaL14} & 0.190 $\pm$ 0.022 & 0.019 $\pm$ 0.001 & 0.240 $\pm$ 0.030 & 0.026 $\pm$ 0.002 & 0.334 $\pm$ 0.036 & 0.038 $\pm$ 0.003\\
    \cite{DBLP:conf/icml/AmitM18} & 0.138 $\pm$ 0.024 & 0.016 $\pm$ 0.001 & 0.161 $\pm$ 0.002 & 0.020 $\pm$ 0.001 & 0.329 $\pm$ 0.081 & 0.040 $\pm$ 0.681 \\
    \cite{DBLP:conf/icml/Guan022} - kl & 0.119 $\pm$ 0.024 & 0.017 $\pm$ 0.001 & 0.189 $\pm$ 0.027 & 0.026 $\pm$ 0.001 & 0.359 $\pm$ 0.042 & 0.030 $\pm$ 0.002\\
    \cite{DBLP:conf/icml/Guan022} - Catoni & 0.093 $\pm$ 0.027 & \textbf{0.015} $\pm$ 0.001 & 0.128 $\pm$ 0.025 & \textbf{0.019} $\pm$ 0.001 & 0.210 $\pm$ 0.035 & \textbf{0.024} $\pm$ 0.001\\
    \cite{DBLP:conf/icml/ZakeriniaBL24} & 0.053 $\pm$ 0.020 & 0.019 $\pm$ 0.1346 & 0.108 $\pm$ 0.037 & 0.026 $\pm$ 0.263 & \textbf{0.149} $\pm$ 0.039 & 0.035 $\pm$ 0.547 \\
    PBH & 0.068\hspace{-0.25mm}$^*$\hspace{-1.25mm} $\pm$ 0.007 & 0.027 $\pm$ 0.005 & 0.112\hspace{-0.25mm}$^*$\hspace{-1.25mm} $\pm$ 0.021 & 0.076 $\pm$ 0.018 & 0.219\hspace{-0.25mm}$^*$\hspace{-1.25mm} $\pm$ 0.031 & 0.186 $\pm$ 0.060 \\
    SCH$_-$ & 0.067 $\pm$ 0.007 & 0.029 $\pm$ 0.007 & 0.129 $\pm$ 0.023 & 0.084 $\pm$ 0.017 & 0.193 $\pm$ 0.038 & 0.162 $\pm$ 0.043 \\
    SCH$_+$ & \textbf{0.035} $\pm$ 0.012 & 0.024 $\pm$ 0.005 & \textbf{0.091} $\pm$ 0.022 & 0.075 $\pm$ 0.019 & 0.177 $\pm$ 0.032 & 0.153 $\pm$ 0.028 \\
    PB SCH & 0.068\hspace{-0.25mm}$^*$\hspace{-1.25mm} $\pm$ 0.007 & 0.027 $\pm$ 0.005 & 0.112\hspace{-0.25mm}$^*$\hspace{-1.25mm} $\pm$ 0.021 & 0.076 $\pm$ 0.018 & 0.219\hspace{-0.25mm}$^*$\hspace{-1.25mm} $\pm$ 0.031 & 0.186 $\pm$ 0.060 \\
    \hline
    Opaque encoder & 0.043 $\pm$ 0.003 & 0.037 $\pm$ 0.006 & 0.092 $\pm$ 0.019 & 0.087 $\pm$ 0.018 & 0.173 $\pm$ 0.030 & 0.159 $\pm$ 0.031 \\
    \hline
    \end{tabular}\\
    \hspace{-86.5mm}$^*$Bound on average over the decoder output.
    \label{tab:pxl_swap_empirical_results}
    \vspace{-2mm}
\end{table*}

\autoref{fig:comp_size_3} displays the decision boundaries of a predictor (trained with the SCH$_-$ meta-predictor, with a compression set of size $c = 3$) for three different test tasks.
We see that the sample compressor selects three examples far from each other, efficiently \emph{compressing} the task, while the reconstructor correctly leverages the information contained in these examples to correctly parameterize the downstream predictor, leading to an almost perfect classification. Recall that SCH$_-$ does not incorporate any message.


\subsection{Test Case: Noisy MNIST}\label{subsec:pixel_swap}

Following \citet{DBLP:conf/icml/AmitM18}, we experiment with three different yet related task environments, based on augmentations of the MNIST dataset \cite{DBLP:journals/pieee/LeCunBBH98}. In each environment, each classification task is created by the random permutation of a given number (100, 200, and 300) of pixels. The pixel permutations are created by a limited number of location swaps to ensure that the tasks stay reasonably related in each environment. In each of the three experiments, the meta-training set consists of 10 tasks of 60'000 training examples, while the meta-test set consists of 20 tasks of 2000 examples.

We compare our approaches to algorithms yielding PAC-Bayesian bounds as benchmarks. The reported bounds concern the generalization property of the trained model on a given test task. We chose a fully connected network with no more than 3 hidden layers and a linear output layer as a backbone, as per the selected benchmarks. We also report the performances of a strawman: an \textit{opaque encoder} outputting nothing, followed by a decoder which input is a predefined sole constant scalar; though the reconstructor can be trained, the hypernetwork always generates the same predictor, no matter the input.
We report the test bounds and test errors on the novel tasks of various methods in \autoref{tab:pxl_swap_empirical_results}.
More details about the experiment setup can be found in \autoref{supp:num_exp}.

When it comes to generalization bounds, our approaches outperform the benchmarks. However, our approaches cannot learn from the task environment, their performances being similar to those of the strawman. This is because the difference between the various tasks is very subtle; such a setting is not well-suited for our approaches, which rely on the encapsulation of the differences between tasks in a message or a compression set. The benchmark methods perform well because the posterior (over all of the downstream predictor), for each test task, is \textit{similar} to the prior. Indeed, the various tasks do not vary much; this is reflected by a single predictor (the strawman, an opaque hypernetwork) yielding competitive performances across all of the test tasks. This matter is empirically confirmed 
by the next experiment.

\subsection{Test Case: Binary MNIST and CIFAR100 Tasks}\label{subsec:new_classes}

\setlength{\tabcolsep}{3.0pt}
\renewcommand{\arraystretch}{1.08}
\begin{table}[t]
    \caption{Comparison of different meta-learning methods on the MNIST and CIFAR100 binary tasks. The 95\% confidence interval is reported for generalization bound and test error, computed over test tasks. The best (smallest) result in each column is \textbf{bolded}.}
    \tiny
    \centering
    \begin{tabular}{|l|cc|cc|cc|}
    \hline
    \multirow{2}{*}{Algorithm} & \multicolumn{2}{c|}{MNIST} & \multicolumn{2}{c|}{CIFAR100}\\
    & Bound ($\downarrow$) & Test error ($\downarrow$) & Bound ($\downarrow$) & Test error ($\downarrow$)\\
    \hline
    \cite{DBLP:conf/icml/PentinaL14} & 0.767 $\pm$ 0.001 & 0.369 $\pm$ 0.223 & 0.801 $\pm$ 0.001 & 0.490 $\pm$ 0.070 \\
    \cite{DBLP:conf/icml/AmitM18} & 1372 $\pm$ 23.36 & 0.351 $\pm$ 0.212 & 950.9 $\pm$ 343.1 & 0.284 $\pm$ 0.120 \\
    \cite{DBLP:conf/icml/Guan022} - kl & 0.754 $\pm$ 0.003 & 0.366 $\pm$ 0.221 & 0.802 $\pm$ 0.001 & 0.489 $\pm$ 0.073 \\
    \cite{DBLP:conf/icml/Guan022} - Cat. & 1.132 $\pm$ 0.021 & 0.351 $\pm$ 0.212 & 1.577 $\pm$ 0.567 & 0.282 $\pm$ 0.122 \\
    \cite{DBLP:conf/icml/000122} & 11.43 $\pm$ 0.005 & 0.366 $\pm$ 0.221 & 10.91 $\pm$ 0.368 & 0.334 $\pm$ 0.139\\
    \cite{DBLP:conf/icml/ZakeriniaBL24} & 0.684 $\pm$ 0.021 & 0.351 $\pm$ 0.212 & 0.953 $\pm$ 0.315 & \textbf{0.281} $\pm$ 0.125 \\
    PBH & 0.597\hspace{-0.25mm}$^*$\hspace{-1.25mm} $\pm$ 0.107 & \textbf{0.150} $\pm$ 0.114 & 0.974\hspace{-0.25mm}$^*$\hspace{-1.25mm} $\pm$ 0.022 & 0.295 $\pm$ 0.103 \\
    SCH$_-$ & 0.352 $\pm$ 0.187 & 0.278 $\pm$ 0.076 & \textbf{0.600} $\pm$ 0.143 & 0.374 $\pm$ 0.118 \\
    SCH$_+$ & \textbf{0.280} $\pm$ 0.148 & 0.155 $\pm$ 0.109 & 0.745 $\pm$ 0.101 & 0.305 $\pm$ 0.142 \\
    PB SCH & 0.597\hspace{-0.25mm}$^*$\hspace{-1.25mm} $\pm$ 0.107 & \textbf{0.150} $\pm$ 0.114 & 0.974\hspace{-0.25mm}$^*$\hspace{-1.25mm} $\pm$ 0.022 & 0.295 $\pm$ 0.103 \\
    \hline
    Opaque encoder & 0.533 $\pm$ 0.104 & 0.497 $\pm$ 0.134 & 0.544 $\pm$ 0.112 & 0.506 $\pm$ 0.101 \\
    \hline
    \end{tabular}\\
    \hspace{-44mm}$^*$Bound on average over the decoder output.
    \label{tab:empirical_results}
\end{table}

\setlength{\tabcolsep}{3.35pt}
\renewcommand{\arraystretch}{1.05}
\begin{table}[t]
    \caption{Selected architecture for the sample compression hypernetwork algorithm (we recall that $c$ corresponds to the compression set size, while $|\mubf|$, $b$ corresponds to the message size); hyperparameter choices can be found in \autoref{supp:num_exp}.}
    \scriptsize
    \centering
    \begin{tabular}{|l|cc|cc|}
    \hline
    \multirow{2}{*}{SC Hypernetwork} & \multicolumn{2}{c|}{MNIST} & \multicolumn{2}{c|}{CIFAR100} \\
    & $c$ & $|\mubf|$, $b$ & $c$ & $|\mubf|$, $b$ \\
    \hline
    PBH & N/A & 128 & N/A & 128 \\
    SCH$_-$ & 8 & N/A & 4 & N/A \\
    SCH$_+$ & 1 & 64 & 1 & 128 \\
    PB SCH & 0 & 128 & 0 & 128 \\
    \hline
    \end{tabular}
    \label{tab:selected_architecture}
        \vspace{-2mm}
\end{table}

\begin{figure*}
    \centering
    \includegraphics[width=\linewidth]{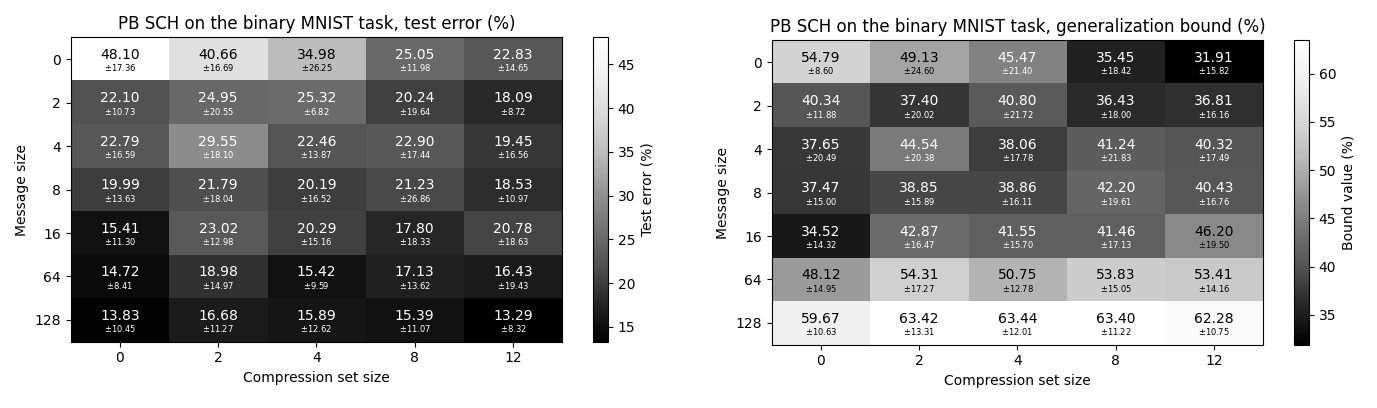}
    \caption{Test error and generalization bound for the PB SCH algorithm as a function of both the compression set size and the message size on binary MNIST tasks. The 95\% confidence interval is reported for both the generalization bound and the test error, computed over 34 test tasks.}
    \label{fig:pbsch_binary_mnist}
    \vspace{-0.3cm}
\end{figure*}

In light of the analysis made in the previous subsection, we now explore an experimental setup where a prior model cannot encompass most of the information of the various tasks. To do so, we create a variety of binary tasks involving the various classes of the MNIST (CIFAR100) dataset, where a task corresponds to a random class versus another one. We create 90 (150) such tasks, where the meta-test set corresponds to all of the tasks involving either label 0 or label~1, chosen at random, leading to a total of 34 (50) meta-test tasks. Each training task contains 2000 (1200) examples from the train split of the MNIST (CIFAR100) original task, while each test task contains at most 2000 (200) examples from the test split of the original task. We consider the same benchmarks as previously. We report the test bounds and test errors on the novel tasks of various methods in \autoref{tab:empirical_results} with their corresponding latent representation information in \autoref{tab:selected_architecture}.

\setlength{\tabcolsep}{1.5pt}
\renewcommand{\arraystretch}{1.08}
\begin{table}[t]
    \caption{KL value of two benchmarks on the pixel swap tasks and the binary MNIST task. The 95\% confidence interval is reported for generalization bound and test error, computed over 20 test tasks.}
    \tiny
    \centering
    \begin{tabular}{|l||c|c|c||c|}
    \hline
    Algorithm & 100 Pixels swap & 200 Pixels swap & 300 Pixels swap & binary MNIST \\
    \hline
    \cite{DBLP:conf/icml/PentinaL14} & 3.833 $\pm$ 0.444 & 5.760 $\pm$ 0.720 & 9.604 $\pm$ 1.035 & 14.02 $\pm$ 0.018 \\
    \cite{DBLP:conf/icml/ZakeriniaBL24} & 63.91 $\pm$ 24.12 & 159.9 $\pm$ 54.78 & 223.9 $\pm$ 58.60 & 661.2 $\pm$ 20.32 \\
    \hline
    \end{tabular}\\
    \label{tab:KL_values}
    \vspace{-0.3cm}
\end{table}

As expected, in such a setting, when it comes to the benchmarks, the posterior for each task is required to be truly different from the prior in order to perform well (as attested by the strawman's test error, being similar to a random guess). We present in \autoref{tab:KL_values} the penalty (KL) value measuring the distance between the prior and posterior for two benchmarks on both the pixel swap experiments and the binary MNIST variant. There is a significant gap between the KL value reported for the various pixel swap tasks and the binary MNIST task, which confirms our insight. Thus, the benchmarks methods generate uninformative generalization bounds, even though their test loss is competitive. On the other hand, most of our approaches achieve competitive empirical performances while also having informative generalization bounds, since the downstream predictors can be truly different from one another without impacting the quality of the bound.

The architectures reported in \autoref{tab:selected_architecture}, along with the empirical performances in \autoref{tab:empirical_results}, confirm that 1) the encoder / the message compressor and the sample compressor correctly distill the particularities of the task at hand, and 2) that the reconstructor is able to utilize this representation to judiciously generate the downstream predictor.

\autoref{fig:pbsch_binary_mnist} depicts the test error and generalization bound for our PB SCH algorithm as a function of both the compression set size and the message size. We recall that Tables \ref{tab:pxl_swap_empirical_results} and~\ref{tab:empirical_results} report the performances of the models obtaining the best validation error. \autoref{fig:pbsch_binary_mnist} helps to grasp the inner workings of our proposed approach: using a larger message seems better-suited for minimizing the test error, but a trade-off between the compression set size and the message size is required to obtain the best bounds. Interestingly, when the message size $|\mubf|$ is restricted to be small, we clearly see the benefit of using a compression set ($c > 0$).

See \autoref{tab:empirical_results_supp} in \autoref{supp:num_exp} for results involving the bound from \autoref{theo:sample_compression_dsc} and \autoref{thm:desintegrated_pbsc}. See also in \autoref{supp:num_exp} a decomposition of the various terms involved in the composition of our bounds and the benchmarks'.

\section{Conclusion}

We developed a new paradigm for deriving generalization bounds in meta-learning by leveraging the PAC-Bayesian framework 
or
the Sample Compression theory. We also present a new generalization bound that permits the coupling of both paradigms. We develop meta-learning hypernetworks based on these results. We show that many PAC-Bayes approaches do not scale when the various tasks in an environment have important discrepancies while our approaches still yield low losses and tight generalization bounds.

The approaches we presented could be enhanced by having the model dynamically select the compression set size and the latent representation (or message) size or using a larger architecture for the reconstruction function, inspired from MotherNet \citep{mueller2024mothernet}.
Finally, since the bound values are not impacted by the complexity of the decoder or the downstream predictor, our approach could be used to get tight generalization bounds for very large models of the multi-billion parameter scale, assuming that the parameters varying across tasks admit a compact representation (e.g., a LoRA adapter).

\clearpage
\section*{Impact Statement}
This paper presents work whose goal is to advance the field of Machine Learning. Since we work for the better certification of machine learning algorithms, we do not see any potentially harmful societal consequences of our work.

\section*{Acknowledgements}
This research is supported by the NSERC/Intact Financial Corporation Industrial Research Chair in Machine Learning for Insurance. Pascal Germain is supported by the Canada CIFAR AI Chair Program, and the NSERC Discovery grant RGPIN-2020-07223. Mathieu Bazinet is supported by a FRQNT B2X scholarship (343192, doi: \url{https://doi.org/10.69777/343192}).
\bibliographystyle{icml2025}
\bibliography{references}

\appendix

\longtrue
\onecolumn
\section{Corollaries of \autoref{theo:sample_compression_dsc}}\label{supp:cor_dsc}
For completeness, we present the corollaries of \autoref{theo:sample_compression_dsc} derived by \citet{bazinet2024}.

\begin{corollary}[Corollary~4 of \citet{bazinet2024}]
     In the setting of \autoref{theo:sample_compression_dsc}, with 
     $$\Delta_C(q,p) = -\ln(1-(1-e^{-C})p) - Cq \quad \mbox{(where $C > 0$)\,,}$$
     with probability at least $1-\delta$ over the draw of $S \sim \calD^m$, we have
        \begin{align*}
   & \forall \mathbf{j} \in J, \omega \in M(|\mathbf{j}|)~:~  \\ 
    &~~~~\mathcal{L}_D(\scriptR(S_{\mathbf{j}},\omega))\leq \frac{1}{1-e^{-C}}\qty[1-\exp\qty( -C\widehat{\mathcal{L}}_{S_{\overline{\bfj}}}(\scriptR(S_{\mathbf{j}},\omega)) + \frac{\ln(P_{J}(\bfj)\cdot P_{M(|\mathbf{j}|)}(\omega)\cdot\delta)}{m-\m})].
\end{align*}
\end{corollary}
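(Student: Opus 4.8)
The statement is nothing but \autoref{theo:sample_compression_dsc} specialized to the comparator function $\Delta_C(q,p) = -\ln(1-(1-e^{-C})p) - Cq$, so the plan is to (i) bound the moment term $\Jcal_{\Delta_C}(\bfj,\omega)$ by $1$, and then (ii) algebraically invert the inequality given by that theorem to isolate $\calL_{\calD}$.

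For step (i), I would fix $\bfj$ and $\omega$, abbreviate $h = \scriptR(T_{\bfj},\omega)$, and recall that $|\overline{\bfj}| = m - |\bfj|$. Substituting $\Delta_C$ into the exponent and using $|\overline{\bfj}|\,\hatL_{T_{\overline{\bfj}}}(h) = \sum_{i\in\overline{\bfj}}\ell(h(\xbf_i),y_i)$ gives
\[
e^{|\overline{\bfj}|\,\Delta_C(\hatL_{T_{\overline{\bfj}}}(h),\,\calL_{\calD}(h))}
\;=\; \bigl(1-(1-e^{-C})\calL_{\calD}(h)\bigr)^{-|\overline{\bfj}|}\;\prod_{i\in\overline{\bfj}} e^{-C\,\ell(h(\xbf_i),y_i)}.
\]
The leading factor depends on $T_{\bfj}$ only, so I would take the expectation over $T_{\overline{\bfj}}\sim\calD^{|\overline{\bfj}|}$ conditionally on $T_{\bfj}$; by independence the product factorizes into $\bigl(\E_{(\xbf,y)\sim\calD}e^{-C\ell(h(\xbf),y)}\bigr)^{|\overline{\bfj}|}$. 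The crux is the elementary inequality $e^{-Cx}\le 1-(1-e^{-C})x$ for $x\in[0,1]$ (convexity of $x\mapsto e^{-Cx}$, which therefore lies below the chord joining $x=0$ and $x=1$; equality at $x\in\{0,1\}$, which is why the bound is tight for binary losses), yielding $\E_{(\xbf,y)\sim\calD}e^{-C\ell(h(\xbf),y)}\le 1-(1-e^{-C})\calL_{\calD}(h)$. This cancels the leading factor exactly, so the conditional expectation is $\le 1$; averaging over $T_{\bfj}$ then gives $\Jcal_{\Delta_C}(\bfj,\omega)\le 1$. Note $1-(1-e^{-C})\calL_{\calD}(h)\ge e^{-C}>0$ whenever $\calL_{\calD}(h)\in[0,1]$, so the negative power and the logarithms above are well defined.

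For step (ii), plugging $\Jcal_{\Delta_C}(\bfj,\omega)\le 1$ into \autoref{theo:sample_compression_dsc} gives, with probability at least $1-\delta$ and simultaneously for all admissible $\bfj,\omega$ (now with $h = \scriptR(S_{\bfj},\omega)$),
\[
-\ln\bigl(1-(1-e^{-C})\calL_{\calD}(h)\bigr) \;\le\; C\,\hatL_{S_{\overline{\bfj}}}(h) \;-\; \frac{\ln\bigl(P_J(\bfj)\,P_{M(|\bfj|)}(\omega)\,\delta\bigr)}{m-|\bfj|}.
\]
Exponentiating, solving the resulting linear inequality for $\calL_{\calD}(h)$, and dividing by $1-e^{-C}>0$ produces the claimed closed form. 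The only genuinely non-routine ingredient is the moment bound in step (i): the comparator $\Delta_C$ is reverse-engineered precisely so that the exponential moment of $-C|\overline{\bfj}|\hatL_{T_{\overline{\bfj}}}(h)$ telescopes against the $(1-(1-e^{-C})\calL_{\calD})^{-|\overline{\bfj}|}$ factor; everything else is bookkeeping. I would also state explicitly that $\ell$ is $[0,1]$-valued, since both the chord inequality and the positivity of $1-(1-e^{-C})\calL_{\calD}$ rely on it.
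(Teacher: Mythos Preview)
Your proposal is correct and is exactly the standard Catoni-style argument: the comparator $\Delta_C$ is designed so that the exponential moment factorizes and is cancelled by the chord inequality $e^{-Cx}\le 1-(1-e^{-C})x$ for $x\in[0,1]$, giving $\Jcal_{\Delta_C}(\bfj,\omega)\le 1$; the rest is algebra. The paper does not spell out a proof here (it merely restates the corollary from \citet{bazinet2024}), but your derivation is precisely the one underlying that result, and your explicit remark that $\ell$ must be $[0,1]$-valued is a useful clarification, since \autoref{theo:sample_compression_dsc} is stated for real-valued losses while this corollary silently needs the range restriction.
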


\begin{corollary}[Corollary~6 of \citet{bazinet2024}]
\label{cor:6ofBazinet2024}
     In the setting of \autoref{theo:sample_compression_dsc}, with $$\Delta(q, p) = \kl(q, p) = q\cdot\ln \frac{q}{p} + (1-q)\cdot\ln\frac{1-q}{1-p}\,,$$  with probability at least $1-\delta$ over the draw of $S \sim \calD^m$, we have
        \begin{align*}
    \forall \mathbf{j} \in J, \omega \in M(|\mathbf{j}|)~:
    ~\kl\qty(\widehat{\mathcal{L}}_{S_{\overline{\bfj}}}(\scriptR(S_{\mathbf{j}},\omega)), \mathcal{L}_D(\scriptR(S_{\mathbf{j}},\omega)))\leq \frac{1}{m - |\bfj|}\qty[\ln\qty(\frac{2\sqrt{m-|\bfj|}}{P_{J}(\bfj)\cdot P_{M(|\mathbf{j}|)}(\omega)\cdot\delta})].
\end{align*}
\end{corollary}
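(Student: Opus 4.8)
The plan is to obtain the statement as a one-step specialization of \cref{theo:sample_compression_dsc}: instantiate that theorem with the comparator $\Delta = \kl$, then replace the abstract moment term $\Jcal_{\kl}(\bfj,\omega)$ by its closed form $2\sqrt{m-\m}$. The first move is immediate. Substituting $\Delta(q,p) = \kl(q,p)$ into \cref{theo:sample_compression_dsc} gives, with probability at least $1-\delta$ over $S\sim\calD^m$, simultaneously for all $\bfj\in J$ and all $\omega\in M(|\bfj|)$,
\[
\kl\qty(\widehat{\mathcal{L}}_{S_{\overline{\bfj}}}(\scriptR(S_{\bfj},\omega)),\, \mathcal{L}_{\mathcal{D}}(\scriptR(S_{\bfj},\omega)))\;\le\;\frac{1}{m-|\bfj|}\ln\!\frac{\Jcal_{\kl}(\bfj,\omega)}{P_J(\bfj)\,P_{M(|\bfj|)}(\omega)\,\delta}\,,
\]
so, since $\ln$ is increasing, it remains only to show $\Jcal_{\kl}(\bfj,\omega)\le 2\sqrt{m-\m}$ for every fixed $\bfj$ and $\omega$. (For $\kl$ to be well defined on both slots, this tacitly restricts \cref{theo:sample_compression_dsc} to losses $\ell:\calY\times\calY\to[0,1]$.)

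Next I would bound the moment. By definition, $\Jcal_{\kl}(\bfj,\omega)$ is a double expectation, over an independent ``would-be'' compression set $T_{\bfj}$ and the remaining $|\overline{\bfj}| = m-\m$ i.i.d.\ examples $T_{\overline{\bfj}}$, of $\exp\!\big((m-\m)\,\kl(\widehat{\mathcal{L}}_{T_{\overline{\bfj}}}(\scriptR(T_{\bfj},\omega)),\, \mathcal{L}_{\mathcal{D}}(\scriptR(T_{\bfj},\omega)))\big)$. First I would condition on $T_{\bfj}$: then $h=\scriptR(T_{\bfj},\omega)$ is a fixed predictor whose generalization loss $p=\mathcal{L}_{\mathcal{D}}(h)\in[0,1]$ is a constant, and $\widehat{\mathcal{L}}_{T_{\overline{\bfj}}}(h)$ is the mean of $k:=m-\m$ i.i.d.\ $[0,1]$-valued variables of expectation $p$, independent of $h$. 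The classical Maurer-type moment inequality --- the same one that yields $\calI_{\kl}(m)\le 2\sqrt m$ in the discussion following \cref{thm:general_pac_bayes} --- gives $\E\,e^{k\,\kl(\widehat p,p)}\le 2\sqrt k$ for such an average $\widehat p$. Applying it conditionally yields $\E_{T_{\overline{\bfj}}}\,e^{(m-\m)\,\kl(\widehat{\mathcal{L}}_{T_{\overline{\bfj}}}(h),p)}\le 2\sqrt{m-\m}$ for every realization of $T_{\bfj}$; taking the outer expectation $\E_{T_{\bfj}}$ preserves this, so $\Jcal_{\kl}(\bfj,\omega)\le 2\sqrt{m-\m}$. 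Plugging this into the display above completes the proof.

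The main (and essentially only) obstacle is the moment inequality $\E\,e^{k\,\kl(\widehat p,p)}\le 2\sqrt k$; everything else is pure specialization of \cref{theo:sample_compression_dsc}, which already absorbs the uniformity over $\bfj$ and $\omega$ and the confidence level. That lemma is standard: one reduces the worst case over $[0,1]$-valued summands with a fixed mean to the Bernoulli case (by convexity / stochastic domination), then expands $\E\,e^{k\,\kl(\widehat p,p)} = \sum_{j=0}^{k}\binom{k}{j}p^j(1-p)^{k-j}e^{k\,\kl(j/k,p)}$ and bounds this binomial sum by $2\sqrt k$ uniformly in $p\in[0,1]$. If one is willing to cite this inequality, the corollary follows in two lines; otherwise reproducing it is the one place that requires genuine work.
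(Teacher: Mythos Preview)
Your proposal is correct and matches the paper's approach: the paper presents this corollary without proof (it is quoted from \citet{bazinet2024}), and the only hint it gives is the remark after \cref{theo:sample_compression_dsc} that the choice $\Delta=\kl$ yields $\Jcal_{\kl}(\bfj,\omega)\le 2\sqrt{m-\m}$, which is exactly the specialization-plus-Maurer argument you outline. Your conditioning step on $T_{\bfj}$ and appeal to the Maurer moment inequality are the standard way to justify that bound, so nothing is missing.
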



\begin{corollary}[Corollary~7 of \citet{bazinet2024}]
     In the setting of \autoref{theo:sample_compression_dsc}, for any $\lambda > 0$, with $\Delta(q,p) = \lambda(p-q)$, with a $\varsigma^2$-sub-Gaussian loss function $\ell : \calY \times \calY \to \R$,  with probability at least $1-\delta$ over the draw of $S \sim \calD^m$, we have
        \begin{align*}
    \forall \mathbf{j} \in J, \omega \in M(|\mathbf{j}|)~:~
    \mathcal{L}_D(\scriptR(S_{\mathbf{j}},\omega))\leq \widehat{\mathcal{L}}_{S_{\overline{\bfj}}}(\scriptR(S_{\mathbf{j}},\omega)) +\frac{\lambda \varsigma^2}{2} - \frac{\ln(P_{J}(\bfj)\cdot P_{M(|\mathbf{j}|)}(\omega)\cdot\delta)}{\lambda(m - |\bfj|)}.
\end{align*}
\end{corollary}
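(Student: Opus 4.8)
The plan is to specialize \autoref{theo:sample_compression_dsc} to the affine comparator $\Delta(q,p)=\lambda(p-q)$ and then bound the moment-generating quantity $\Jcal_{\Delta}$ using the sub-Gaussian assumption on the loss.

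First I would substitute $\Delta(q,p)=\lambda(p-q)$ into the conclusion of \autoref{theo:sample_compression_dsc}. Writing $h_{\bfj,\omega}:=\scriptR(S_{\bfj},\omega)$, the theorem gives, with probability at least $1-\delta$ and simultaneously for all $\bfj\in J$ and all messages $\omega$, the inequality $\lambda\big(\calL_{\calD}(h_{\bfj,\omega})-\hatL_{S_{\overline{\bfj}}}(h_{\bfj,\omega})\big)\le\tfrac{1}{m-|\bfj|}\ln\tfrac{\Jcal_{\Delta}(\bfj,\omega)}{P_{J}(\bfj)\,P_{\Omega}(\omega)\,\delta}$. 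Since $\lambda>0$, dividing through by $\lambda$ and moving $\hatL_{S_{\overline{\bfj}}}(h_{\bfj,\omega})$ to the right-hand side isolates $\calL_{\calD}(h_{\bfj,\omega})$; it then remains only to show that $\Jcal_{\Delta}(\bfj,\omega)\le e^{|\overline{\bfj}|\lambda^{2}\varsigma^{2}/2}$ for every $\bfj,\omega$. Plugging this estimate in, splitting the logarithm, and using $|\overline{\bfj}|=m-|\bfj|$ to simplify $\tfrac{|\overline{\bfj}|\lambda^{2}\varsigma^{2}}{2\lambda(m-|\bfj|)}=\tfrac{\lambda\varsigma^{2}}{2}$ yields exactly the claimed bound.

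Second, I would establish the estimate on $\Jcal_{\Delta}$. Fix $\bfj$ and $\omega$. In the definition of $\Jcal_{\Delta}(\bfj,\omega)$ the samples $T_{\bfj}$ and $T_{\overline{\bfj}}$ are drawn independently, and the reconstructed predictor $h_{T_{\bfj}}:=\scriptR(T_{\bfj},\omega)$ is a function of $T_{\bfj}$ alone, so conditionally on $T_{\bfj}$ it is a fixed predictor with respect to the randomness of $T_{\overline{\bfj}}$. Expanding the exponent, $|\overline{\bfj}|\,\Delta\big(\hatL_{T_{\overline{\bfj}}}(h_{T_{\bfj}}),\calL_{\calD}(h_{T_{\bfj}})\big)=\lambda\sum_{(\xbf,y)\in T_{\overline{\bfj}}}\big(\calL_{\calD}(h_{T_{\bfj}})-\ell(h_{T_{\bfj}}(\xbf),y)\big)$, a sum of $|\overline{\bfj}|$ i.i.d.\ centered terms. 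Taking the inner expectation over $T_{\overline{\bfj}}$ factorizes the exponential into $|\overline{\bfj}|$ identical factors $\E_{(\xbf,y)\sim\calD}\,e^{\lambda(\calL_{\calD}(h_{T_{\bfj}})-\ell(h_{T_{\bfj}}(\xbf),y))}$, each bounded by $e^{\lambda^{2}\varsigma^{2}/2}$ by the $\varsigma^{2}$-sub-Gaussian assumption on $\ell$. The resulting bound $e^{|\overline{\bfj}|\lambda^{2}\varsigma^{2}/2}$ does not depend on $T_{\bfj}$, so the outer expectation over $T_{\bfj}$ preserves it.

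I expect the only delicate point to be stating precisely what ``$\varsigma^{2}$-sub-Gaussian loss'' means here — namely that for every predictor $h$ the centered variable $\ell(h(\xbf),y)-\calL_{\calD}(h)$ has moment-generating function at most $e^{t^{2}\varsigma^{2}/2}$ for all $t\in\R$ — and verifying that this is exactly the hypothesis consumed in the per-term estimate; everything else is the routine algebra of rearranging the affine inequality and substituting $|\overline{\bfj}|=m-|\bfj|$.
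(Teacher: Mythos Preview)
Your proposal is correct and is exactly the natural derivation: specialize \autoref{theo:sample_compression_dsc} to $\Delta(q,p)=\lambda(p-q)$, bound $\Jcal_{\Delta}(\bfj,\omega)$ by factorizing the exponential over the i.i.d.\ complement samples and applying the $\varsigma^{2}$-sub-Gaussian MGF bound termwise, then rearrange. The paper does not give its own proof of this corollary (it merely restates it from \citet{bazinet2024}), and your argument is the standard one.
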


\section{General PAC-Bayes Sample Compression Theorems}
\label{section:proofPBSC}

We derive two new PAC-Bayes Sample compression theorems for real-valued losses, both for the loss on the complement set and on the train set. \autoref{thm:pbsc_2} extends the setting of \citet{pbsc_laviolette_2005, thiemann2017strongly}, whilst \autoref{thm:pbsc_trainset} extends the setting of \citet{germain2011pac, risk_bounds}.

\subsection{First General PAC-Bayes Sample Compression Theorem
}
We first present a new general PAC-Bayes Sample Compression theorem, from which we will derive a sample compression bound for continuous messages. 
Interestingly, this theorem is directly at the intersection between PAC-Bayes theory and sample compression theory. Indeed, if we restrict the model to have no compression set, this bound reduces to \autoref{thm:general_pac_bayes}. Moreover, if we restrict the model to a discrete family $\Omega$ and to Dirac measures as posteriors on $J$ and $\Omega$, this bound is almost exactly reduced to \autoref{theo:sample_compression_dsc}, albeit being possibly slightly less tight, as the complexity term denominator $m-\m$ of the latter is replaced by the worst case: $m-\max_{\bfj \in J} \m$ . 

\begin{restatable}[PAC-Bayes Sample Compression]{theorem}{pbsc}\label{thm:pbsc_2}
For any distribution $\calD$ over $\calX \times \calY$, for any reconstruction function $\scriptR$, for any set $J \subseteq \scriptP(\mathbf{m})$, for any set of messages $\Omega$, for any data-independent prior distribution $P$ over $J \times \Omega$, for any loss $\ell : \calY \times \calY \to [0,1]$, for any convex function $\Delta : [0,1] \times [0,1] \to \R$ and for any $\delta \in (0,1]$,  with probability at least $1-\delta$ over the draw of $S \sim \calD^m$, we have
\iflong
\begin{align*}
 &\forall Q \text{ over } J \times \Omega: \\ 
&\Delta\qty(\E_{(\bfj, \omega) \sim Q} \hatL_{S_{\overline{\bfj}}}\qty(\scriptR(S_{\bfj}, \omega)),\E_{(\bfj, \omega) \sim Q} \calL_{\calD}\qty(\scriptR(S_{\bfj}, \omega))) \leq \frac{1}{m-\max_{\bfj \in J} \m}\qty[\KL(Q||P) + \ln\qty(\frac{\mathcal{A}_{\Delta}(m)}{\delta})],
    \end{align*}
\else
\begin{align*}
 &\forall Q \text{ over } J \times \Omega: \\ 
&\Delta\qty(\E_{(\bfj, \omega) \sim Q} \hatL_{S_{\overline{\bfj}}}\qty(\scriptR(S_{\bfj}, \omega)),\E_{(\bfj, \omega) \sim Q} \calL_{\calD}\qty(\scriptR(S_{\bfj}, \omega))) \\ 
&\leq \frac{1}{m-\max_{\bfj \in J} \m}\qty[\KL(Q||P) + \ln\qty(\frac{\mathcal{A}_{\Delta}(m)}{\delta})]
    \end{align*}
\fi
    with 
\iflong
\begin{equation*}
\mathcal{A}_{\Delta}(m) = 
    \E_{(\bfj, \omega) \sim P} \E_{T_{\bfj} \sim \calD^{\m}} \E_{T_{\overline{\bfj}} \sim \calD^{m-\m}}   e^{(m-\m)\Delta\qty(\hatL_{T_{\overline{\bfj}}}(\scriptR(T_{\bfj}, \omega)),  \calL_{\calD}(\scriptR(T_{\bfj}, \omega)))}.
\end{equation*}
\else
\begin{equation*}
\mathcal{A}_{\Delta}(m) = 
    \E_{(\bfj, \omega) \sim P} \E_{T \sim \calD^{m}} e^{|\overline{\bfj}|\Delta\qty(\hatL_{T_{\overline{\bfj}}}(\scriptR(T_{\bfj}, \omega)),  \calL_{\calD}(\scriptR(T_{\bfj}, \omega)))}.
\end{equation*}
\fi
\end{restatable}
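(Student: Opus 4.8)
The plan is to follow the standard PAC-Bayesian recipe — exponential moment method plus the Donsker–Varadhan change-of-measure inequality — but adapted so that the ``hypothesis'' being randomized is the pair $(\bfj,\omega)\in J\times\Omega$, and the empirical quantity on which the loss is measured is the complement set $S_{\overline{\bfj}}$ rather than the full sample $S$. First I would fix the data-independent prior $P$ over $J\times\Omega$ and define, for each realization $S\sim\calD^m$, the random variable
\[
X_S(\bfj,\omega) \;=\; (m-|\overline{\bfj}|')\,\Delta\!\qty(\hatL_{S_{\overline{\bfj}}}(\scriptR(S_{\bfj},\omega)),\, \calL_{\calD}(\scriptR(S_{\bfj},\omega)))\,,
\]
where I will ultimately want the exponent $(m-|\overline{\bfj}|)$ replaced by the uniform-over-$J$ lower bound $m-\max_{\bfj\in J}|\bfj|$; I return to that point below. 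The key object is $\E_{S\sim\calD^m}\E_{(\bfj,\omega)\sim P}\, e^{X_S(\bfj,\omega)}$, which by Fubini equals $\E_{(\bfj,\omega)\sim P}\E_{S\sim\calD^m} e^{|\overline{\bfj}|\Delta(\cdots)}$. Now for a \emph{fixed} $(\bfj,\omega)$, the predictor $\scriptR(S_{\bfj},\omega)$ depends on $S$ only through the $|\bfj|$ coordinates indexed by $\bfj$, so conditionally on $S_{\bfj}$ the $m-|\bfj|$ points of $S_{\overline{\bfj}}$ are i.i.d.\ $\calD$ and independent of the predictor; hence $\E_{S\sim\calD^m}e^{|\overline{\bfj}|\Delta(\cdots)} = \E_{T_{\bfj}\sim\calD^{|\bfj|}}\E_{T_{\overline{\bfj}}\sim\calD^{m-|\bfj|}} e^{|\overline{\bfj}|\Delta(\hatL_{T_{\overline{\bfj}}}(\scriptR(T_{\bfj},\omega)),\calL_{\calD}(\scriptR(T_{\bfj},\omega)))}$. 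Taking the $P$-expectation recovers exactly $\Acal_\Delta(m)$ as defined in the statement.

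Next I would apply Markov's inequality: with probability at least $1-\delta$ over $S\sim\calD^m$, $\E_{(\bfj,\omega)\sim P}\, e^{X_S(\bfj,\omega)} \le \tfrac{1}{\delta}\,\Acal_\Delta(m)$. On this event, for every posterior $Q$ over $J\times\Omega$ I invoke the change-of-measure (Donsker–Varadhan) inequality,
\[
\E_{(\bfj,\omega)\sim Q}\, X_S(\bfj,\omega) \;\le\; \KL(Q\|P) + \ln \E_{(\bfj,\omega)\sim P}\, e^{X_S(\bfj,\omega)} \;\le\; \KL(Q\|P) + \ln\tfrac{\Acal_\Delta(m)}{\delta}\,.
\]
Finally I use convexity of $\Delta$ and Jensen's inequality on the left-hand side to pass the $Q$-expectation inside: $\Delta\qty(\E_{Q}\hatL_{S_{\overline{\bfj}}}(\scriptR(S_{\bfj},\omega)),\,\E_{Q}\calL_{\calD}(\scriptR(S_{\bfj},\omega))) \le \E_Q\,\Delta(\hatL_{S_{\overline{\bfj}}},\calL_{\calD})$. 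Dividing through by $m-\max_{\bfj\in J}|\bfj|$ yields the claimed bound, provided the exponent in $X_S$ can be taken to be the constant $m-\max_{\bfj\in J}|\bfj|$ uniformly.

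That uniformization is the step I expect to be the main obstacle, and the one needing care. The honest exponent is $|\overline{\bfj}| = m-|\bfj|$, which varies with $\bfj$; to get a single $Q$-independent and $\bfj$-independent denominator I would either (i) define $X_S(\bfj,\omega) = (m-\max_{\bfj'\in J}|\bfj'|)\Delta(\cdots)$ from the outset and check that the per-$(\bfj,\omega)$ exponential-moment computation still goes through — it does, since for fixed $(\bfj,\omega)$ we may harmlessly use any exponent $\le |\overline{\bfj}|$ at the cost of enlarging $\Acal_\Delta(m)$, but the statement keeps exponent $|\overline{\bfj}|$ in $\Acal_\Delta$, so one must instead argue that $e^{(m-\max|\bfj'|)\Delta}\le e^{|\overline{\bfj}|\Delta}$ does \emph{not} hold in general when $\Delta$ can be negative — or (ii), more cleanly, keep exponent $|\overline{\bfj}|$ throughout the moment bound (so $\Acal_\Delta$ is exactly as written), obtain $\E_Q |\overline{\bfj}|\,\Delta(\cdots) \le \KL(Q\|P)+\ln\tfrac{\Acal_\Delta}{\delta}$, and only at the very end lower-bound $|\overline{\bfj}|\ge m-\max_{\bfj\in J}|\bfj|$ — but this requires $\Delta(\cdots)\ge 0$, which is not assumed. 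The resolution I would pursue: note $\Delta$ convex on $[0,1]^2$ with the bound being informative only when $\Delta\ge 0$, and handle the $\Delta<0$ region separately (the bound is trivially satisfied there since the right side is a sum of nonnegative terms — $\KL\ge 0$ — plus $\ln(\Acal_\Delta/\delta)$, which can be controlled). So the crux is verifying that replacing $|\overline{\bfj}|$ by its uniform lower bound is legitimate on the event where the statement is non-trivial; everything else is the textbook PAC-Bayes argument, and the degenerate cases ($J$ with no compression, or Dirac posteriors) should be checked at the end to confirm the claimed reductions to \autoref{thm:general_pac_bayes} and \autoref{theo:sample_compression_dsc}.
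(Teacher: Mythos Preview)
Your proposal is correct and matches the paper's approach almost exactly: the paper applies Jensen, then Donsker--Varadhan change of measure, then Markov (your ordering swaps Markov and the change of measure, which is immaterial), and it resolves the uniformization step via your option~(i) --- it fixes the constant exponent $\eta = m-\max_{\bfj\in J}|\bfj|$ from the outset, carries it through, and at the very last line upper-bounds $e^{\eta\Delta}\le e^{(m-|\bfj|)\Delta}$ to land on $\Acal_\Delta(m)$ exactly as stated.

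Your worry about that last inequality is well-founded and worth recording: the paper writes only ``as the exponential is monotonically increasing and $m-\max_{\bfj\in J}|\bfj|\le m-|\bfj|$'' without addressing the sign of $\Delta$, so the step tacitly assumes $\Delta\ge 0$. This is harmless for the nonnegative comparators ($\kl$, $2(q-p)^2$) driving the main corollaries, but for general convex $\Delta$ (e.g.\ the Catoni or linear comparators, which can be negative) the paper's own argument has the same gap you identified --- so you have not missed anything the paper supplies.
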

%

\begin{proof}
    With $\eta > 0$, our goal is to bound the following expression
    \begin{equation*}
        \eta\Delta\qty(\E_{(\bfj, \omega) \sim Q} \hatL_{S_{\overline{\bfj}}}\qty(\scriptR(S_{\bfj}, \omega)),\E_{(\bfj, \omega) \sim Q} \calL_{\calD}\qty(\scriptR(S_{\bfj}, \omega))).
    \end{equation*}
We follow the proof of the General PAC-Bayes bound for real-valued losses of \citet{risk_bounds}. We first apply Jensen's inequality and then use the change of measure inequality to obtain the following result.
\begin{align*}
    \forall Q \text{ over } J \times \Omega: \quad &\eta\Delta\qty(\E_{(\bfj, \omega) \sim Q} \hatL_{S_{\overline{\bfj}}}\qty(\scriptR(S_{\bfj}, \omega)),\E_{(\bfj, \omega) \sim Q} \calL_{\calD}\qty(\scriptR(S_{\bfj}, \omega))) \\
    &\leq \E_{(\bfj, \omega) \sim Q} \eta\Delta\qty( \hatL_{S_{\overline{\bfj}}}\qty(\scriptR(S_{\bfj}, \omega)), \calL_{\calD}\qty(\scriptR(S_{\bfj}, \omega))) \tag{Jensen's Inequality}\\
    &\leq \KL(Q||P) + \ln \qty(\E_{(\bfj, \omega) \sim P} e^{\eta\Delta\qty( \hatL_{S_{\overline{\bfj}}}\qty(\scriptR(S_{\bfj}, \omega)), \calL_{\calD}\qty(\scriptR(S_{\bfj}, \omega)))}). \tag{Change of measure}
\end{align*}

Using Markov's inequality, we know that with probability at least $1-\delta$ over the sampling of $S \sim \calD^n$,  we have
\begin{align*}
    &\forall Q \text{ over } J \times \Omega : \\ 
    & \eta\Delta\qty(\E_{(\bfj, \omega) \sim Q} \hatL_{S_{\overline{\bfj}}}\qty(\scriptR(S_{\bfj}, \omega)),\E_{(\bfj, \omega) \sim Q} \calL_{\calD}\qty(\scriptR(S_{\bfj}, \omega))) \leq \KL(Q||P) + \ln \qty(\frac{1}{\delta} \E_{T \sim \calD^m}\E_{(\bfj, \omega) \sim P} 
    \!\!\!\!\!\!e^{\eta\Delta\qty( \hatL_{T_{\overline{\bfj}}}\qty(\scriptR(T_{\bfj}, \omega)), \calL_{\calD}\qty(\scriptR(T_{\bfj}, \omega)))}).
\end{align*}

By choosing to define $Q$ and $P$ on $J \times \Omega$ instead of on the hypothesis class, the prior $P$ is independent of the dataset. We can then swap the expectations to finish the proof. 

We use the independence of the prior to the dataset and the \emph{i.i.d.} assumption to separate $T_{\bfj}$ and $T_{\overline{\bfj}} = T \setminus T_{\bfj}$ to obtain the following results.
\begin{align*}
    &\E_{T \sim \calD^m}\E_{(\bfj, \omega) \sim P} e^{\eta\Delta\qty( \hatL_{T_{\overline{\bfj}}}\qty(\scriptR(T_{\bfj}, \omega)), \calL_{\calD}\qty(\scriptR(T_{\bfj}, \omega)))} \\
    &= \E_{(\bfj, \omega) \sim P} \E_{T \sim \calD^m} e^{\eta\Delta\qty( \hatL_{T_{\overline{\bfj}}}\qty(\scriptR(T_{\bfj}, \omega)), \calL_{\calD}\qty(\scriptR(T_{\bfj}, \omega)))} \tag{Independence of the prior} \\ 
    &= \E_{(\bfj, \omega) \sim P} \E_{T_{\bfj} \sim \calD^{\m}} \E_{T_{\overline{\bfj}} \sim \calD^{m-\m}} e^{\eta\Delta\qty( \hatL_{T_{\overline{\bfj}}}\qty(\scriptR(T_{\bfj}, \omega)), \calL_{\calD}\qty(\scriptR(T_{\bfj}, \omega)))}. \tag{\emph{i.i.d.} assumption}
\end{align*}

For all $\bfj \in J$, we need to bound the moment generating function
\begin{equation*}
    \E_{T_{\bfj} \sim \calD^{\m}} \E_{T_{\overline{\bfj}} \sim \calD^{m-\m}} e^{\eta \Delta\qty(\hatL_{T_{\overline{\bfj}}}(\scriptR(T_{\bfj}, \omega)), \calL_{\calD}\qty(\scriptR(T_{\bfj}, \omega)))}.
\end{equation*}
To bound the moment generating function using the usual PAC-Bayes techniques, we need $\eta \leq m-\m$ for all $\bfj \in J$. Thus, the largest value of $\eta$ (which gives the tightest bound) that can be used is $\eta = m-\max_{\bfj \in J} \m$. 

As the exponential is monotonically increasing and $m-\max_{\bfj \in J} \m \leq m-\m \forall \bfj$, we have
\begin{equation*}
    \E_{T_{\bfj} \sim \calD^{\m}} \E_{T_{\overline{\bfj}} \sim \calD^{m-\m}} e^{\eta \Delta\qty(\hatL_{T_{\overline{\bfj}}}(\scriptR(T_{\bfj}, \omega)), \calL_{\calD}\qty(\scriptR(T_{\bfj}, \omega)))} \leq \E_{T_{\bfj} \sim \calD^{\m}} \E_{T_{\overline{\bfj}} \sim \calD^{m-\m}} e^{(m-\m) \Delta\qty(\hatL_{T_{\overline{\bfj}}}(\scriptR(T_{\bfj}, \omega)), \calL_{\calD}\qty(\scriptR(T_{\bfj}, \omega)))} \eqqcolon \Acal_{\Delta}(m).
\end{equation*}
\end{proof}

\pagebreak

\subsection{Second General PAC-Bayes Sample Compression Theorem
}

We now extend the setting of \citet{risk_bounds}, by making the bound rely on $\hatL_S(\cdot)$ instead of $\hatL_{S_{\overline{\bfj}}}(\cdot)$.

\begin{theorem}[PAC-Bayes Sample Compression]\label{thm:pbsc_trainset}
For any distribution $\calD$ over $\calX \times \calY$, for any reconstruction function $\scriptR$, for any set $J \subseteq \scriptP(\mathbf{m})$, for any set of messages $\Omega$, for any data-independent prior distribution $P$ over $J \times \Omega$, for any loss $\ell : \calY \times \calY \to [0,1]$, for any convex function $\Delta : [0,1] \times [0,1] \to \R$, with $B = \sup_{q,p \in [0,1]}\Delta(q,p)$, and for any $\delta \in (0,1]$,  with probability at least $1-\delta$ over the draw of $S \sim \calD^m$, we have 
\begin{align*}
 &\forall Q \text{ over } J \times \Omega: \\ 
&\Delta\qty(\E_{(\bfj, \omega) \sim Q} \hatL_{S}\qty(\scriptR(S_{\bfj}, \omega)),\E_{(\bfj, \omega) \sim Q} \calL_{\calD}\qty(\scriptR(S_{\bfj}, \omega))) \leq \frac{1}{m-\max_{\bfj \in J} \m}\qty[\KL(Q||P) + \ln\qty(\frac{\Acal'_{\Delta}(m)}{\delta})],
    \end{align*}
    with 
    \begin{equation*}
    \Acal'_{\Delta}(m) = 
        \E_{(\bfj, \omega) \sim P} e^{B\frac{(m-\max_{\bfj \in J}\m)\m}{m}}\E_{T_{\bfj} \sim \calD^{\m}} \E_{T_{\overline{\bfj}} \sim \calD^{m-\m}}  e^{(m-\m)\Delta\qty(\hatL_{T_{\overline{\bfj}}}(\scriptR(T_{\bfj}, \omega)),  \calL_{\calD}(\scriptR(T_{\bfj}, \omega)))}.
\end{equation*}
\end{theorem}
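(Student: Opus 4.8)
\textbf{Proof plan for \autoref{thm:pbsc_trainset}.}
The plan is to mirror the proof of \autoref{thm:pbsc_2} given above, and to absorb the mismatch between $\hatL_S(\cdot)$ and $\hatL_{S_{\overline{\bfj}}}(\cdot)$ into the moment-generating-function estimate via a boundedness argument. First I would set $\eta = m - \max_{\bfj \in J}\m$ and, exactly as before, apply Jensen's inequality to move the convex $\Delta$ inside the expectation over $Q$, then the change-of-measure (Donsker--Varadhan) inequality to replace $\E_{(\bfj,\omega)\sim Q}$ by $\KL(Q\|P) + \ln \E_{(\bfj,\omega)\sim P}[\,\cdot\,]$, and then Markov's inequality over the draw of $S\sim\calD^m$ to pull the whole exponential expectation inside a $\ln(\tfrac1\delta \E_{T\sim\calD^m}[\,\cdot\,])$. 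Because $P$ is data-independent, I can swap $\E_{T}$ with $\E_{(\bfj,\omega)\sim P}$ and then split $T$ into $T_{\bfj}$ and $T_{\overline{\bfj}}$ using the i.i.d.\ assumption, just as in \autoref{thm:pbsc_2}.

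The one genuinely new step is handling the term $e^{\eta\,\Delta(\hatL_{T}(\scriptR(T_{\bfj},\omega)),\,\calL_\calD(\scriptR(T_{\bfj},\omega)))}$ rather than the analogous term with $\hatL_{T_{\overline{\bfj}}}$. Here I would exploit that $\hatL_T$ is the size-weighted average of $\hatL_{T_{\overline{\bfj}}}$ and $\hatL_{T_{\bfj}}$, namely $\hatL_T(h) = \tfrac{m-\m}{m}\hatL_{T_{\overline{\bfj}}}(h) + \tfrac{\m}{m}\hatL_{T_{\bfj}}(h)$, so that $\hatL_T(h)$ differs from $\hatL_{T_{\overline{\bfj}}}(h)$ by at most $\tfrac{\m}{m}$ in absolute value (since the loss lies in $[0,1]$). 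A convexity/Lipschitz-type control on $\Delta$ is not available in general, but using $B = \sup_{q,p\in[0,1]}\Delta(q,p)$ one gets the crude but sufficient bound
\begin{equation*}
\eta\,\Delta\!\big(\hatL_T(h),\calL_\calD(h)\big) \;\le\; \eta\,\Delta\!\big(\hatL_{T_{\overline{\bfj}}}(h),\calL_\calD(h)\big) + \eta\, B\,\tfrac{\m}{m}\,,
\end{equation*}
where the correction is controlled by first writing $\Delta(\hatL_T,\calL_\calD) = \Delta(\hatL_T,\calL_\calD) - \Delta(\hatL_{T_{\overline{\bfj}}},\calL_\calD) + \Delta(\hatL_{T_{\overline{\bfj}}},\calL_\calD)$ and bounding the bracketed difference; with the map $t\mapsto \Delta(t,\calL_\calD)$ convex on $[0,1]$ one can push this difference to at most $B\cdot\tfrac{\m}{m}$ after comparing the value at $\hatL_T$ with the chord through the endpoints. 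Exponentiating and pulling the deterministic factor $e^{\eta B \m / m} = e^{B(m-\max_{\bfj\in J}\m)\m/m}$ out of the expectation over $T$ yields exactly $\Acal'_\Delta(m)$ as defined in the statement.

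Finally, since $\eta = m-\max_{\bfj\in J}\m \le m-\m$ for every $\bfj\in J$ and the exponential is monotone, the inner moment-generating factor $\E_{T_{\bfj}}\E_{T_{\overline{\bfj}}} e^{\eta\,\Delta(\hatL_{T_{\overline{\bfj}}}(\scriptR(T_{\bfj},\omega)),\calL_\calD(\scriptR(T_{\bfj},\omega)))}$ is at most $\E_{T_{\bfj}}\E_{T_{\overline{\bfj}}} e^{(m-\m)\,\Delta(\hatL_{T_{\overline{\bfj}}}(\scriptR(T_{\bfj},\omega)),\calL_\calD(\scriptR(T_{\bfj},\omega)))}$, which is the factor appearing inside $\Acal'_\Delta(m)$; dividing through by $\eta$ completes the argument. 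The main obstacle I anticipate is making the correction step rigorous with only $B = \sup\Delta$ at hand (rather than a modulus of continuity for $\Delta$): the cleanest route is to observe that $\Delta(\,\cdot\,,\calL_\calD)$, being convex on $[0,1]$ and bounded by $B$, satisfies $\Delta(t',\calL_\calD) - \Delta(t,\calL_\calD) \le B$ whenever $|t'-t| \le \tfrac{\m}{m}$ simply because both values lie in a range of width at most $B$ — no finer estimate is actually needed, which is why the paper only assumes $\Delta$ bounded rather than Lipschitz.
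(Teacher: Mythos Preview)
Your overall strategy is the paper's: Jensen, change of measure, Markov, swap expectations, split $T$ into $T_{\bfj}$ and $T_{\overline{\bfj}}$, and then absorb the $\hatL_T$ vs.\ $\hatL_{T_{\overline{\bfj}}}$ mismatch via the convex-combination decomposition $\hatL_T = \tfrac{\m}{m}\hatL_{T_{\bfj}} + \tfrac{m-\m}{m}\hatL_{T_{\overline{\bfj}}}$ together with convexity of $\Delta$. Your ``chord through the endpoints'' argument \emph{is} Jensen's inequality on the convex map $t\mapsto\Delta(t,\calL_{\calD})$, and this is exactly what the paper does:
\[
\Delta(\hatL_T,\calL_{\calD})\ \le\ \tfrac{\m}{m}\,\Delta(\hatL_{T_{\bfj}},\calL_{\calD})+\tfrac{m-\m}{m}\,\Delta(\hatL_{T_{\overline{\bfj}}},\calL_{\calD})\ \le\ \tfrac{\m}{m}\,B+\tfrac{m-\m}{m}\,\Delta(\hatL_{T_{\overline{\bfj}}},\calL_{\calD}),
\]
after which one multiplies by $\eta$, exponentiates, pulls out $e^{\eta B\m/m}$, and weakens the remaining exponent $\tfrac{\eta(m-\m)}{m}\Delta\le\eta\Delta\le(m-\m)\Delta$ by monotonicity.

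Your closing ``cleanest route'' is where the proposal wobbles. The trivial bound $\Delta(t',\calL_{\calD})-\Delta(t,\calL_{\calD})\le B$ (which already tacitly needs $\Delta\ge 0$ for the range to have width $\le B$) gives only $\eta\Delta(\hatL_T,\cdot)\le\eta\Delta(\hatL_{T_{\overline{\bfj}}},\cdot)+\eta B$, \emph{not} $+\,\eta B\tfrac{\m}{m}$; that would replace the factor $e^{B(m-\max_{\bfj\in J}\m)\m/m}$ in $\Acal'_{\Delta}(m)$ by $e^{B(m-\max_{\bfj\in J}\m)}$, which is the wrong statement. A finer estimate \emph{is} needed, and it is precisely the Jensen/chord step you already sketched: the $\tfrac{\m}{m}$ weight on $B$ comes from $\hatL_T$ being a $\tfrac{\m}{m}$-weighted convex combination, not from the bare fact $|\hatL_T-\hatL_{T_{\overline{\bfj}}}|\le\tfrac{\m}{m}$. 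Drop the last sentence and commit to Jensen; that is the paper's proof.
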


\begin{proof}
We start from the end of the proof of \autoref{thm:pbsc_2}, after using the \emph{i.i.d.} assumption. As the loss of the hypothesis is computed on the data used to define $\scriptR(T_{\bfj},\omega)$, this term cannot be bounded straightforwardly as before. To tackle this problem, we assume that $\Delta$ is bounded. We want to remove the need to compute the loss on examples $T_{\bfj}= T\setminus T_{\overline{\bfj}}$ in the following term.
\begin{align*}
    \E_{(\bfj, \omega) \sim P} \E_{T_{\bfj} \sim \calD^{\m}} \E_{T_{\overline{\bfj}} \sim \calD^{m-\m}} e^{\eta\Delta\qty( \hatL_{T}\qty(\scriptR(T_{\bfj}, \omega)), \calL_{\calD}\qty(\scriptR(T_{\bfj}, \omega)))} \quad\mbox{(with $\eta>0$)}.
\end{align*}
Following the work of \citet{risk_bounds}, we separate the loss :
\begin{align*}
    \hatL_{T}(\scriptR(T_{\bfj}, \omega)) &= \frac{1}{m}\sum_{i=1}^m \ell(\scriptR(T_{\bfj}, \omega)(\bx_i), y_i) \\ 
    &=\frac{1}{m} \qty[\sum_{(\bx_i, y_i) \in T_{\bfj}} \ell(\scriptR(T_{\bfj}, \omega)(\bx_i), y_i) + \sum_{(\bx_i, y_i) \in T_{\overline{\bfj}}} \ell(\scriptR(T_{\bfj}, \omega)(\bx_i), y_i)] \\ 
    &= \frac{1}{m} \qty[\frac{\m}{\m}\sum_{(\bx_i, y_i) \in T_{\bfj}} \ell(\scriptR(T_{\bfj}, \omega)(\bx_i), y_i) + \frac{m-\m}{m-\m}\sum_{(\bx_i, y_i) \in T_{\overline{\bfj}}} \ell(\scriptR(T_{\bfj}, \omega)(\bx_i), y_i)] \\ 
    &= \frac{1}{m} \qty[\m \hatL_{T_{\bfj}}(\scriptR(T_{\bfj}, \omega)) + (m-\m) \hatL_{T_{\overline{\bfj}}}(\scriptR(T_{\bfj}, \omega))]. 
\end{align*}
With this new expression, we have
\begin{align*}
    &\E_{(\bfj, \omega) \sim P} \E_{T_{\bfj} \sim \calD^{\m}} \E_{T_{\overline{\bfj}} \sim \calD^{m-\m}} e^{\eta\Delta\qty( \hatL_{T}\qty(\scriptR(T_{\bfj}, \omega)), \calL_{\calD}\qty(\scriptR(T_{\bfj}, \omega)))} \\
    &= \E_{(\bfj, \omega) \sim P} \E_{T_{\bfj} \sim \calD^{\m}} \E_{T_{\overline{\bfj}} \sim \calD^{m-\m}} e^{\eta\Delta\qty( \frac{\m}{m} \hatL_{T_{\bfj}}(\scriptR(T_{\bfj}, \omega)) + \frac{m-\m}{m}\hatL_{T_{\overline{\bfj}}}(\scriptR(T_{\bfj}, \omega)), \calL_{\calD}\qty(\scriptR(T_{\bfj}, \omega)))} \\
    &\leq \E_{(\bfj, \omega) \sim P} \E_{T_{\bfj} \sim \calD^{\m}} \E_{T_{\overline{\bfj}} \sim \calD^{m-\m}} e^{ \frac{\eta \m}{m}\Delta\qty(\hatL_{T_{\bfj}}(\scriptR(T_{\bfj}, \omega)), \calL_{\calD}\qty(\scriptR(T_{\bfj}, \omega)))
    + \frac{\eta(m-\m)}{m} \Delta\qty(\hatL_{T_{\overline{\bfj}}}(\scriptR(T_{\bfj}, \omega)), \calL_{\calD}\qty(\scriptR(T_{\bfj}, \omega)))} 
    \tag{Jensen inequality}
    \\
    &\leq \E_{(\bfj, \omega) \sim P} \E_{T_{\bfj} \sim \calD^{\m}} \E_{T_{\overline{\bfj}} \sim \calD^{m-\m}} e^{ \frac{\eta \m}{m} \sup_{q,p \in [0,1]} \Delta(q,p)
    + \frac{\eta(m-\m)}{m} \Delta\qty(\hatL_{T_{\overline{\bfj}}}(\scriptR(T_{\bfj}, \omega)), \calL_{\calD}\qty(\scriptR(T_{\bfj}, \omega)))} \\
    &= \E_{(\bfj, \omega) \sim P} e^{\frac{\eta \m}{m} \sup_{q,p \in [0,1]} \Delta(q,p)} \E_{T_{\bfj} \sim \calD^{\m}} \E_{T_{\overline{\bfj}} \sim \calD^{m-\m}} e^{\frac{\eta(m-\m)}{m} \Delta\qty(\hatL_{T_{\overline{\bfj}}}(\scriptR(T_{\bfj}, \omega)), \calL_{\calD}\qty(\scriptR(T_{\bfj}, \omega)))} \\
    &= \E_{(\bfj, \omega) \sim P} e^{B\frac{\eta \m}{m}} \E_{T_{\bfj} \sim \calD^{\m}} \E_{T_{\overline{\bfj}} \sim \calD^{m-\m}} e^{\frac{\eta(m-\m)}{m} \Delta\qty(\hatL_{T_{\overline{\bfj}}}(\scriptR(T_{\bfj}, \omega)), \calL_{\calD}\qty(\scriptR(T_{\bfj}, \omega)))} \tag{$B \coloneqq \sup_{q,p \in [0,1]} \Delta(q,p)$}\\
    &\leq \E_{(\bfj, \omega) \sim P} e^{B\frac{\eta \m}{m}} \E_{T_{\bfj} \sim \calD^{\m}} \E_{T_{\overline{\bfj}} \sim \calD^{m-\m}} e^{\eta \Delta\qty(\hatL_{T_{\overline{\bfj}}}(\scriptR(T_{\bfj}, \omega)), \calL_{\calD}\qty(\scriptR(T_{\bfj}, \omega)))}.
\end{align*}

Similarly to \autoref{thm:pbsc_2}, we choose $\eta = m - \max_{\bfj \in J} \m$ and we have
\begin{align*}
&\E_{(\bfj, \omega) \sim P} e^{B\frac{\eta \m}{m}} \E_{T_{\bfj} \sim \calD^{\m}} \E_{T_{\overline{\bfj}} \sim \calD^{m-\m}} e^{\eta \Delta\qty(\hatL_{T_{\overline{\bfj}}}(\scriptR(T_{\bfj}, \omega)), \calL_{\calD}\qty(\scriptR(T_{\bfj}, \omega)))} \\
&\leq \E_{(\bfj, \omega) \sim P} e^{B\frac{(m - \max_{\bfj \in J} \m) \m}{m}} \E_{T_{\bfj} \sim \calD^{\m}} \E_{T_{\overline{\bfj}} \sim \calD^{m-\m}} e^{(m-\m)\Delta\qty(\hatL_{T_{\overline{\bfj}}}(\scriptR(T_{\bfj}, \omega)), \calL_{\calD}\qty(\scriptR(T_{\bfj}, \omega)))} \eqqcolon \Acal'_{\Delta}(m)\,.\qedhere
\end{align*}


\end{proof}

The most common comparator functions, namely the quadratic loss $\Delta_2(q,p) = 2(q-p)^2$, Catoni's distance $\Delta_C(q,p) =  -\ln(1-(1-e^{-C})p) - Cq$ and the linear distance $\Delta_{\lambda}(q,p) = \lambda(q-p)$ are all bounded, with the exception of the $\kl$. Thus, we have $\sup_{q,p \in [0,1]}\Delta_2(q,p)=2$, $\sup_{q,p \in [0,1]}\Delta_C(q,p) = C$ and $\sup_{q,p \in [a,b]} \Delta_{\lambda}(q,p) = \lambda(b-a)$.

\section{Corollaries of \autoref{thm:pbsc_2} and \autoref{thm:pbsc_trainset}}
With \autoref{thm:pbsc_2} and $\Delta=\kl$, we recover a real-valued version of Theorem~4 of \citet{pbsc_laviolette_2005}. 

\begin{corollary}\label{eq:pbsc_kl}
    In the setting of \autoref{thm:pbsc_2}, with $\Delta(q,p) = \kl(q,p)$ and $S = S_{\overline{\bfj}}$, with probability at least $1-\delta$ over the draw of $S \sim \calD^m$, we have
\begin{align*}
 &\forall Q \text{ over } J \times \Omega: \\ 
&\kl\qty(\E_{(\bfj, \omega) \sim Q} \hatL_{S_{\overline{\bfj}}}\qty(\scriptR(S_{\bfj}, \omega)),\E_{(\bfj, \omega) \sim Q} \calL_{\calD}\qty(\scriptR(S_{\bfj}, \omega))) \leq \frac{1}{m-\max_{\bfj \in J} \m}\qty[\KL(Q||P) + \ln\qty(\frac{\E_{(\bfj, \omega) \sim P} 2 \sqrt{m-\m}}{\delta})].
    \end{align*}
\end{corollary}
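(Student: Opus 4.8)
\textbf{Proof plan for Corollary~\ref{eq:pbsc_kl}.}
The plan is to simply instantiate \autoref{thm:pbsc_2} with the comparator function $\Delta(q,p)=\kl(q,p)$ and then bound the quantity $\mathcal{A}_{\kl}(m)$ that appears on the right-hand side. Since the hypotheses of Corollary~\ref{eq:pbsc_kl} are exactly those of \autoref{thm:pbsc_2} (the $\kl$ from Equation~\eqref{eq:smallkl} is convex on $[0,1]\times[0,1]$ and takes values in $[0,\infty)\subseteq\R$, and we are using $S=S_{\overline{\bfj}}$, i.e.\ the complement-set version of the theorem), the only thing left to do after substitution is to simplify the moment-generating term
\[
\mathcal{A}_{\kl}(m)=\E_{(\bfj,\omega)\sim P}\,\E_{T_{\bfj}\sim\calD^{\m}}\,\E_{T_{\overline{\bfj}}\sim\calD^{m-\m}}\,e^{(m-\m)\kl\big(\hatL_{T_{\overline{\bfj}}}(\scriptR(T_{\bfj},\omega)),\,\calL_{\calD}(\scriptR(T_{\bfj},\omega))\big)}.
\]

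First I would fix $(\bfj,\omega)$ and condition on the draw of $T_{\bfj}\sim\calD^{\m}$; this freezes the reconstructed predictor $h=\scriptR(T_{\bfj},\omega)$, which is then a \emph{fixed} function independent of the remaining $m-\m$ i.i.d.\ examples $T_{\overline{\bfj}}$. For a fixed predictor, $\hatL_{T_{\overline{\bfj}}}(h)$ is the average of $m-\m$ i.i.d.\ $[0,1]$-valued random variables with mean $\calL_{\calD}(h)$, so the classical Maurer/Langford--Seeger bound on the $\kl$ moment-generating function applies: for any $n$ i.i.d.\ $[0,1]$ variables with empirical mean $\hat p$ and true mean $p$, $\E\, e^{\,n\,\kl(\hat p,p)}\le 2\sqrt{n}$. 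Applying this with $n=m-\m$ gives
\[
\E_{T_{\overline{\bfj}}\sim\calD^{m-\m}}\,e^{(m-\m)\kl\big(\hatL_{T_{\overline{\bfj}}}(h),\,\calL_{\calD}(h)\big)}\ \le\ 2\sqrt{m-\m}.
\]

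Next I would take the expectation over $T_{\bfj}\sim\calD^{\m}$ (which only removes the now-vacuous dependence, so the bound $2\sqrt{m-\m}$ is unchanged for each $(\bfj,\omega)$), and finally the expectation over $(\bfj,\omega)\sim P$, yielding $\mathcal{A}_{\kl}(m)\le \E_{(\bfj,\omega)\sim P}\,2\sqrt{m-\m}$; note that under $P$ the message $\omega$ plays no role here, so this is $\E_{\bfj\sim P_J}\,2\sqrt{m-\m}$ if one wishes to emphasize it. Substituting this bound into the statement of \autoref{thm:pbsc_2} with $\Delta=\kl$ gives precisely the claimed inequality.

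The proof is essentially routine; the only point requiring any care is the \emph{order} of conditioning in the nested expectations — one must peel off $T_{\overline{\bfj}}$ \emph{after} fixing $T_{\bfj}$ so that the predictor is frozen before invoking the i.i.d.\ $\kl$-MGF bound, which is exactly why \autoref{thm:pbsc_2} was stated with the factored form $\E_{T_{\bfj}}\E_{T_{\overline{\bfj}}}$ in its long version. I would also state explicitly that the Maurer bound $\E e^{n\kl(\hat p,p)}\le 2\sqrt n$ is the source of the constant, rather than re-deriving it (it is the same fact that gives $\calI_{\kl}(m)\le 2\sqrt m$ in \autoref{thm:general_pac_bayes} and $\Jcal_{\kl}(\bfj,\omega)\le 2\sqrt{m-\m}$ in Corollary~\ref{cor:6ofBazinet2024}).
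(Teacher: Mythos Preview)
Your proposal is correct and matches the paper's approach: the corollary is obtained directly from \autoref{thm:pbsc_2} by choosing $\Delta=\kl$ and bounding $\mathcal{A}_{\kl}(m)$ via the Maurer inequality $\E e^{n\kl(\hat p,p)}\le 2\sqrt{n}$ applied to the inner expectation over $T_{\overline{\bfj}}$ once $T_{\bfj}$ (and hence the reconstructed predictor) is fixed. The paper does not spell out a proof for this corollary, treating it as an immediate instantiation, so your write-up is in fact more detailed than what the paper provides.
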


We can relax this corollary to obtain a bound that has Theorem~6 of \citet{thiemann2017strongly} as a special case where we choose $J = \{\bfj | \m = r\}$ for $r > 0$.

\begin{corollary}
    In the setting of \autoref{thm:pbsc_2}, with $S = S_{\overline{\bfj}}$, with probability at least $1-\delta$ over the draw of $S \sim \calD^m$, we have
    \begin{align*}
 &\forall Q \text{ over } J \times \Omega, \forall \lambda \in (0,2): \\ 
&\E_{(\bfj, \omega) \sim Q} \hatL_{S_{\overline{\bfj}}}\qty(\scriptR(S_{\bfj}, \omega))\leq \frac{\E_{(\bfj, \omega) \sim Q} \calL_{\calD}\qty(\scriptR(S_{\bfj}, \omega))}{1-\frac{\lambda}{2}} + \frac{\KL(Q||P) + \ln\qty(\frac{1}{\delta}\E_{(\bfj, \omega) \sim P} 2 \sqrt{m-\m})}{\lambda(1-\frac{\lambda}{2})(m-\max_{\bfj \in J} \m)}.
    \end{align*}
\end{corollary}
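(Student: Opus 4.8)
The plan is to obtain this bound as a purely analytic relaxation of the $\kl$-version of \autoref{thm:pbsc_2}, namely \autoref{eq:pbsc_kl}, following the fast-rate argument of \citet{thiemann2017strongly}. First I would invoke \autoref{eq:pbsc_kl}: with probability at least $1-\delta$ over $S\sim\calD^m$, for \emph{every} posterior $Q$ over $J\times\Omega$,
\begin{equation*}
\kl\!\qty(\E_{(\bfj,\omega)\sim Q}\hatL_{S_{\overline{\bfj}}}(\scriptR(S_{\bfj},\omega)),\ \E_{(\bfj,\omega)\sim Q}\calL_{\calD}(\scriptR(S_{\bfj},\omega)))\ \le\ \varepsilon(Q),
\end{equation*}
where $\varepsilon(Q)=\tfrac{1}{m-\max_{\bfj\in J}\m}\bigl[\KL(Q\|P)+\ln\!\bigl(\tfrac1\delta\E_{(\bfj,\omega)\sim P}2\sqrt{m-\m}\bigr)\bigr]$. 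Since $\lambda$ never interacts with the draw of $S$, it may be chosen after conditioning on this event, so it suffices to establish the deterministic implication: for all $q,p\in[0,1]$ and all $\lambda\in(0,2)$, $\kl(q,p)\le\varepsilon$ entails $q\le\tfrac{p}{1-\lambda/2}+\tfrac{\varepsilon}{\lambda(1-\lambda/2)}$, and then to set $q=\E_{(\bfj,\omega)\sim Q}\hatL_{S_{\overline{\bfj}}}(\scriptR(S_{\bfj},\omega))$, $p=\E_{(\bfj,\omega)\sim Q}\calL_{\calD}(\scriptR(S_{\bfj},\omega))$ and $\varepsilon=\varepsilon(Q)$.

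That implication is equivalent to the elementary inequality $\kl(q,p)\ge\lambda(1-\tfrac\lambda2)\,q-\lambda p$, which is where essentially all the remaining (still light) work sits. I would argue by cases. If $q\le p$ the right-hand side is at most $\lambda q-\lambda p\le0\le\kl(q,p)$. If $q>p$ I would first prove the refined Pinsker inequality $\kl(q,p)\ge\tfrac{(q-p)^2}{2q}$ by a one-variable argument: for fixed $q$, the map $p\mapsto\kl(q,p)-\tfrac{(q-p)^2}{2q}$ vanishes together with its first derivative at $p=q$ and is convex on $(0,q]$, since there $\partial_p^2\kl(q,p)=\tfrac{q}{p^2}+\tfrac{1-q}{(1-p)^2}\ge\tfrac1q=\partial_p^2\tfrac{(q-p)^2}{2q}$. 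Completing the square then gives $\tfrac{(q-p)^2}{2q}-\bigl(\lambda(q-p)-\tfrac{\lambda^2q}{2}\bigr)=\tfrac{\bigl((q-p)-\lambda q\bigr)^2}{2q}\ge0$, hence $\kl(q,p)\ge\lambda(q-p)-\tfrac{\lambda^2q}{2}=\lambda(1-\tfrac\lambda2)\,q-\lambda p$; dividing by $\lambda$ and rearranging yields $q\le\tfrac{p}{1-\lambda/2}+\tfrac{\kl(q,p)}{\lambda(1-\lambda/2)}$, i.e.\ the desired implication.

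Finally, substituting the two $Q$-expectations for $q,p$ and $\varepsilon(Q)$ for $\varepsilon$ into $q\le\tfrac{p}{1-\lambda/2}+\tfrac{\varepsilon}{\lambda(1-\lambda/2)}$ reproduces the stated bound verbatim, simultaneously for all $Q$ and all $\lambda\in(0,2)$ on the same high-probability event (in particular, the specialization $J=\{\bfj:\m=r\}$ recovers Theorem~6 of \citet{thiemann2017strongly}). The main—and only slightly nontrivial—obstacle is the refined Pinsker step $\kl(q,p)\ge\tfrac{(q-p)^2}{2q}$ for $q>p$; once that is in hand the rest is bookkeeping, the uniformity over $Q$ being inherited directly from \autoref{eq:pbsc_kl}.
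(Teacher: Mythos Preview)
Your proposal is correct and follows exactly the route the paper indicates: the text preceding the corollary says ``We can relax this corollary'' (referring to \autoref{eq:pbsc_kl}), and you carry out precisely that relaxation via the Thiemann-style inequality $\kl(q,p)\ge\lambda(1-\tfrac\lambda2)q-\lambda p$, supplying the refined-Pinsker details that the paper omits. The uniformity over $\lambda$ is handled properly, since the high-probability event from \autoref{eq:pbsc_kl} does not depend on $\lambda$.
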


We can also recover a tighter bound than Theorem~39 of \citet{risk_bounds}. Instead of bounding the complement loss, they bound the training loss on the whole dataset, thus we can use \autoref{thm:pbsc_trainset}. Although this could theoretically lead to tighter bounds, later in the section we present a toy experiment that shows that in practice, it rarely is tighter. 
\begin{corollary}\label{corr:pbsc_squared}
    In the setting of \autoref{thm:pbsc_trainset}, with $\Delta(q,p) = 2(q-p)^2$, with probability at least $1-\delta$ over the draw $S \sim \calD^m$, we have
\begin{align*}
 &\forall Q \text{ over } J \times \Omega: \\ 
&\E_{(\bfj, \omega) \sim Q} \calL_{\calD}\qty(\scriptR(S_{\bfj}, \omega)) \leq  \E_{(\bfj, \omega) \sim Q} \hatL_{S}\qty(\scriptR(S_{\bfj}, \omega)) + \sqrt{\frac{\KL(Q||P) + \ln\qty(\frac{1}{\delta}\E_{(\bfj, \omega) \sim P} e^{\frac{2(m-\m)\m}{m}}2\sqrt{m -\m})}{2(m-\max_{\bfj \in J} \m)}}.
\end{align*}
\end{corollary}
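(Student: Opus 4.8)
The plan is to specialize \autoref{thm:pbsc_trainset} to the quadratic comparator $\Delta_2(q,p) = 2(q-p)^2$ and then estimate the resulting moment-generating quantity $\Acal'_{\Delta_2}(m)$. First I would record that $B = \sup_{q,p \in [0,1]} 2(q-p)^2 = 2$, so the exponential prefactor appearing in $\Acal'_{\Delta_2}(m)$ is $e^{2(m-\max_{\bfj \in J}\m)\m/m}$; since $m - \max_{\bfj \in J}\m \le m - \m$ for every $\bfj \in J$ and $x \mapsto e^x$ is increasing, this prefactor is at most $e^{2(m-\m)\m/m}$, which is the form appearing in the statement.

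Next I would control the inner expectation $\E_{T_{\bfj} \sim \calD^{\m}} \E_{T_{\overline{\bfj}} \sim \calD^{m-\m}} e^{(m-\m)\Delta_2(\hatL_{T_{\overline{\bfj}}}(\scriptR(T_{\bfj},\omega)), \calL_{\calD}(\scriptR(T_{\bfj},\omega)))}$. Conditionally on $T_{\bfj}$ — hence on the fixed predictor $h = \scriptR(T_{\bfj},\omega)$ — the quantity $\hatL_{T_{\overline{\bfj}}}(h)$ is an average of $m-\m$ i.i.d.\ $[0,1]$-valued random variables with mean $\calL_{\calD}(h)$. Applying Pinsker's inequality $2(q-p)^2 \le \kl(q,p)$ in the exponent and then the Maurer-type bound already used in the excerpt (the one behind $\calI_{\kl}(m) \le 2\sqrt{m}$ and $\Jcal_{\kl}(\bfj,\omega) \le 2\sqrt{m-\m}$) gives $\E_{T_{\overline{\bfj}} \sim \calD^{m-\m}} e^{(m-\m)\Delta_2(\hatL_{T_{\overline{\bfj}}}(h), \calL_{\calD}(h))} \le \E_{T_{\overline{\bfj}} \sim \calD^{m-\m}} e^{(m-\m)\kl(\hatL_{T_{\overline{\bfj}}}(h), \calL_{\calD}(h))} \le 2\sqrt{m-\m}$. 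Taking the outer expectation over $T_{\bfj}$ preserves this, so together with the previous step I obtain $\Acal'_{\Delta_2}(m) \le \E_{(\bfj,\omega) \sim P}\, e^{2(m-\m)\m/m}\, 2\sqrt{m-\m}$.

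Finally I would substitute into \autoref{thm:pbsc_trainset}: with $\Delta = \Delta_2$ the left-hand side equals $2\big(\E_{(\bfj,\omega) \sim Q}\hatL_S(\scriptR(S_{\bfj},\omega)) - \E_{(\bfj,\omega) \sim Q}\calL_{\calD}(\scriptR(S_{\bfj},\omega))\big)^2$. Dividing by $2$, taking the nonnegative square root — the asserted inequality being trivial whenever $\E_{(\bfj,\omega)\sim Q}\calL_{\calD}(\scriptR(S_{\bfj},\omega)) \le \E_{(\bfj,\omega)\sim Q}\hatL_S(\scriptR(S_{\bfj},\omega))$ — and using monotonicity of $\ln$ with the bound on $\Acal'_{\Delta_2}(m)$ yields exactly the claim. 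The only step carrying any content is the moment-generating-function estimate of the middle paragraph; the rest is a mechanical substitution. Two minor points of care: Pinsker must be applied so that the $\kl$-exponent dominates the $\Delta_2$-exponent (not the other way around), and the relaxation $m-\max_{\bfj \in J}\m \le m-\m$ is used only inside the exponential prefactor, where it is legitimate.
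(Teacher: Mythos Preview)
Your proposal is correct and follows exactly the approach implicit in the paper: specialize \autoref{thm:pbsc_trainset} with $\Delta_2(q,p)=2(q-p)^2$, use $B=2$, relax the exponential prefactor via $m-\max_{\bfj\in J}\m\le m-\m$, and bound the inner moment-generating term by $2\sqrt{m-\m}$ through Pinsker plus the Maurer estimate already invoked elsewhere in the paper. The paper states this corollary without a written proof, and your steps are precisely the intended derivation.
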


\autoref{corr:pbsc_squared} expresses a tighter and more general bound than the results of \citet{risk_bounds}. Moreover, as explained below, the latter is not always valid.

\PGparagraph{Identifying a small error in previous work of \citet{risk_bounds}.} Although the sample compression result of  \citet{risk_bounds} is stated for any compression sets of size at most $\lambda$, only the case for compression set sizes $\lambda$ were considered by the authors. Indeed, in the proof of Lemma~38, with $\lambda > 0$, when bounding $\mathcal{A}'_{\Delta}(m)$, they consider the term
\begin{equation}\label{eq:mgf}
    \E_{(\bfj, \omega) \sim P} \E_{S_{\bfj} \sim \calD^{\lambda}} \E_{S_{\overline{\bfj}} \sim \calD^{m-\lambda}} e^{(m-\lambda)2\qty(\hatL_{S}(\scriptR(S_{\bfj}, \omega)) - \calL_{\calD}(\scriptR(S_{\bfj}, \omega)))^2}.
\end{equation}

By the definition $S_{\jbf} = \{(\bx_j, y_j)\}_{j\in\jbf}$, we have that $|S_{\bfj}| = \m$. Thus, \autoref{eq:mgf} only makes sense when $\lambda = \m$. Moreover, later on, the decomposition of the loss 
\begin{equation*}
    \hatL_{S}(\scriptR(S_{\bfj}, \omega)) = \frac{1}{m} \qty[\lambda \hatL_{S_{\bfj}}(\scriptR(S_{\bfj}, \omega)) + (m-\lambda) \hatL_{S_{\overline{\bfj}}}(\scriptR(S_{\bfj}, \omega))]
\end{equation*}
only makes sense when $\m = \lambda$. Indeed, we have 
\begin{align*}
    \hatL_{S}(\scriptR(S_{\bfj}, \omega)) &= \frac{1}{m}\sum_{i=1}^m \ell(\scriptR(S_{\bfj}, \omega)(\bx_i), y_i) \\ 
    &=\frac{1}{m} \qty[\sum_{(\bx_i, y_i) \in S_{\bfj}} \ell(\scriptR(S_{\bfj}, \omega)(\bx_i), y_i) + \sum_{(\bx_i, y_i) \in S_{\overline{\bfj}}} \ell(\scriptR(S_{\bfj}, \omega)(\bx_i), y_i)] \\ 
    &= \frac{1}{m} \qty[\frac{\lambda}{\lambda}\sum_{(\bx_i, y_i) \in S_{\bfj}} \ell(\scriptR(S_{\bfj}, \omega)(\bx_i), y_i) + \frac{m-\lambda}{m-\lambda}\sum_{(\bx_i, y_i) \in S_{\overline{\bfj}}} \ell(\scriptR(S_{\bfj}, \omega)(\bx_i), y_i)] \\ 
    &= \frac{1}{m} \qty[\lambda \hatL_{S_{\bfj}}(\scriptR(S_{\bfj}, \omega)) + (m-\lambda) \hatL_{S_{\overline{\bfj}}}(\scriptR(S_{\bfj}, \omega))].\\ 
\end{align*}
If $\lambda \neq \m$, then $\hatL_{S_{\bfj}}(\scriptR(S_{\bfj}, \omega)) \neq \frac{1}{\lambda}\displaystyle\sum_{(\bx_i, y_i) \in S_{\bfj}} \ell(\scriptR(S_{\bfj}, \omega)(\bx_i), y_i)$ and $\hatL_{S_{\overline{\bfj}}}(\scriptR(S_{\bfj}, \omega)) \neq \frac{1}{m-\lambda}\displaystyle\sum_{(\bx_i, y_i) \in S_{\overline{\bfj}}} \ell(\scriptR(S_{\bfj}, \omega)(\bx_i), y_i)$.

The result was used correctly in \citet{risk_bounds}, as they only considered this setting. However, the statement of the theorem was slightly incorrect.

In this specific setting, we can easily show that the log term of \autoref{corr:pbsc_squared} is upper bounded by the log term in Theorem~39 of \citet{risk_bounds}:
\begin{align*}
    \frac{1}{\delta} e^{\frac{2(m-\m)\m}{m}}2\sqrt{m-\m} &\leq \frac{1}{\delta} e^{2\m}2\sqrt{m-\m}  \\
    &\leq \frac{1}{\delta} e^{4\m}2\sqrt{m-\m}.
\end{align*}

\bigskip

\PGparagraph{Empirical comparison of the bounds.}
We now compare the bound on $\hatL_{S}$ instead of $\hatL_{S_{\overline{\bfj}}}$. To do so, we compute the value of \autoref{corr:pbsc_squared} and a relaxed version of \autoref{eq:pbsc_kl} using Pinsker's inequality. We choose $m=10000$, $\KL(Q||P) = 100$, $\delta=0.01$, $\hatL_{S_{\bfj}}(h) = 0$ and $J$ such that $\max_{\bfj \in J}\m = 2000$.
In this setting, we can investigate the relationship between the compression set size and the validation loss $\hatL_{S_{\overline{\bfj}}}(h)$. In Figure~\autoref{fig:pbsc_train_loss}, we report the difference between \autoref{corr:pbsc_squared} and the relaxed version of \autoref{eq:pbsc_kl}. 

\begin{figure}[t]
\subfigure[The $\KL$ is small compared to the MGF.]{
    \includegraphics[width=0.49\linewidth]{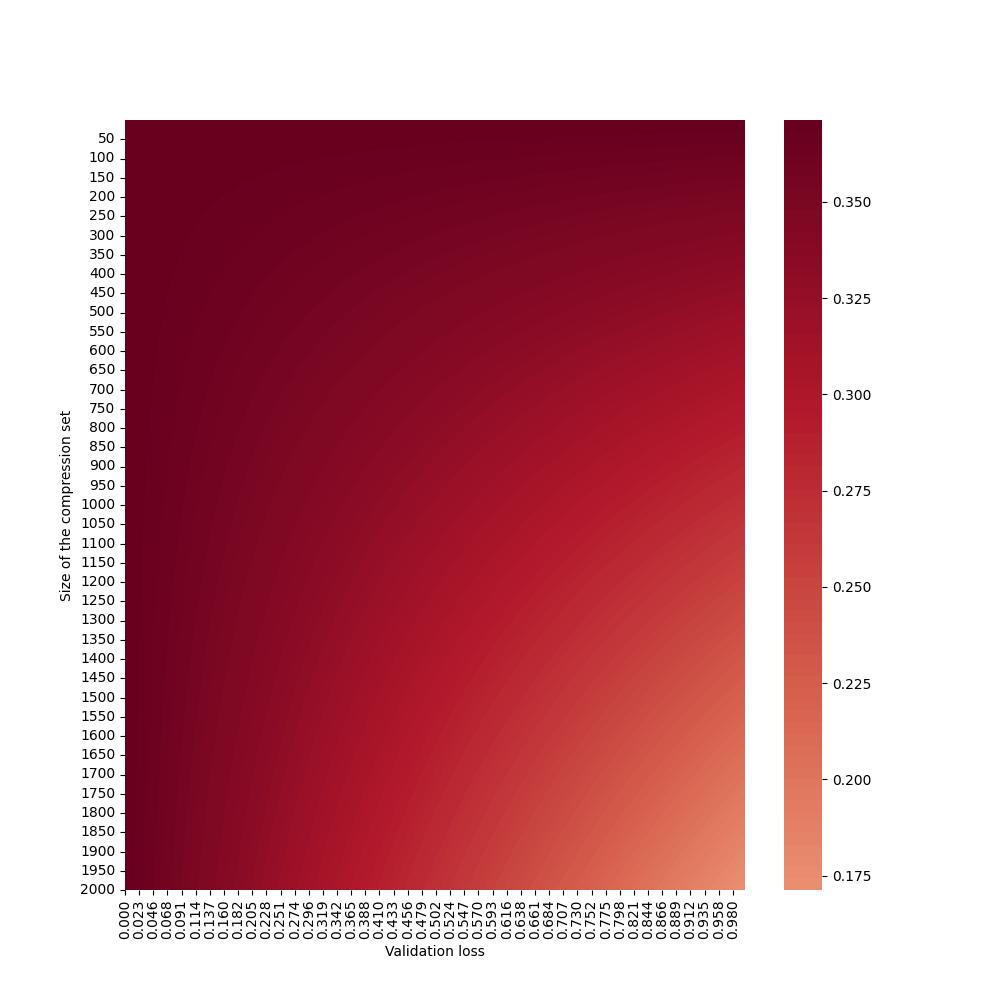}
    \label{fig:pbsc_train_loss}}
\subfigure[The $\KL$ is large compared to the MGF.]{
    \includegraphics[width=0.49\linewidth]{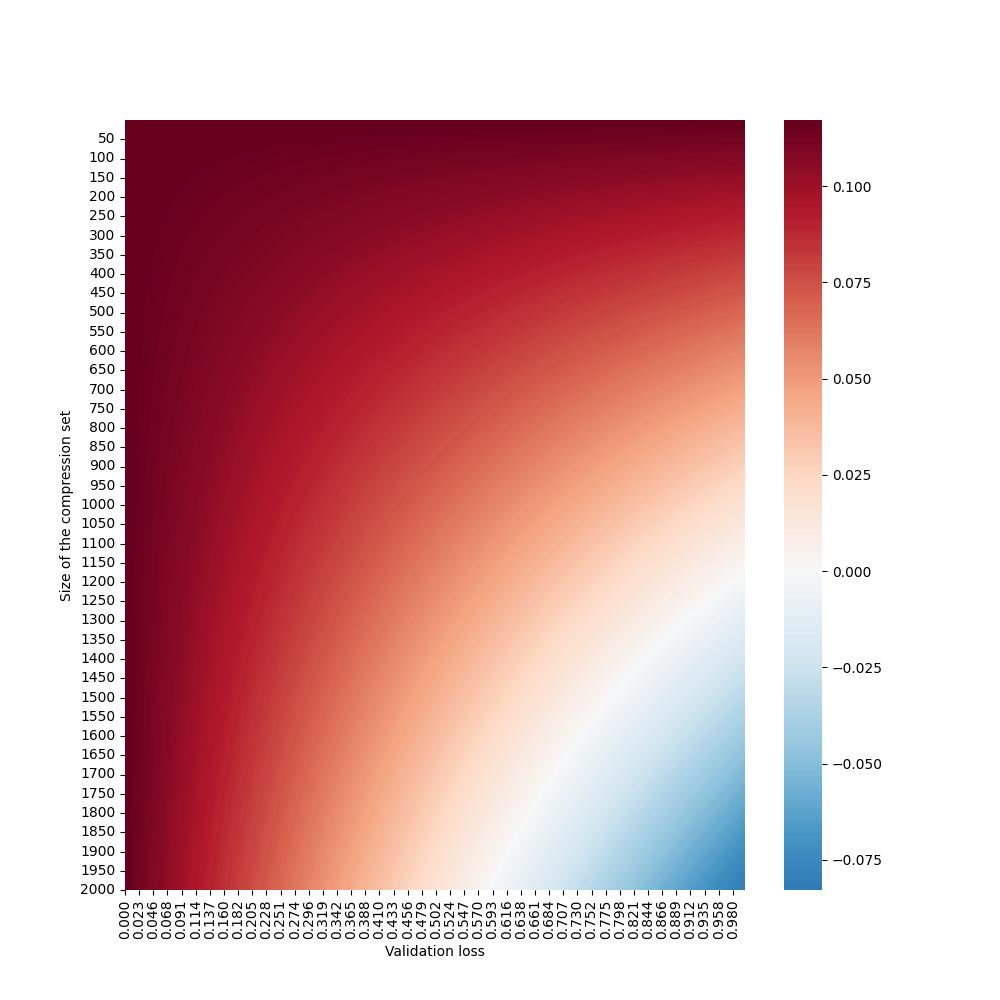}
    \label{fig:pbsc_train_loss_large_kl}}
    
    \caption{Behavior of the gap between \autoref{corr:pbsc_squared} and the relaxed version of \autoref{eq:pbsc_kl}, when the $\KL$ is (a) small or (b) large compared to the moment generating function (MGF) term. A positive value means \autoref{corr:pbsc_squared} is looser.}
\end{figure}

It is obvious that in this setting, the exponential term $\exp\!\big(\frac{2(m-\m)\m}{m}\big)$ of \autoref{corr:pbsc_squared} is very penalizing. We can see that the difference becomes smaller when the validation loss becomes larger. This can be explained by the fact that the loss $\hatL_{S}(h) = \frac{m-\m}{m} \hatL_{S_{\overline{\bfj}}}(h)$ decreases when $\m$ becomes larger. However, it doesn't come close to being advantageous. Indeed, it seems like \autoref{corr:pbsc_squared} only becomes advantageous when the $\KL$ is very large. In Figure~\autoref{fig:pbsc_train_loss_large_kl}, we chose $\KL(Q||P) = 10000$. We then observe that for $\hatL_{S_{\overline{\bfj}}}(h)$ larger than $0.6$, \autoref{corr:pbsc_squared} is actually smaller. However, the advantage doesn't seem good enough to use this bound in any setting, as the advantage is only available in degenerate cases when the loss on the dataset is very high and the $\KL$ is also very large.

\newpage

\section{Proof and Corollaries of \autoref{theo:sample_compression_cnt}}\label{supp:cor_cnt}

Using \autoref{thm:pbsc_2}, we can obtain a new sample compression bound for real-valued messages. To do so, we restrict the result to use only Dirac measures as posterior distributions for the compression sets.

\mainresult*

\begin{proof}
We restrict the posterior distributions to be Dirac measures, e.g.\ for any vector $\bfj^*$, the posterior distribution is $Q(\bfj^*) = 1$ and $Q(\bfj) = 0\, \forall \bfj \neq \bfj^*$. We also separate the $\KL$ in two terms.
\begin{align*}
    \KL(Q||P) &= \E_{h \sim Q} \log \frac{Q(h)}{P(h)} \\ 
    &= \E_{(\bfj, \omega) \sim Q} \log \frac{Q(\bfj, \omega)}{P(\bfj, \omega)} \\  
    &= \E_{\bfj \sim Q_J} \E_{\omega \sim Q_{\Omega}} \log \frac{Q(\bfj, \omega)}{P(\bfj, \omega)} \\ 
    &= \E_{\bfj \sim Q_J} \E_{\omega \sim Q_{\Omega}} \log \frac{Q_J(\bfj)Q_{\Omega}(\omega)}{P_J(\bfj)P_{\Omega}(\omega)} \\ 
    &= \E_{\bfj \sim Q_J} \E_{\omega \sim Q_{\Omega}}\qty[ \log \frac{Q_J(\bfj)}{P_J(\bfj)} + \log \frac{Q_{\Omega}(\omega)}{P_{\Omega}(\omega)}] \\ 
    &= \E_{\bfj \sim Q_J} \E_{\omega \sim Q_{\Omega}}\log \frac{Q_J(\bfj)}{P_J(\bfj)} + \E_{\bfj \sim Q_J} \E_{\omega \sim Q_{\Omega}}\log \frac{Q_{\Omega}(\omega)}{P_{\Omega}(\omega)} \\ 
    &= \E_{\omega \sim Q_{\Omega}}\qty[ Q_J(\bfj^*)\log \frac{Q_J(\bfj^*)}{P_J(\bfj^*)} +  \sum_{\bfj \neq \bfj^*} Q_J(\bfj)\log \frac{Q_J(\bfj)}{P_J(\bfj)}] + \E_{\bfj \sim Q_J} \E_{\omega \sim Q_{\Omega}}\log \frac{Q_{\Omega}(\omega)}{P_{\Omega}(\omega)} \\ 
    &= \E_{\omega \sim Q_{\Omega}}\qty[ 1\cdot \log \frac{1}{P_J(\bfj^*)} +  \sum_{\bfj \neq \bfj^*} 0\cdot\log \frac{0}{P_J(\bfj)}] + \E_{\bfj \sim Q_J} \E_{\omega \sim Q_{\Omega}}\log \frac{Q_{\Omega}(\omega)}{P_{\Omega}(\omega)} \\ 
    &= \log \frac{1}{P_J(\bfj^*)} + \E_{\omega \sim Q_{\Omega}}\log \frac{Q_{\Omega}(\omega)}{P_{\Omega}(\omega)} \\ 
    &= \log \frac{1}{P_J(\bfj^*)} + \KL(Q_{\Omega}||P_{\Omega})\,.\qedhere 
\end{align*}
\end{proof}

If we also restrict the posterior distribution on $\Omega$ to be Dirac measures, we obtain a slightly worse version of \autoref{theo:sample_compression_dsc}. Indeed, it restricts the comparator function to be convex and it is penalized by the maximum in the denominator and the expectation in $\Acal_{\Delta}(m)$, which weren't present in \autoref{theo:sample_compression_dsc}.

The following corollaries are easily derived from \autoref{theo:sample_compression_cnt} by choosing a comparator function $\Delta$ and bounding $\Acal_{\Delta}(m)$.
\begin{corollary}\label{cor:catoni}
In the setting of \autoref{theo:sample_compression_cnt}, with $\Delta_C(q,p) = -\ln(1-(1-e^{-C})p) - Cq$ (where $C > 0$),  with probability at least $1-\delta$ over the draw of $S \sim \calD^m$, we have
    \small{\begin{align*}
    &\forall \bfj \in J, Q_{\Omega} \text{ over } \Omega: \\ 
&\E_{\omega \sim Q_{\Omega}} \calL_{\calD}(\scriptR(S_{\mathbf{j}},\omega)) \leq \frac{1}{1-e^{-C}} \qty[1-\exp\qty( -C\E_{\omega \sim Q_{\Omega}} \widehat{\calL}_{S_{\overline{\bfj}}}(\scriptR(S_{\mathbf{j}},\omega))-\frac{\KL(Q_{\Omega}||P_{\Omega}) - \ln \qty({P_{J}(\bfj)\cdot\delta})}{m-\max_{\bfj \in J}\m})].
    \end{align*}}
\end{corollary}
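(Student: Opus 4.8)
The plan is to apply \autoref{theo:sample_compression_cnt} with the Catoni comparator $\Delta_C(q,p) = -\ln(1-(1-e^{-C})p) - Cq$ and then algebraically invert $\Delta_C$ to isolate the generalization loss. First I would verify that $\Delta_C$ satisfies the hypotheses of the theorem: writing $a := 1-e^{-C}\in(0,1)$, on $[0,1]$ we have $1-ap \ge e^{-C}>0$, and $p\mapsto -\ln(1-ap)$ has second derivative $a^2/(1-ap)^2>0$, while $q\mapsto -Cq$ is affine, so $\Delta_C$ is jointly convex and finite on $[0,1]\times[0,1]$, as required.

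The core of the proof is the bound $\Acal_{\Delta_C}(m)\le 1$. Recall that $\Acal_{\Delta_C}(m) = \E_{(\bfj,\omega)\sim P}\,\E_{T\sim\calD^m}\, e^{|\overline{\bfj}|\,\Delta_C(\hatL_{T_{\overline{\bfj}}}(\scriptR(T_{\bfj},\omega)),\, \calL_{\calD}(\scriptR(T_{\bfj},\omega)))}$. Since $P$ is data-independent and $T$ is i.i.d., I would split $\E_{T\sim\calD^m} = \E_{T_{\bfj}}\E_{T_{\overline{\bfj}}}$ exactly as in the proof of \autoref{thm:pbsc_2}. Conditionally on $(\bfj,\omega)$ and $T_{\bfj}$, the predictor $h = \scriptR(T_{\bfj},\omega)$ is fixed and $\hatL_{T_{\overline{\bfj}}}(h)$ is an average of $n := |\overline{\bfj}|$ fresh i.i.d.\ $[0,1]$-valued losses $X_1,\dots,X_n$ with common mean $p := \calL_{\calD}(h)$. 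The classical Catoni moment estimate then applies: from the chord inequality $e^{-Cx}\le 1-ax$ for $x\in[0,1]$ (convexity of $x\mapsto e^{-Cx}$), we get $\E\, e^{-CX_1}\le 1-ap$, hence $\E_{T_{\overline{\bfj}}}\, e^{n\Delta_C(\hatL_{T_{\overline{\bfj}}}(h),\,p)} = (1-ap)^{-n}\,(\E\, e^{-CX_1})^n \le 1$. Integrating out $T_{\bfj}$ and then $(\bfj,\omega)\sim P$ gives $\Acal_{\Delta_C}(m)\le 1$.

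Plugging this into \autoref{theo:sample_compression_cnt} yields, with probability at least $1-\delta$ and simultaneously for every $\bfj\in J$ and every $Q_\Omega$, the inequality $\Delta_C(\widehat q, q)\le B$ with $\widehat q := \E_{\omega\sim Q_\Omega}\hatL_{S_{\overline{\bfj}}}(\scriptR(S_{\bfj},\omega))$, $q := \E_{\omega\sim Q_\Omega}\calL_{\calD}(\scriptR(S_{\bfj},\omega))$, and $B := \tfrac{1}{m-\max_{\bfj\in J}\m}\big[\KL(Q_\Omega\|P_\Omega) - \ln(P_J(\bfj)\delta)\big]$, where I used $\Acal_{\Delta_C}(m)\le 1$ to drop the moment term from the numerator. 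Unfolding $\Delta_C$ this is $-\ln(1-aq) - C\widehat q \le B$, which rearranges to $1-aq \ge e^{-(C\widehat q + B)}$, i.e. $q \le \tfrac{1}{a}\big(1-e^{-(C\widehat q + B)}\big) = \tfrac{1}{1-e^{-C}}\big(1-e^{-(C\widehat q + B)}\big)$; writing out $C\widehat q + B$ reproduces exactly the stated bound.

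I expect the only genuinely delicate point to be the $\Acal_{\Delta_C}(m)\le 1$ step, and even there the subtlety is purely a matter of conditioning order: one must fix the compression indices $\bfj$, the message $\omega$, and the compression sample $T_{\bfj}$ before invoking the Catoni bound, so that the reconstructed predictor $\scriptR(T_{\bfj},\omega)$ is independent of the complement sample $T_{\overline{\bfj}}$ on which its empirical loss is evaluated. The convexity check and the final inversion of $\Delta_C$ are routine.
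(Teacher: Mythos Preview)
Your proposal is correct and follows exactly the approach the paper indicates: choose $\Delta=\Delta_C$, bound $\Acal_{\Delta_C}(m)\le 1$ via the Catoni moment inequality, then invert $\Delta_C$ to isolate the generalization loss. The paper itself only states that the corollary ``is easily derived from \autoref{theo:sample_compression_cnt} by choosing a comparator function $\Delta$ and bounding $\Acal_{\Delta}(m)$'', so your write-up in fact supplies more detail than the paper gives.
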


\begin{corollary}
\label{cor:pbsc-new-kl}
In the setting of \autoref{theo:sample_compression_cnt}, with $\Delta(q, p) = \kl(q, p) = q\cdot\ln \frac{q}{p} + (1-q)\cdot\ln\frac{1-q}{1-p}$,  with probability at least $1-\delta$ over the draw of $S \sim \calD^m$, we have
    \begin{align*}
    &\forall \bfj \in J, Q_{\Omega} \text{ over } \Omega: \\ 
&~~~~~\kl\qty(\E_{\omega \sim Q_{\Omega}} \widehat{\calL}_{S_{\overline{\bfj}}}(\scriptR(S_{\mathbf{j}},\omega)), \E_{\omega \sim Q_{\Omega}} \calL_{\calD}(\scriptR(S_{\mathbf{j}},\omega))) \leq \frac{1}{m-\max_{\bfj \in J}\m}\qty[ \KL(Q_{\Omega}||P_{\Omega}) + \ln \qty(\frac{\E_{\bfj \sim P_J} 2\sqrt{m-\m}}{P_{J}(\bfj)\cdot\delta})].
    \end{align*}
\end{corollary}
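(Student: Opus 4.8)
The plan is to read the corollary off \autoref{theo:sample_compression_cnt} by picking the comparator $\Delta=\kl$ and then bounding the constant $\Acal_{\kl}(m)$. First I would check admissibility of this choice: the binary Kullback--Leibler divergence $\kl:[0,1]\times[0,1]\to\R$ is jointly convex, and the loss is already assumed $[0,1]$-valued, so every hypothesis of \autoref{theo:sample_compression_cnt} holds. Applying it with $\Delta=\kl$ gives, with probability at least $1-\delta$ over $S\sim\calD^m$, for all $\bfj\in J$ and all posteriors $Q_\Omega$ over $\Omega$,
\[
\kl\!\qty(\E_{\omega\sim Q_\Omega}\hatL_{S_{\overline{\bfj}}}(\scriptR(S_\bfj,\omega)),\ \E_{\omega\sim Q_\Omega}\calL_\calD(\scriptR(S_\bfj,\omega)))\ \le\ \frac{\KL(Q_\Omega\|P_\Omega)+\ln\!\big(\Acal_{\kl}(m)/(P_J(\bfj)\delta)\big)}{m-\max_{\bfj\in J}\m},
\]
so everything reduces to establishing $\Acal_{\kl}(m)\le\E_{\bfj\sim P_J}2\sqrt{m-\m}$.

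To get this estimate I would first note that, because the prior $P$ over $J\times\Omega$ is data-independent, splitting a draw $T\sim\calD^m$ into the i.i.d.\ sub-samples indexed by $\bfj$ and $\overline{\bfj}$ — the same manipulation used in the proof of \autoref{thm:pbsc_2} — yields $\Acal_{\kl}(m)=\E_{(\bfj,\omega)\sim P}\,\Jcal_{\kl}(\bfj,\omega)$, where $\Jcal_{\kl}(\bfj,\omega)$ is exactly the moment term of \autoref{theo:sample_compression_dsc}. For fixed $(\bfj,\omega)$, conditioning on $T_\bfj$ makes the predictor $h=\scriptR(T_\bfj,\omega)$ deterministic, and the remaining sample $T_{\overline{\bfj}}$ of size $m-\m=|\overline{\bfj}|\ge1$ is then independent of $h$; hence $\hatL_{T_{\overline{\bfj}}}(h)$ is an empirical average of i.i.d.\ $[0,1]$-valued variables with mean $\calL_\calD(h)$, and the Maurer-type moment bound — the same one behind $\calI_{\kl}(m)\le2\sqrt m$, cf.\ \autoref{cor:6ofBazinet2024} — gives $\Jcal_{\kl}(\bfj,\omega)\le 2\sqrt{m-\m}$.

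Finally I would integrate over $(\bfj,\omega)\sim P$: since $2\sqrt{m-\m}$ depends only on $\bfj$ and the $J$-marginal of $P$ is $P_J$, this gives $\Acal_{\kl}(m)\le\E_{\bfj\sim P_J}2\sqrt{m-\m}$, and substituting into the displayed bound proves the corollary. There is no real obstacle beyond bookkeeping; the one point that needs care is to apply the moment bound only \emph{after} the split $\E_{T_\bfj}\E_{T_{\overline{\bfj}}}$, so that $h$ is held fixed and genuinely independent of the sample on which its empirical loss is computed. Note also that the slightly suboptimal denominator $m-\max_{\bfj\in J}\m$ is inherited directly from \autoref{theo:sample_compression_cnt}, so no extra looseness is introduced at this stage (one could further relax $\E_{\bfj\sim P_J}2\sqrt{m-\m}\le 2\sqrt{m-\min_{\bfj\in J}\m}$ if a simpler expression is preferred).
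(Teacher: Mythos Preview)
Your proposal is correct and matches the paper's approach exactly: the paper states that this corollary is ``easily derived from \autoref{theo:sample_compression_cnt} by choosing a comparator function $\Delta$ and bounding $\Acal_{\Delta}(m)$,'' and you carry out precisely those two steps, supplying the Maurer-type moment estimate $\Jcal_{\kl}(\bfj,\omega)\le 2\sqrt{m-\m}$ (conditionally on $T_\bfj$) and then averaging over the prior. The only difference is that you spell out the details the paper leaves implicit.
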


\begin{corollary}
In the setting of \autoref{theo:sample_compression_cnt}, for any $\lambda > 0$, with $\Delta(q,p) = \lambda(p-q)$, with a $\varsigma^2$-sub-Gaussian loss function $\ell : \calY \times \calY \to \R$,  with probability at least $1-\delta$ over the draw of $S \sim \calD^m$, for all $\bfj \in J$ and posterior probability distribution $Q_{\Omega}$, we have
\begin{align*}
\E_{\omega \sim Q_{\Omega}} \calL_{\calD}(\scriptR(S_{\mathbf{j}},\omega)) \leq &\E_{\omega \sim Q_{\Omega}} \widehat{\calL}_{S_{\overline{\bfj}}}(\scriptR(S_{\mathbf{j}},\omega)) \\
&+ \frac{1}{\lambda (m-\max_{\bfj \in J}\m)}\qty[ \KL(Q_{\Omega}||P_{\Omega}) +\ln \frac{1}{P_{J}(\bfj)\cdot\delta} + \ln \E_{\bfj \sim P_J} \exp\qty(\frac{(n-\m)\lambda^2\varsigma^2}{2}) ].
\end{align*}
\end{corollary}

\section{Proof of \autoref{thm:desintegrated_pbsc}}
\label{section:penseepourpaul}

\desintegratedbound*

\pagebreak
To prove this result, we need the following results from \citet{viallard2024general}. 
\begin{theorem}[\citealt{viallard2024general}] \label{thm:paul}
    For any distribution $\calD$ on $\mathcal{Z}$, for any hypothesis set $\calH$, for any prior distribution $\prior \in \mathcal{M}^*(\calH)$, for any measurable function $\phi : \calH \times \mathcal{Z} \to \R_{>0}$, for any $\alpha > 1$, for any $\delta \in (0,1]$, for any algorithm $B : \mathcal{Z}^m \times \mathcal{M}^*(\calH) \to \mathcal{M}(\calH)$, we have
\begin{align*}
    \Prob_{S \sim \calD^m, h \sim \posterior_S}\Bigg(& \frac{\alpha}{\alpha -1} \ln \phi(h, S) \\*
    &\leq \frac{2\alpha - 1}{\alpha -1}\ln\frac{2}{\delta} + D_{\alpha}(\posterior_S ||\prior) + \ln\qty[\E_{S' \sim \calD^m} \E_{h' \sim \prior} \phi(h', S')^{\frac{\alpha}{\alpha -1}}] \Bigg) \geq 1-\delta,
\end{align*}
    where $\posterior_S = \zeta(S, \prior)$ is output by the deterministic algorithm $\zeta$.
\end{theorem}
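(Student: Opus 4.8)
The statement is the \emph{disintegrated} PAC-Bayes inequality of \citet{viallard2024general}, and I would prove it with the standard disintegration recipe: Markov's inequality applied to well-chosen non-negative random variables over the joint draw $(S,h)\sim\calD^m\times\posterior_S$, combined with a change of measure built on the Rényi divergence $D_\alpha$ rather than the Kullback--Leibler divergence. Write $\beta:=\tfrac{\alpha}{\alpha-1}$ (the Hölder conjugate of $\alpha$), assume the usual absolute continuity $\posterior_S\ll\prior$, and recall that, for $\alpha>1$, $\E_{h\sim\prior}\big[\big(\tfrac{d\posterior_S}{d\prior}(h)\big)^{\alpha}\big]=e^{(\alpha-1)D_{\alpha}(\posterior_S\|\prior)}$, which is merely the definition of $D_\alpha$.

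The starting point is the pointwise decomposition, valid $\posterior_S$-almost surely (reading $\tfrac{d\prior}{d\posterior_S}(h)$ as the reciprocal of $\tfrac{d\posterior_S}{d\prior}(h)$ on the support of $\posterior_S$),
\[
\tfrac{\alpha}{\alpha-1}\ln\phi(h,S)\;=\;\beta\ln\phi(h,S)\;=\;\ln\!\Big[\phi(h,S)^{\beta}\,\tfrac{d\prior}{d\posterior_S}(h)\Big]\;+\;\ln\tfrac{d\posterior_S}{d\prior}(h).
\]
I would then bound the two summands by separate high-probability arguments. For the first, apply Markov's inequality to the non-negative variable $\phi(h,S)^{\beta}\tfrac{d\prior}{d\posterior_S}(h)$: a change of measure from $\posterior_S$ to $\prior$ carried out conditionally on $S$, followed by the (data-free) expectation over $S\sim\calD^m$ and Tonelli's theorem, gives $\E_{S,h\sim\posterior_S}\big[\phi(h,S)^{\beta}\tfrac{d\prior}{d\posterior_S}(h)\big]\le\E_{S'\sim\calD^m}\E_{h'\sim\prior}\big[\phi(h',S')^{\beta}\big]$, so that, allotting failure probability $\tfrac{\delta}{2}$, the first summand is at most $\ln\tfrac{2}{\delta}+\ln\E_{S'}\E_{h'\sim\prior}\big[\phi(h',S')^{\beta}\big]$. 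For the second summand, apply Markov's inequality, conditionally on $S$, to $\big(\tfrac{d\posterior_S}{d\prior}(h)\big)^{\alpha-1}$, whose $\posterior_S$-mean is exactly $e^{(\alpha-1)D_{\alpha}(\posterior_S\|\prior)}$ by the identity above; dividing through by $\alpha-1$ yields, with probability $\ge 1-\tfrac{\delta}{2}$ for every fixed $S$ and hence jointly, $\ln\tfrac{d\posterior_S}{d\prior}(h)\le D_{\alpha}(\posterior_S\|\prior)+\tfrac{1}{\alpha-1}\ln\tfrac{2}{\delta}$. A union bound over these two events and addition of the two displays gives $\tfrac{\alpha}{\alpha-1}\ln\phi(h,S)\le\big(1+\tfrac{1}{\alpha-1}\big)\ln\tfrac{2}{\delta}+D_{\alpha}(\posterior_S\|\prior)+\ln\E_{S'}\E_{h'\sim\prior}\big[\phi(h',S')^{\beta}\big]$; since $1+\tfrac{1}{\alpha-1}=\tfrac{\alpha}{\alpha-1}\le\tfrac{2\alpha-1}{\alpha-1}$ and $\ln\tfrac{2}{\delta}\ge 0$, this in particular yields the stated inequality.

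The main obstacle is the \emph{disintegration} itself: the conclusion is a statement about a single hypothesis $h$ sampled once from $\posterior_S$, not about the average $\E_{h\sim\posterior_S}[\cdot]$ as in \autoref{thm:general_pac_bayes}, so the familiar KL-based change of measure is not enough; the Rényi divergence is exactly what lets one control the pointwise density ratio $\tfrac{d\posterior_S}{d\prior}(h)$ by a separate moment argument, and $\alpha>1$ is needed for that moment to be finite and for the Hölder step. The remaining care is measure-theoretic: $\prior\in\mathcal{M}^*(\calH)$ together with $\posterior_S\ll\prior$ makes the Radon--Nikodym derivatives and the change of measure legitimate, $\phi>0$ keeps every logarithm finite, measurability of $(h,S)\mapsto\phi(h,S)$ and of $S\mapsto\posterior_S$ makes the iterated expectations well defined (and Tonelli applicable when swapping $\E_S$ and $\E_{h'\sim\prior}$), and the determinism of $\zeta$ ensures that conditioning on $S$ — hence on $\posterior_S=\zeta(S,\prior)$ — in the arguments above is legitimate.
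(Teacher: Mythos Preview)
The paper does not prove this statement; it merely quotes \autoref{thm:paul} from \citet{viallard2024general} as an auxiliary result used in the proof of \autoref{thm:desintegrated_pbsc}. There is therefore no proof in the present paper to compare your attempt against.

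That said, your argument is correct and follows the standard disintegration recipe: split $\tfrac{\alpha}{\alpha-1}\ln\phi(h,S)$ into $\ln\!\big[\phi(h,S)^{\beta}\tfrac{d\prior}{d\posterior_S}(h)\big]$ and $\ln\tfrac{d\posterior_S}{d\prior}(h)$, control the first by Markov over the joint draw $(S,h)$ after the change of measure $\posterior_S\to\prior$, and control the second by Markov on $\big(\tfrac{d\posterior_S}{d\prior}\big)^{\alpha-1}$, whose $\posterior_S$-mean is $e^{(\alpha-1)D_\alpha(\posterior_S\|\prior)}$. As you observe, this actually yields the sharper coefficient $\tfrac{\alpha}{\alpha-1}$ on $\ln\tfrac{2}{\delta}$, which is $\le\tfrac{2\alpha-1}{\alpha-1}$ and hence implies the stated bound. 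The only delicate point is the validity of the change of measure when $\posterior_S\ll\prior$ but not conversely; your inequality $\E_{h\sim\posterior_S}\!\big[\phi^{\beta}\tfrac{d\prior}{d\posterior_S}\big]\le\E_{h'\sim\prior}\!\big[\phi^{\beta}\big]$ handles this correctly since the left-hand integral is over $\{\tfrac{d\posterior_S}{d\prior}>0\}$ only.
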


\begin{proof}[Proof of Theorem \ref{thm:desintegrated_pbsc}]
Let $\mathcal{Z} = \calX \times \calY$. We consider a set of sample-compressed predictors that can be defined using a reconstruction function $\scriptR$, a compression set $\bfj$ and a message $\omega$. Following from the PAC-Bayes Sample compression work of \citet{pbsc_laviolette_2005}, we define the distribution on $J \times \Omega$ instead of the hypothesis class;
it is equivalent to sample $h$ from a distribution on the hypothesis class and to sample $(\bfj^\star, \omega^\star)$ from a distribution on $J \times \Omega$ and set $h = \scriptR(S_{\bfj^\star}, \omega^\star)$.

Let apply Theorem~\ref{thm:paul} with 
    \begin{equation*}
         \phi(h,S) = \exp\qty(\frac{\alpha-1}{\alpha} (m-\max_{\bfj \in J}\m)\Delta\qty(\hatL_{S_{\overline{\bfj}^\star}}(\scriptR(S_{\bfj^\star}, \omega^\star)),\calL_{\calD}(\scriptR(S_{\bfj^\star}, \omega^\star)))).
    \end{equation*}
Then, we have
\begin{align*}
    \frac{\alpha}{\alpha - 1} \ln \phi(h, S) &= \frac{\alpha}{\alpha - 1} \ln \exp\qty(\frac{\alpha-1}{\alpha} (m-\max_{\bfj \in J}\m)\Delta\qty(\hatL_{S_{\overline{\bfj}^\star}}(\scriptR(S_{\bfj^\star}, \omega^\star)),\calL_{\calD}(\scriptR(S_{\bfj^\star}, \omega^\star)))) \\ 
    &= \frac{\alpha}{\alpha - 1} \frac{\alpha-1}{\alpha} (m-\max_{\bfj \in J}\m)\Delta\qty(\hatL_{S_{\overline{\bfj}^\star}}(\scriptR(S_{\bfj^\star}, \omega^\star)),\calL_{\calD}(\scriptR(S_{\bfj^\star}, \omega^\star))) \\ 
    &=(m-\max_{\bfj \in J}\m)\Delta\qty(\hatL_{S_{\overline{\bfj}^\star}}(\scriptR(S_{\bfj^\star}, \omega^\star)),\calL_{\calD}(\scriptR(S_{\bfj^\star}, \omega^\star))). 
\end{align*}
Moreover, 
\begin{align*}
     &\ln\qty[\E_{S' \sim \calD^m} \E_{h' \sim \prior} \phi(h', S')^{\frac{\alpha}{\alpha -1}}] \\
     &=\ln\qty[\E_{h' \sim \prior}\E_{S' \sim \calD^m} \phi(h', S')^{\frac{\alpha}{\alpha -1}}] \\
     &= \ln\qty[\E_{\bfj \sim P_J}\E_{\omega \sim P_{\Omega}}\E_{S'_{\bfj} \sim \calD^{\m}}  \E_{S'_{\overline{\bfj}} \sim \calD^{m-\m}}  \exp\qty(\frac{\alpha-1}{\alpha} (m-\max_{\bfj \in J}\m)\Delta\qty(\hatL_{S'_{\overline{\bfj}}}(\scriptR(S'_{\bfj}, \omega)),\calL_{\calD}(\scriptR(S'_{\bfj}, \omega))))^{\frac{\alpha}{\alpha -1}}] \\
     &= \ln\qty[\E_{\bfj \sim P_J}\E_{\omega \sim P_{\Omega}} \E_{S'_{\bfj} \sim \calD^{\m}}  \E_{S'_{\overline{\bfj}} \sim \calD^{m-\m}}   \exp\qty(\frac{\alpha-1}{\alpha}\frac{\alpha}{\alpha -1 } (m-\max_{\bfj \in J}\m)\Delta\qty(\hatL_{S'_{\overline{\bfj}}}(\scriptR(S'_{\bfj}, \omega)),\calL_{\calD}(\scriptR(S'_{\bfj}, \omega))))] \\
     &= \ln\qty[\E_{\bfj \sim P_J}\E_{\omega \sim P_{\Omega}} \E_{S'_{\bfj} \sim \calD^{\m}}  \E_{S'_{\overline{\bfj}} \sim \calD^{m-\m}}   \exp\qty((m-\max_{\bfj \in J}\m)\Delta\qty(\hatL_{S'_{\overline{\bfj}}}(\scriptR(S'_{\bfj}, \omega)),\calL_{\calD}(\scriptR(S'_{\bfj}, \omega))))] \\
     &\leq \ln\qty[\E_{\bfj \sim P_J}\E_{\omega \sim P_{\Omega}} \E_{S'_{\bfj} \sim \calD^{\m}}  \E_{S'_{\overline{\bfj}} \sim \calD^{m-\m}}   \exp\qty((m-\m)\Delta\qty(\hatL_{S'_{\overline{\bfj}}}(\scriptR(S'_{\bfj}, \omega)),\calL_{\calD}(\scriptR(S'_{\bfj}, \omega))))]. \\
\end{align*}
With these two derivations, we have a disintegrated PAC-Bayes Sample Compression bound, with a posterior distribution $Q_S = \zeta(S,P)$. We need to reframe this to accommodate the fact that we want a single compression set and a message sampled from an uncountable set of messages. 

To do so, we start by choosing $P = P_{J} \times P_{\Omega}$ and $Q_S = Q_J^S \times Q_{\Omega}^S$. Moreover, we choose $\zeta$ such that $Q_J^S$ is always a Dirac distribution such that the only compression set with a non-zero probability is $\bfj^{\star}$. 
The Rényi Divergence then becomes: 
\begin{align*}
    D_{\alpha}(Q||P) &= \frac{1}{\alpha - 1} \ln \qty[\E_{(\bfj, \omega) \sim P} \qty[\frac{Q(\bfj, \omega)}{P(\bfj, \omega)}]^{\alpha}] \\ 
    &= \frac{1}{\alpha - 1} \ln \qty[\E_{(\bfj, \omega) \sim P} \qty[\frac{Q_J(\bfj)Q_{\Omega}(\omega)}{P_J(\bfj)P_{\Omega}(\omega)}]^{\alpha}] \\ 
    &= \frac{1}{\alpha - 1} \ln \qty[\E_{(\bfj, \omega) \sim P} \qty[\frac{Q_J(\bfj)}{P_J(\bfj)}]^{\alpha} \qty[\frac{Q_{\Omega}(\omega)}{P_{\Omega}(\omega)}]^{\alpha}] \\ 
    &= \frac{1}{\alpha - 1} \ln \qty[\E_{\omega \sim P_{\Omega}} \E_{\bfj \sim P_J}\qty[\frac{Q_J(\bfj)}{P_J(\bfj)}]^{\alpha} \qty[\frac{Q_{\Omega}(\omega)}{P_{\Omega}(\omega)}]^{\alpha}] \\ 
    &= \frac{1}{\alpha - 1} \ln \qty[\E_{\omega \sim P_{\Omega}} \qty[\frac{Q_{\Omega}(\omega)}{P_{\Omega}(\omega)}]^{\alpha}\E_{\bfj \sim P_J}\qty[\frac{Q_J(\bfj)}{P_J(\bfj)}]^{\alpha} ] \\ 
    &= \frac{1}{\alpha - 1} \ln \qty[\E_{\omega \sim P_{\Omega}} \qty[\frac{Q_{\Omega}(\omega)}{P_{\Omega}(\omega)}]^{\alpha}\qty(P_J(\bfj^\star)\qty[\frac{Q_J(\bfj^\star)}{P_J(\bfj^\star)}]^{\alpha} + \sum_{\bfj \neq \bfj^\star}P_J(\bfj)\qty[\frac{Q_J(\bfj)}{P_J(\bfj)}]^{\alpha}) ] \\ 
    &= \frac{1}{\alpha - 1} \ln \qty[\E_{\omega \sim P_{\Omega}} \qty[\frac{Q_{\Omega}(\omega)}{P_{\Omega}(\omega)}]^{\alpha}\qty(P_J(\bfj^\star)\qty[\frac{1}{P_J(\bfj^\star)}]^{\alpha} + \sum_{\bfj \neq \bfj^\star}P_J(\bfj)\qty[\frac{0}{P_J(\bfj)}]^{\alpha}) ] \\ 
    &= \frac{1}{\alpha - 1} \ln \qty[\E_{\omega \sim P_{\Omega}} \qty[\frac{Q_{\Omega}(\omega)}{P_{\Omega}(\omega)}]^{\alpha} P_J(\bfj^\star)\qty[\frac{1}{P_J(\bfj^\star)}]^{\alpha} ] \\ 
    &= \frac{1}{\alpha - 1} \ln \qty[\E_{\omega \sim P_{\Omega}} \qty[\frac{Q_{\Omega}(\omega)}{P_{\Omega}(\omega)}]^{\alpha} \qty[\frac{1}{P_J(\bfj^\star)}]^{\alpha -1} ] \\ 
    &= \frac{1}{\alpha - 1} \ln \qty[\E_{\omega \sim P_{\Omega}} \qty[\frac{Q_{\Omega}(\omega)}{P_{\Omega}(\omega)}]^{\alpha}  ] + \frac{1}{\alpha - 1} \ln \qty[\qty[\frac{1}{P_J(\bfj^\star)}]^{\alpha -1} ]  \\ 
    &= \frac{1}{\alpha - 1} \ln \qty[\E_{\omega \sim P_{\Omega}} \qty[\frac{Q_{\Omega}(\omega)}{P_{\Omega}(\omega)}]^{\alpha}  ] + \ln \qty[\frac{1}{P_J(\bfj^\star)}]  \\ 
    &= D_{\alpha}(Q_{\Omega}||P_{\Omega}) + \ln \qty[\frac{1}{P_J(\bfj^\star)}]\,.
    \qedhere
\end{align*}
\end{proof}
\section{Corollaries of \autoref{thm:desintegrated_pbsc}}
\begin{corollary}
    In the setting of \autoref{thm:desintegrated_pbsc}, with probability at least $1-\delta$ over the draw of $S \sim \calD^m$, $\bfj \sim Q_{J}, \omega \sim Q_{\Omega}$, we have 
    \begin{align*}
\kl\qty(\hatL_{S_{\overline{\bfj}}}\qty(\scriptR(S_{\bfj}, \omega)), \calL_{\calD}\qty(\scriptR(S_{\bfj}, \omega))) &\leq \frac{1}{m-\max_{\bfj \in J}\m}\qty[\frac{2\alpha - 1}{\alpha - 1} \ln \frac{2}{\delta} + \ln \frac{1}{P_J(\bfj)} + D_{\alpha}(Q_{\Omega}||P_{\Omega}) + \ln \E_{\bfj \sim P_J}2\sqrt{m-\m}].
    \end{align*}
\end{corollary}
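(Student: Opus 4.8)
The plan is to specialize \autoref{thm:desintegrated_pbsc} to the comparator function $\Delta(q,p)=\kl(q,p)$ of Equation~\eqref{eq:smallkl}, which is convex on $[0,1]\times[0,1]$, so that the hypotheses of that theorem are met. With this choice its conclusion reads
\begin{equation*}
(m-\max_{\bfj\in J}\m)\,\kl\qty(\hatL_{S_{\overline{\bfj}}}(\scriptR(S_{\bfj},\omega)),\calL_{\calD}(\scriptR(S_{\bfj},\omega)))\le \tfrac{2\alpha-1}{\alpha-1}\ln\tfrac{2}{\delta}+\ln\tfrac{1}{P_J(\bfj)}+D_{\alpha}(Q_{\Omega}||P_{\Omega})+\ln\Acal_{\kl}(m),
\end{equation*}
where I have renamed the starred quantities $\bfj^\star,\omega^\star,Q_\Omega^S$ of the theorem as $\bfj,\omega,Q_\Omega$ (since $\zeta$ outputs a Dirac $Q_J^S$ on $\bfj^\star$, sampling ``$\bfj\sim Q_J$'' is a deterministic choice). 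Dividing by the positive quantity $m-\max_{\bfj\in J}\m$ already yields the shape of the claimed bound, so the only remaining work is to control $\Acal_{\kl}(m)$.

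First I would unfold $\Acal_{\kl}(m)=\E_{(\bfj,\omega)\sim P}\E_{T\sim\calD^{m}}e^{|\overline{\bfj}|\,\kl(\hatL_{T_{\overline{\bfj}}}(\scriptR(T_{\bfj},\omega)),\calL_{\calD}(\scriptR(T_{\bfj},\omega)))}$ and split $T$, using the \emph{i.i.d.}\ assumption, into the independent pieces $T_{\bfj}\sim\calD^{\m}$ and $T_{\overline{\bfj}}\sim\calD^{m-\m}$ (given $\bfj$). Conditioning on $(\bfj,\omega,T_{\bfj})$ freezes the predictor $h=\scriptR(T_{\bfj},\omega)\in\calH$, and $T_{\overline{\bfj}}$ is then a sample of $|\overline{\bfj}|=m-\m$ examples drawn \emph{i.i.d.}\ from $\calD$ independently of $h$. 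Since $\ell$ takes values in $[0,1]$, Maurer's lemma — the inequality $\E_{U\sim\calD^{k}}e^{k\,\kl(\hatL_U(h),\calL_{\calD}(h))}\le 2\sqrt{k}$, valid for any fixed $h$ — applied with $k=m-\m$ gives $\E_{T_{\overline{\bfj}}\sim\calD^{m-\m}}e^{(m-\m)\kl(\hatL_{T_{\overline{\bfj}}}(h),\calL_{\calD}(h))}\le 2\sqrt{m-\m}$. Integrating out $T_{\bfj}$ and then $(\bfj,\omega)\sim P$ leaves $\Acal_{\kl}(m)\le \E_{(\bfj,\omega)\sim P}2\sqrt{m-\m}=\E_{\bfj\sim P_J}2\sqrt{m-\m}$ — the continuous-message counterpart of the bound $\Jcal_{\kl}(\bfj,\omega)\le 2\sqrt{m-\m}$ of \autoref{cor:6ofBazinet2024}.

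Substituting $\ln\Acal_{\kl}(m)\le\ln\E_{\bfj\sim P_J}2\sqrt{m-\m}$ into the displayed inequality and dividing by $m-\max_{\bfj\in J}\m$ gives exactly the statement of the corollary. I do not expect any real difficulty: the argument is a one-line specialization of an already-established theorem together with a textbook invocation of Maurer's lemma. The only steps that deserve a moment of care are the conditioning argument that makes $T_{\overline{\bfj}}$ independent of the reconstructed predictor $\scriptR(T_{\bfj},\omega)$ before Maurer's lemma is applied, the bookkeeping that the exponent $m-\m$ matches the number $|\overline{\bfj}|$ of examples in $T_{\overline{\bfj}}$, and the implicit requirement $\max_{\bfj\in J}\m<m$ that makes the final division — and the bound itself — meaningful.
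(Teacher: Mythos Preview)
Your proposal is correct and matches the paper's approach: the corollary is obtained by instantiating \autoref{thm:desintegrated_pbsc} with $\Delta=\kl$ and then bounding $\Acal_{\kl}(m)\le \E_{\bfj\sim P_J}2\sqrt{m-\m}$ via Maurer's lemma, exactly as done for \autoref{cor:pbsc-new-kl}. The paper does not spell out a separate proof for this corollary, so your write-up is in fact more detailed than what appears there.
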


\begin{corollary}
    In the setting of \autoref{thm:desintegrated_pbsc}, with probability at least $1-\delta$ over the draw of $S \sim \calD^m$, $\bfj \sim Q_{J}, \omega \sim Q_{\Omega}$, we have 
\begin{align*}
\calL_{\calD}\qty(\scriptR(S_{\bfj}, \omega)) \leq \frac{1}{1-e^{-C}}\Bigg[1-\exp\Bigg(- C\hatL_{S_{\overline{\bfj}}}(\scriptR(S_{\bfj}, \omega)) &-\frac{1}{m-\max_{\bfj \in J}\m}\qty[\frac{2\alpha - 1}{\alpha - 1} \ln \frac{2}{\delta} + \ln \frac{1}{P_J(\bfj)} +  D_{\alpha}(Q_{\Omega}||P_{\Omega})]\Bigg)\Bigg].
\end{align*}
\end{corollary}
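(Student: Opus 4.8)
The plan is to specialize \autoref{thm:desintegrated_pbsc} to the Catoni comparator and then invert the resulting inequality for $\calL_{\calD}$, mirroring how \autoref{cor:catoni} is derived from \autoref{theo:sample_compression_cnt}. Fix $C>0$ and take $\Delta_C(q,p) = -\ln\!\big(1-(1-e^{-C})p\big) - Cq$. First I would check that $\Delta_C$ is admissible in \autoref{thm:desintegrated_pbsc}: it is jointly convex on $[0,1]^2$, being affine in $q$ and, in $p$, the composition of the convex decreasing map $-\ln$ with an affine map, so that $\partial^2_p\Delta_C = (1-e^{-C})^2/(1-(1-e^{-C})p)^2>0$. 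Applying \autoref{thm:desintegrated_pbsc} with $\Delta=\Delta_C$ then gives, with the stated probability,
\[
(m-\max_{\bfj\in J}\m)\,\Delta_C\!\big(\hatL_{S_{\overline{\bfj}}}(\scriptR(S_{\bfj},\omega)),\,\calL_{\calD}(\scriptR(S_{\bfj},\omega))\big)\;\le\;\tfrac{2\alpha-1}{\alpha-1}\ln\tfrac2\delta+\ln\tfrac1{P_J(\bfj)}+D_{\alpha}(Q_{\Omega}\|P_{\Omega})+\ln\Acal_{\Delta_C}(m).
\]

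The second — and only slightly delicate — step is to show $\Acal_{\Delta_C}(m)\le 1$, hence $\ln\Acal_{\Delta_C}(m)\le 0$. The elementary fact behind this is that $t\mapsto e^{-Ct}$ is convex, so on $[0,1]$ it lies below the chord joining $(0,1)$ and $(1,e^{-C})$: $e^{-Ct}\le 1-(1-e^{-C})t$ for $t\in[0,1]$. Now fix $(\bfj,\omega)$ and the sub-sample $T_{\bfj}$; then $h\coloneqq\scriptR(T_{\bfj},\omega)$ is determined, and $T_{\overline{\bfj}}$ is an independent sample of $|\overline{\bfj}|=m-\m$ points, so for each of its losses $\E e^{-C\ell(h(\xbf_i),y_i)}\le 1-(1-e^{-C})\calL_{\calD}(h)$. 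Hence
\[
\E_{T_{\overline{\bfj}}\sim\calD^{m-\m}} e^{|\overline{\bfj}|\,\Delta_C(\hatL_{T_{\overline{\bfj}}}(h),\,\calL_{\calD}(h))} = \big(1-(1-e^{-C})\calL_{\calD}(h)\big)^{-|\overline{\bfj}|}\prod_{i\in\overline{\bfj}}\E e^{-C\ell(h(\xbf_i),y_i)}\;\le\;1,
\]
and taking the remaining expectations over $(\bfj,\omega)\sim P$ and $T_{\bfj}\sim\calD^{\m}$ yields $\Acal_{\Delta_C}(m)\le 1$.

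The last step is a purely algebraic inversion. Writing $q=\hatL_{S_{\overline{\bfj}}}(\scriptR(S_{\bfj},\omega))$, $p=\calL_{\calD}(\scriptR(S_{\bfj},\omega))$ and $B=\tfrac{1}{m-\max_{\bfj\in J}\m}\big[\tfrac{2\alpha-1}{\alpha-1}\ln\tfrac2\delta+\ln\tfrac1{P_J(\bfj)}+D_{\alpha}(Q_{\Omega}\|P_{\Omega})\big]$, the two preceding steps combine into $-\ln(1-(1-e^{-C})p)-Cq\le B$; exponentiating gives $1-(1-e^{-C})p\ge e^{-(Cq+B)}$, hence $p\le\tfrac{1-e^{-Cq-B}}{1-e^{-C}}=\tfrac{1}{1-e^{-C}}\big[1-\exp(-Cq-B)\big]$, which is exactly the claimed bound.

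I expect no real obstacle here: the one place that needs care is the moment-generating computation, where one must use the exponent $|\overline{\bfj}|=m-\m$ coming from the definition of $\Acal_{\Delta_C}(m)$ (rather than the coefficient $m-\max_{\bfj\in J}\m$ appearing on the left-hand side of \autoref{thm:desintegrated_pbsc}) and the chord inequality $e^{-Ct}\le 1-(1-e^{-C})t$ that collapses the Catoni moment generating function to $1$; the remaining inversion is identical to the one already carried out for \autoref{cor:catoni}.
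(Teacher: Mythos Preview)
Your proposal is correct and follows exactly the route the paper intends: the corollary is stated without proof, and the paper indicates (for the analogous corollaries of \autoref{theo:sample_compression_cnt}) that one simply specializes the general theorem to a particular comparator $\Delta$ and bounds $\Acal_{\Delta}(m)$. Your verification that $\Acal_{\Delta_C}(m)\le 1$ via the chord inequality $e^{-Ct}\le 1-(1-e^{-C})t$ and the subsequent algebraic inversion are the standard Catoni computation, matching what the paper leaves implicit.
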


\begin{corollary}
    In the setting of \autoref{thm:desintegrated_pbsc}, for any $\lambda > 0$, with probability at least $1-\delta$ over the draw of $S \sim \calD^m$, $\bfj \sim Q_{J}, \omega \sim Q_{\Omega}$, we have 
\begin{align*}
\calL_{\calD}\qty(\scriptR(S_{\bfj}, \omega))  \leq & \hatL_{S_{\overline{\bfj}}}\qty(\scriptR(S_{\bfj}, \omega)) \\*
&+ \frac{1}{\lambda(m-\max_{\bfj \in J}\m)}\qty[\frac{2\alpha - 1}{\alpha - 1} \ln \frac{2}{\delta} + \ln \frac{1}{P_J(\bfj)} + D_{\alpha}(Q_{\Omega}||P_{\Omega}) + \ln \E_{\bfj \sim P_J} \exp\qty(\frac{(n-\m)\lambda^2\varsigma^2}{2})].
\end{align*}
\end{corollary}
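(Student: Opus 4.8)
The plan is to specialize \autoref{thm:desintegrated_pbsc} to the linear comparator $\Delta(q,p) = \lambda(p-q)$ — which is convex (indeed affine) — and then bound the exponential-moment term $\Acal_{\Delta}(m)$ using the sub-Gaussianity of $\ell$. Applying \autoref{thm:desintegrated_pbsc} with this $\Delta$, its left-hand side becomes $(m-\max_{\bfj\in J}\m)\,\lambda\qty(\calL_{\calD}(\scriptR(S_{\bfj},\omega)) - \hatL_{S_{\overline{\bfj}}}(\scriptR(S_{\bfj},\omega)))$; using $\lambda>0$ and $m-\max_{\bfj\in J}\m>0$, dividing through by $\lambda(m-\max_{\bfj\in J}\m)$ and transposing the empirical term to the right-hand side yields exactly the claimed inequality, except that it carries $\ln\Acal_{\Delta}(m)$ where the statement has $\ln\E_{\bfj\sim P_J}\exp\!\big(\tfrac{(m-\m)\lambda^2\varsigma^2}{2}\big)$. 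Everything therefore reduces to showing $\Acal_{\Delta}(m)\le \E_{\bfj\sim P_J}\exp\!\big(\tfrac{(m-\m)\lambda^2\varsigma^2}{2}\big)$ for this $\Delta$ (the $n$ in the displayed statement being a typographical slip for $m$).

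To do so I would expand
\[
\Acal_{\Delta}(m) = \E_{(\bfj,\omega)\sim P}\ \E_{T_{\bfj}\sim\calD^{\m}}\ \E_{T_{\overline{\bfj}}\sim\calD^{m-\m}}\ \exp\!\Big((m-\m)\,\lambda\big(\calL_{\calD}(\scriptR(T_{\bfj},\omega)) - \hatL_{T_{\overline{\bfj}}}(\scriptR(T_{\bfj},\omega))\big)\Big),
\]
and observe that for fixed $\bfj$, $\omega$, and $T_{\bfj}$ the reconstructed predictor $h=\scriptR(T_{\bfj},\omega)$ is a fixed element of $\calH$ while the $m-\m$ examples of $T_{\overline{\bfj}}$ are i.i.d.\ from $\calD$ and independent of $h$. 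Writing $(m-\m)\hatL_{T_{\overline{\bfj}}}(h)=\sum_{(\bx,y)\in T_{\overline{\bfj}}}\ell(h(\bx),y)$, the innermost expectation factorizes into $m-\m$ identical factors $\E_{(\bx,y)\sim\calD}\exp\!\big(\lambda(\calL_{\calD}(h)-\ell(h(\bx),y))\big)$; since $\calL_{\calD}(h)=\E_{(\bx,y)\sim\calD}\ell(h(\bx),y)$ and $\ell$ is $\varsigma^2$-sub-Gaussian, each factor is at most $\exp(\lambda^2\varsigma^2/2)$, so the innermost expectation is at most $\exp\!\big((m-\m)\lambda^2\varsigma^2/2\big)$. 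This estimate no longer depends on $\omega$ or on $T_{\bfj}$, so collapsing those now-trivial expectations gives $\Acal_{\Delta}(m)\le \E_{(\bfj,\omega)\sim P}\exp\!\big((m-\m)\lambda^2\varsigma^2/2\big)=\E_{\bfj\sim P_J}\exp\!\big((m-\m)\lambda^2\varsigma^2/2\big)$, as required; substituting back into the specialized theorem completes the argument.

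The one point needing care — and the main obstacle — is that \autoref{thm:desintegrated_pbsc}, and through it \autoref{thm:paul} of \citet{viallard2024general}, is stated for losses valued in $[0,1]$ and comparator functions on $[0,1]^2$, whereas here $\ell$ is real-valued and possibly unbounded. The remedy is to observe that the short proof of \autoref{thm:desintegrated_pbsc} transfers verbatim: one invokes \autoref{thm:paul} with the strictly positive function $\phi(h,S)=\exp\!\big(\tfrac{\alpha-1}{\alpha}(m-\max_{\bfj\in J}\m)\Delta(\hatL_{S_{\overline{\bfj}}}(\scriptR(S_{\bfj},\omega)),\calL_{\calD}(\scriptR(S_{\bfj},\omega)))\big)$ — for which no boundedness is needed — and performs the same Dirac-on-$\bfj^\star$ reduction of the Rényi divergence, $D_{\alpha}(Q_{\Omega}^S\times Q_J^S\,\|\,P_{\Omega}\times P_J)=D_{\alpha}(Q_{\Omega}^S\|P_{\Omega})+\ln\tfrac{1}{P_J(\bfj^\star)}$; boundedness of $\ell$ (and $\Delta$) would only have entered when estimating $\Acal_{\Delta}(m)$, which we have now controlled via the sub-Gaussian assumption instead. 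The remaining manipulations (the identity $\tfrac{\alpha}{\alpha-1}\ln\phi(h,S)=(m-\max_{\bfj\in J}\m)\Delta(\cdots)$ and the sign bookkeeping of the final division by $\lambda$) are routine.
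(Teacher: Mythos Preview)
Your proposal is correct and follows precisely the route the paper intends: the corollaries of \autoref{thm:desintegrated_pbsc} are stated without proof, but the paper makes clear (for the analogous corollaries of \autoref{theo:sample_compression_cnt}) that they are obtained ``by choosing a comparator function $\Delta$ and bounding $\Acal_{\Delta}(m)$'', which is exactly what you do with $\Delta(q,p)=\lambda(p-q)$ and the sub-Gaussian MGF estimate. Your additional remark that the $[0,1]$ hypothesis on $\ell$ in \autoref{thm:desintegrated_pbsc} is not actually used in its proof---and can be replaced here by the sub-Gaussian assumption when controlling $\Acal_{\Delta}(m)$---is a point the paper leaves implicit but which you are right to spell out.
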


\pagebreak

\section{Algorithm}\label{supp:architectures}

The following pseudocode depicts our Sample Compression Hypernetworks approach.

\begin{algorithm}[h]
\label{alg:deeprm}
\caption{Training of Sample Compression Hypernetworks (with messages) architecture.}
\begin{algorithmic}
\State\textbf{Inputs} : $\mathbf{S} = \{S_i\}_{i=1}^{n}$, a meta-dataset
 \State\hspace{11.95mm} $\alpha\in \Nbb$, support set size $(1\leq \alpha < \min_i[m_i])$
 \State\hspace{11.95mm} $c, b \in \Nbb$, the compression set and message size
\State \hspace{11.95mm} BackProp, a function doing a gradient descent step
\State $\psi, \phia, \phib \leftarrow$ Initialize parameters 
\While{Stopping criteria is not met}:
    \For{$i = 1,\dots,n$}:
            \State $\hat S_i \leftarrow$ Sample $\alpha$ datapoints from $S_i$ 
            \State $\bfj \leftarrow \scriptC_\phia(\hat S_i )$ \textup{ such that } $|\bfj| = c$
            \State $\omegabf \leftarrow \scriptM_\phib(\hat S_i )$ \textup{ such that } $|\omegabf| = b$
        \State $\gamma \leftarrow \scriptR_\psi(\hat S_{i,\bfj}, \omegabf )$
        \State loss $\leftarrow  \frac{1}{m_i-\alpha}\sum_{(\xbf, y) \in S_i\setminus \hat S_i} l\big(h_\gamma(\xbf), y\big)$
            \State $\psi, \phia, \phib \leftarrow$ BackProp(loss)
    \EndFor
\EndWhile
    \State \Return $\scriptR_{\psi}, \scriptC_{\phia}, \scriptM_{\phib}$ 
\end{algorithmic}
\end{algorithm}




\section{Numerical Experiment and Implementation Details}\label{supp:num_exp}

The code for all experiments is available at \url{https://github.com/GRAAL-Research/DeepRM}.

\PGparagraph{Synthetic experiments (moons dataset).}
We fixed the MLP architecture in the sample compressor, the message compressor, the reconstructor and the DeepSet modules to a single-hidden layer MLP of size 100 while the predictor also is a single-hidden layer MLP of size 5.

We added skip connections and batch norm in both the modules of the meta-learner and the predictor to accelerate the training time. The experiments were conducted using an NVIDIA GeForce RTX 2080 Ti graphic card.

We used the Adam optimizer \cite{DBLP:journals/corr/KingmaB14} and trained for at most 200 epochs, stopping when the validation accuracy did not diminish for 20 epochs. We initialized the weights of each module using the Kaiming uniform technique \cite{DBLP:conf/iccv/HeZRS15}.

\PGparagraph{Pixels swap and binary MNIST and CIFAR100 experiments.}
Let MLP$_1$ be the architecture of the feedforward network in the sample compressor, the message compressor, the encoder, and the reconstructor; MLP$_2$ be the architecture of the feedforward network in the DeepSet module; MLP$_3$ the architecture of the downstream predictor. We used the following components and values in our hyperparameter grid:

\begin{itemize}
    \item Learning rate: 1e-3, 1e-4;
    \item MLP$_1$: [200, 200], [500, 500];
    \item MLP$_2$: [100], [200];
    \item MLP$_3$: [100], [200, 200];
    \item $c$: 0, 1, 2, 4, 6, 8;
    \item $|\mubf|$, $b$: 0, 1, 2, 4, 8, 16, 32, 64, 128.
\end{itemize}

We added skip connections and batch norm in both the modules of the meta-learner and the predictor to accelerate the training time. The experiments were conducted using an NVIDIA GeForce RTX 2080 Ti graphic card.

We used the Adam optimizer \cite{DBLP:journals/corr/KingmaB14} and trained for at most 200 epochs, stopping when the validation accuracy did not diminish for 20 epochs. We initialized the weights of each module using the Kaiming uniform technique \cite{DBLP:conf/iccv/HeZRS15}.

\setlength{\tabcolsep}{3.0pt}
\renewcommand{\arraystretch}{1.08}
\begin{table}[t]
    \caption{Comparison of different meta-learning methods and generalization bounds on the MNIST and CIFAR100) binary task. The 95\% confidence interval is reported for generalization bound and test error, computed over the test tasks.}
    \centering
    \small
    \begin{tabular}{|l|cc|cc|cc|}
    \hline
    \multirow{2}{*}{Algorithm} & \multicolumn{2}{c|}{MNIST} & \multicolumn{2}{c|}{CIFAR100}\\
    & Bound ($\downarrow$) & Test error ($\downarrow$) & Bound ($\downarrow$) & Test error ($\downarrow$)\\
    \hline
    PBH & 0.597\hspace{-0.25mm}$^*$\hspace{-1.25mm} $\pm$ 0.107 (Thm. \ref{thm:general_pac_bayes}) & 0.150 $\pm$ 0.114 & 0.974\hspace{-0.25mm}$^*$\hspace{-1.25mm} $\pm$ 0.022 (Thm. \ref{thm:general_pac_bayes}) & 0.295 $\pm$ 0.103 \\
    & 0.770 $\pm$ 0.076 (Thm. \ref{thm:desintegrated_pbsc}) & " & 0.999 $\pm$ 0.001 (Thm. \ref{thm:desintegrated_pbsc}) & " \\
    SCH$_-$ & 0.352 $\pm$ 0.187 (Thm. \ref{theo:sample_compression_test}) & 0.278 $\pm$ 0.076 & 0.600 $\pm$ 0.143 (Thm. \ref{theo:sample_compression_dsc}) & 0.374 $\pm$ 0.118 \\
    & 0.369 $\pm$ 0.187 (Thm. \ref{theo:sample_compression_dsc}) & " & 0.629 $\pm$ 0.142 (Thm. \ref{thm:desintegrated_pbsc}) & " \\
    SCH$_+$ & 0.280 $\pm$ 0.148 (Thm. \ref{theo:sample_compression_test}) & 0.155 $\pm$ 0.109 & 0.745 $\pm$ 0.101 (Thm. \ref{theo:sample_compression_test}) & 0.305 $\pm$ 0.142 \\
    & 0.295 $\pm$ 0.148 (Thm. \ref{theo:sample_compression_dsc}) & " & 0.758 $\pm$ 0.098 (Thm. \ref{thm:desintegrated_pbsc}) & " \\
    PB SCH & 0.597\hspace{-0.25mm}$^*$\hspace{-1.25mm} $\pm$ 0.107 (Thm. \ref{theo:sample_compression_cnt}) & 0.150 $\pm$ 0.114 & 0.974\hspace{-0.25mm}$^*$\hspace{-1.25mm} $\pm$ 0.022 (Thm. \ref{theo:sample_compression_cnt}) & 0.295 $\pm$ 0.103 \\
    & 0.770 $\pm$ 0.076 (Thm. \ref{thm:desintegrated_pbsc}) & " & 0.999 $\pm$ 0.001 (Thm. \ref{thm:desintegrated_pbsc}) & " \\
    \hline
    \end{tabular}\\
    \hspace{-75mm}$^*$Bound on average over the decoder output.
    \label{tab:empirical_results_supp}
\end{table}

\newpage

\PGparagraph{More results on binary MNIST and CIFAR100 experiments.} 
We present in Figure~\autoref{fig:decomposition_pixel_swap} and \autoref{tab:decomposition_pixel_swap} (Figure~\autoref{fig:decomposition_cifar_100} and \autoref{tab:decomposition_cifar_100}) the contribution of each of the terms impacting  the bound value for a few algorithms on the MNIST 200 pixels swap (binary CIFAR100) task. In the figures, the cumulative contributions are displayed, while in the tables, the marginal contributions are displayed. The bounds are decomposed as follows:

\begin{itemize}
    \item The observed meta train “error”;
    \item The “confidence penalty”, which corresponds to the term $-\ln \theta$ in \autoref{thm:general_pac_bayes} and the corresponding term in other bounds;
    \item The “complexity term”, which corresponds to the KL factor in the PAC-Bayes bounds. The latter is further decomposed into the compression set probability and the message probability in our sample compression-based bounds.
\end{itemize}
When considering the decomposition on the 200 pixels swap experiment, we see that our approaches, despite having a larger error term, relies on a small message probability and an empty compression set to yield competitive bounds. In contrast, for \citet{DBLP:conf/icml/PentinaL14}, the complexity term profoundly impacts the bound, making it non-competitive. As for the decomposition on the CIFAR100 experiments, it is interesting to see that the bound from \citet{DBLP:conf/icml/ZakeriniaBL24} and the one from PB SCH have a similar decomposition, whereas SCH$_+$, despite being penalized by the message probability, relies on a better treatment of its error and confidence term to obtain best bound of the four considered algorithms. This is empirical evidence of the tightness of our bounds compared to those of the runner-ups, all factors (error, confidence $\theta$, ...) being kept equal, thanks to the non-linear comparator function $\Delta$ (see \autoref{thm:general_pac_bayes}).

\bigskip

\PGparagraph{Label shuffle experiment.}
As suggested by one ICML reviewer, we performed an additional experiment, based on the one in \citet{DBLP:conf/icml/AmitM18} (as well as \citet{DBLP:conf/icml/ZakeriniaBL24}). 
The \textit{label shuffle} experiment goes as follows: we start with the MNIST multiclass dataset, and we create each new task by performing random permutations of the label space. There are 30 training tasks of 1000 examples, and we evaluated the methods on 10 tasks of 100 samples. \autoref{tab:empirical_results_label_shuffle} presents the obtained results.
We witness that, despite an extensive grid search over hyperparameters, the  PB SCH approach fails to learn to generalize to the test tasks. We empirically observed severe overfitting on meta-train tasks. We hypothesize that this is due to the DeepSet architecture, which might not be powerful enough to encode the subtleties of the tasks at hand. 

\clearpage

\begin{figure}[t]
    \subfigure[MNIST 200 pixels swap task]{
    \includegraphics[width=0.45\linewidth]{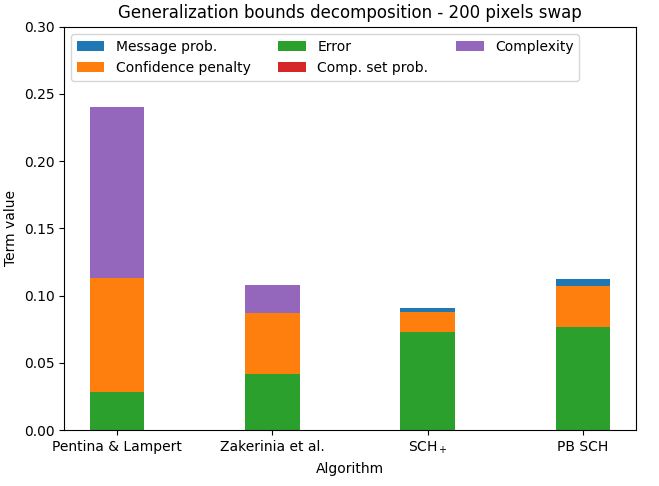}
    \label{fig:decomposition_pixel_swap}    
    }~~~
\subfigure[Binary CIFAR100 task]{\includegraphics[width=0.45\linewidth]{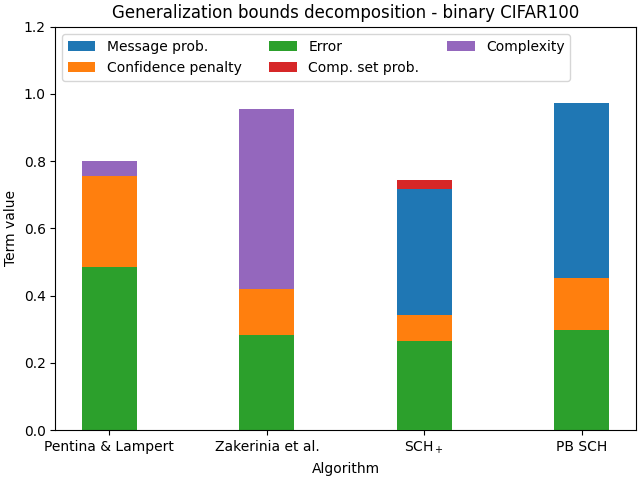}
    \label{fig:decomposition_cifar_100}}
\caption{Bound values cumulative decomposition for a few algorithms.}
\end{figure}

\setlength{\tabcolsep}{3.0pt}
\renewcommand{\arraystretch}{1.08}
\begin{table*}[t]
    \caption{Bound value decomposition for a few algorithms on the 200 pixels swap task.}
    \small
    \centering
    \begin{tabular}{|l||c|c|cc||c|}
    \hline
    Algorithm & Error & Confidence penalty & \multicolumn{2}{c||}{Complexity} & Total\\
    \hline
    \cite{DBLP:conf/icml/PentinaL14} & 0.028 $\pm$ 0.002 & 0.085 $\pm$ 0.000 & \multicolumn{2}{c||}{0.127 $\pm$ 0.028} & 0.240 $\pm$ 0.030 \\
    \cite{DBLP:conf/icml/ZakeriniaBL24} & 0.042 $\pm$ 0.198 & 0.045 $\pm$ 0.000 & \multicolumn{2}{c||}{0.021 $\pm$ 0.160} & 0.108 $\pm$ 0.037 \\
    \hline
    & Error & Confidence penalty & Message prob. & Comp. set prob. & Total\\
    \hline
    SCH$_+$ & 0.073 $\pm$ 0.017 & 0.015 $\pm$ 0.002 & 0.003 $\pm$ 0.001 & 0.000 $\pm$ 0.000 & 0.091 $\pm$ 0.022 \\
    PB SCH & 0.077 $\pm$ 0.016 & 0.030 $\pm$ 0.001 & 0.005 $\pm$ 0.005 & 0.000 $\pm$ 0.000 & 0.112 $\pm$ 0.021 \\
    \hline
    \end{tabular}
    \label{tab:decomposition_pixel_swap}
\end{table*}

\setlength{\tabcolsep}{3.0pt}
\renewcommand{\arraystretch}{1.08}
\begin{table*}[t]
    \caption{Bound value decomposition for a few algorithms on the binary CIFAR100 task.}
    \small
    \centering
    \begin{tabular}{|l||c|c|cc||c|}
    \hline
    Algorithm & Error & Confidence penalty & \multicolumn{2}{c||}{Complexity} & Total\\
    \hline
    \cite{DBLP:conf/icml/PentinaL14} & 0.485 $\pm$ 0.068 & 0.270 $\pm$ 0.000 & \multicolumn{2}{c||}{0.046 $\pm$ 0.069} & 0.801 $\pm$ 0.001 \\
    \cite{DBLP:conf/icml/ZakeriniaBL24} & 0.283 $\pm$ 0.120 & 0.133 $\pm$ 0.000 & \multicolumn{2}{c||}{0.537 $\pm$ 0.195} & 0.953 $\pm$ 0.315 \\
    \hline
    & Error & Confidence penalty & Message prob. & Comp. set prob. & Total\\
    \hline
    SCH$_+$ & 0.264 $\pm$ 0.133 & 0.078 $\pm$ 0.010 & 0.374 $\pm$ 0.017 & 0.027 $\pm$ 0.003 & 0.745 $\pm$ 0.101 \\
    PB SCH & 0.298 $\pm$ 0.098 & 0.155 $\pm$ 0.008 & 0.521 $\pm$ 0.045 & 0.000 $\pm$ 0.000 & 0.974 $\pm$ 0.022 \\
    \hline
    \end{tabular}
    \label{tab:decomposition_cifar_100}
\end{table*}

\setlength{\tabcolsep}{3.0pt}
\renewcommand{\arraystretch}{1.08}
\begin{table}[b]
    \caption{Comparison of different meta-learning methods on the MNIST label shuffle binary task. The 95\% confidence interval is reported for generalization bound and test error, computed over 10 test tasks. The best (smallest) result in each column is \textbf{bolded}.}
    \small
    \centering
    \begin{tabular}{|l|cc|}
    \hline
    \multirow{2}{*}{Algorithm} & \multicolumn{2}{c|}{MNIST label shuffle} \\
    & Bound ($\downarrow$) & Test error ($\downarrow$)\\
    \hline
    \cite{DBLP:conf/icml/PentinaL14} & 2.376 $\pm$ 0.001 & 0.900 $\pm$ 0.589\\
    \cite{DBLP:conf/icml/AmitM18} & 0.542 $\pm$ 0.034 & \textbf{0.023} $\pm$ 0.062\\
    \cite{DBLP:conf/icml/Guan022} - kl & 3.199 $\pm$ 0.372 & \textbf{0.023} $\pm$ 0.006\\
    \cite{DBLP:conf/icml/Guan022} - Catoni & \textbf{0.536} $\pm$ 0.063 & 0.030 $\pm$ 0.001\\
    \cite{DBLP:conf/icml/000122} & 12.95 $\pm$ 0.001 & 0.902 $\pm$ 0.660 \\
    \cite{DBLP:conf/icml/ZakeriniaBL24} & 3.779 $\pm$ 0.079 & 0.165 $\pm$ 0.023 \\
    PB SCH & 0.997\hspace{-0.25mm}$^*$\hspace{-1.25mm} $\pm$ 0.005 & 0.792 $\pm$ 0.070\\
    \hline
    \end{tabular}\\
    \hspace{-14mm}$^*$Bound on average over the decoder output.
    \label{tab:empirical_results_label_shuffle}
\end{table}

\end{document}
